\def\rh{{\textnormal{h}}}
\def\rk{{\textnormal{k}}}
\def\rr{{\textnormal{r}}}
\def\rs{{\textnormal{s}}}
\def\rx{{\textnormal{x}}}
\def\ry{{\textnormal{y}}}
\def\rz{{\textnormal{z}}}
\def\rvr{{\mathbf{r}}}
\def\rvx{{\mathbf{x}}}
\def\rvz{{\mathbf{z}}}
\def\vx{{\bm{x}}}
\def\vy{{\bm{y}}}
\def\sL{{\mathbb{L}}}
\def\sN{{\mathbb{N}}}
\def\sR{{\mathbb{R}}}
\def\sX{{\mathbb{X}}}
\def\sY{{\mathbb{Y}}}
\newcommand{\laplace}{\mathrm{Laplace}} %
\newcommand{\bernoulli}{\mathrm{Bern}} %
\newcommand{\normal}{\mathrm{Normal}}
\newcommand{\poisson}{\mathrm{Poisson}}
\newcommand{\invgauss}{\mathrm{IG}}
\newcommand{\negbin}{\mathrm{NB}}
\newcommand{\E}{\mathbb{E}}
\newcommand{\KL}{\mathrm{KL}}
\newcommand{\kl}{\mathrm{kl}}
\newcommand{\datadistro}{D}
\newcommand{\prior}{Q_0}
\newcommand{\priormarg}{Q_{\mathrm{marg}}}
\newcommand{\posterior}{ {Q_n} }
\newcommand{\posteriori}{ {Q_i} }
\newcommand{\dataspace}{\mathcal Z}
\newcommand{\hypospace}{\mathcal H}
\newcommand{\distrospace}{\mathcal M}
\newcommand{\distr}{P} 
\newcommand{\distra}[1]{\distr_{#1}}
\newcommand{\funcspace}{\mathcal F}
\newcommand{\bounddistrospace}{\mathcal P}
\newcommand{\convfuncs}{\mathcal C}
\newcommand{\bounddistrospacebern}{\bounddistrospace_{\textrm{Bern}}}
\newcommand{\bounddistrospacenorm}{\bounddistrospace_{\textrm{Norm},\sigma^2 } }
\newcommand{\bounddistrospacenormprime}{\bounddistrospace_{\textrm{Norm},{\sigma'}^2 } }
\newcommand{\bounddistrospacepoi}{\bounddistrospace_{\textrm{Poi}}}
\newcommand{\bounddistrospacegamma}{\bounddistrospace_{\Gamma}}
\newcommand{\bounddistrospacelaplace}{\bounddistrospace_{\textrm{Lap}}}
\newcommand{\bounddistrospaceinvgauss}{\bounddistrospace_{\textrm{IG}}}
\newcommand{\bounddistrospacenegbin}{\bounddistrospace_{\textrm{NB}}}
\newcommand{\trainset}{\rvz}
\newcommand{\trainlossarg}[2]{ R_{\,#1}(#2) }
\newcommand{\btrainlossarg}[2]{ \bar{R}_{#1}(#2) }
\newcommand{\htrainlossarg}[2]{ \widehat{R}_{#1}(#2) }
\newcommand{\trainlossh}{ \trainlossarg{\trainset}{h} }
\newcommand{\trainlossrh}{ \trainlossarg{\trainset}{\rh} }
\newcommand{\trainlossQ}{ \btrainlossarg{\trainset}{\posterior} }
\newcommand{\trainlossQavg}{ \htrainlossarg{\trainset}{\posterior} }
\newcommand{\trainlossQavgi}{ \htrainlossarg{\rz_i\!}{\posteriori} }
\newcommand{\poplossarg}[1]{ {R}_{\datadistro}(#1) }
\newcommand{\bpoplossarg}[1]{ \bar{R}_{\datadistro}(#1) }
\newcommand{\hpoplossarg}[1]{ \widehat{R}_{\datadistro}(#1) }
\newcommand{\poplossh}{ \poplossarg{h} }
\newcommand{\poplossrh}{ \poplossarg{\rh} }
\newcommand{\poplossQ}{ \bpoplossarg{\posterior} }
\newcommand{\poplossQavg}{ \hpoplossarg{\posterior} }
\newcommand{\DeltatPoi}{\Delta^{p}_t}
\newcommand{\Deltacram}{\Delta^\Psi_{\bounddistrospace}}
\newcommand{\Deltacramarg}[1]{\Delta^\Psi_{#1}}
\newcommand{\IDn}{\Upsilon_\Delta(n)}
\newcommand{\ID}{\Upsilon_\Delta}
\newcommand{\gIDtwoarg}[2]{\Upsilon^{#2}_{#1}}
\newcommand{\gIDarg}[1]{\gIDtwoarg{#1}{\bounddistrospace}}
\newcommand{\gID}{\gIDarg{\Delta}}
\newcommand{\gIDn}{\gID(n)}
\newcommand{\gIDcram}{\Upsilon^{\bounddistrospace}_{\Deltacram}}
\newcommand{\gIDcramshortarg}[1]{\bar \Upsilon(#1)}
\newcommand{\gIDcramshort}{\gIDcramshortarg{\bounddistrospace}}
\newcommand{\KLterm}{\KL( \posterior \Vert \prior )}
\newcommand{\KLtermavg}{\KL( \posterior \datadistro^n \Vert \prior \datadistro^n )}
\newcommand{\KLtermavgi}{\KL( \posteriori \datadistro \Vert \prior \datadistro )}
\newcommand{\mutualinfo}{\mathrm{I}}
\newcommand{\setE}{\mathcal E}
\newcommand{\dv}{\mathrm{d}}
\newcommand{\ffunc}{\Phi}
\newcommand{\flint}{f^{\textnormal{lin}}_t}
\newcommand{\flintg}{f^{\textnormal{lin}}_{t,g}}
\newcommand{\tspace}{\mathcal T}
\newcommand{\ufunc}{\Xi}
\newcommand{\Psifunca}[1]{\Psi_{#1}}
\newcommand{\Bavg}{\widehat B}
\newcommand{\abs}[1]{\left\lvert #1 \right \rvert}
\newcommand{\mean}[1]{\bar{#1}}
\newcommand{\ie}{\emph{i.e.}}
\newcommand{\eg}{\emph{e.g.}}
\newcommand{\codelink}{\href{https://github.com/fredrikhellstrom/comparing-comparators/}{[\textcolor{orange}{GitHub}]}}
\newmdenv[
  topline=false,
  bottomline=false,
  rightline=false,
  skipabove=\topsep,
  skipbelow=\topsep
]{leftrule}
\renewcommand\@oddhead{\hspace{0.49\textwidth}\thepage}
\renewcommand\@evenhead{\hspace{0.49\textwidth}\thepage}
\newtheorem{theorem}{Theorem}
\newtheorem{lemma}[theorem]{Lemma}
\newtheorem{definition}[theorem]{Definition}
\crefname{theorem}{Theorem}{Theorems}
\crefname{lemma}{Lemma}{Lemmas}
\crefname{remark}{Remark}{Remarks}
\crefname{corollary}{Corollary}{Corollaries}
\crefname{definition}{Definition}{Definitions}
\crefname{proposition}{Proposition}{Propositions}
\crefname{equation}{}{}
\begin{document}

\onecolumn
\preptitle{Comparing Comparators in Generalization Bounds}

\prepauthor{ Fredrik Hellstr\"om \And Benjamin Guedj } %

\prepaddress{ University College London \\ \textrm{f.hellstrom@ucl.ac.uk} \And Inria and University College London \\ \textrm{b.guedj@ucl.ac.uk} } 

\begin{abstract}
We derive generic information-theoretic and PAC-Bayesian generalization bounds involving an arbitrary convex \emph{comparator} function, which measures the discrepancy between the training and population loss.
The bounds hold under the assumption that the cumulant-generating function (CGF) of the comparator is upper-bounded by the corresponding CGF within a family of bounding distributions.
We show that the tightest possible bound is obtained with the comparator being the convex conjugate of the CGF of the bounding distribution, also known as the Cram\'er function.
This conclusion applies more broadly to generalization bounds with a similar structure.
This confirms the near-optimality of known bounds for bounded and sub-Gaussian losses and leads to novel bounds under other bounding distributions.
\end{abstract}

\section{Introduction}

A key question in statistical learning theory is that of \emph{generalization}: how can we certify that a hypothesis with good performance on training data has similarly good performance on new, unseen data?
More explicitly, when does a low training loss imply a low population loss?
A standard approach is to express the population loss as the sum of the training loss and the \emph{generalization gap}, \ie, the difference between population and training loss, and derive a bound on the generalization gap.
With this decomposition, the discrepancy between training and population loss is measured through their difference.
While simple and intuitive, this is often far from being the most effective approach---one may instead measure the discrepancy between the training and population loss through an alternative \emph{comparator} function, of which the difference is a single specific case.
In this paper, we examine the choice of this comparator in detail, and propose a systematic approach to selecting the optimal one.

To concretize this discussion,
we consider the Probably Approximately Correct (PAC)-Bayes framework, originating in the seminal works of~\citet{shawetaylor-97a,mcallester-98a}.
This framework yields bounds on the population loss, averaged over a stochastic learning algorithm, that hold with high probability over the draw of the training data.
A particularly appealing feature of PAC-Bayesian generalization bounds is that they depend on the specific learning algorithm, distribution, and data set under consideration.
This is closely related to \emph{information-theoretic} generalization bounds, where the main focus has been on bounds in expectation, in which the loss is averaged both with respect to the learning algorithm and training data \citep{zhang-06a,russo-16a,xu-17a}.
We refer to \citet{guedj-19a,alquier-21a} for recent surveys on PAC-Bayes, and to the monograph by \citet{hellstrom-23a} for a broader discussion on generalization and links with information theory.
Although some of our results apply more broadly, we focus on PAC-Bayesian and information-theoretic bounds for clarity.
While there exists a wide array of PAC-Bayesian bounds, the majority can be derived through a generic result that takes a convex function as parameter.
To make this precise, we first need to introduce some notation.

\paragraph{\,}

\textcolor{white}{.}
\\[-1.1cm]

Consider a distribution $\datadistro$ on the instance space $\dataspace$, and let the training set~$\trainset=(\rz_1,\dots,\rz_n)$ be drawn from the product distribution~$\datadistro^n$.
Let $\distrospace(\hypospace)$ denote the set of probability measures on the hypothesis space $\hypospace$.
The stochastic learning algorithm is represented through a distribution $\posterior\in\distrospace(\hypospace)$, called a \emph{posterior}.
Note that $\posterior$ is allowed to depend on~$\trainset$.\footnote{Formally, $\posterior$ is a Markov kernel with source $\dataspace^n$ and target $\hypospace$ (with associated $\sigma$-algebras).}
PAC-Bayesian bounds depend on a dissimilarity measure between $\posterior$ and a reference distribution $\prior\in\distrospace(\hypospace)$, called a \emph{prior}.
Typically, $\prior$ is independent from~$\trainset$, although this is not always the case.
While the terminology is inspired by the connection to Bayesian statistics, $\prior$ and $\posterior$ do not need to be related via Bayesian inference~\citep[see, \emph{e.g.},][for a discussion]{guedj-19a}.
However, we will require throughout that $\posterior$ is absolutely continuous with respect to $\prior$, denoted by $\posterior\ll\prior$.
The performance of a hypothesis is measured through a loss function $\ell: \hypospace\times\dataspace\rightarrow \sL \subseteq \sR^+$.
Without loss of generality, we will assume that~$\sL=[0,1]$ for bounded loss functions (arbitrary bounded loss functions can be recovered through affine transformations).
For a given hypothesis $h\in\hypospace$, the training loss $\trainlossh$ and population loss $\poplossh$ are given by
\begin{align}
    \trainlossh &= \frac1n \sum_{i=1}^n \ell(h,\rz_i),\label{eq:trainlossh-def}\\
    \poplossh &=  \E_{\rz\sim \datadistro} [ \ell(h,\rz) ] .\label{eq:poplossh-def}
\end{align}
The PAC-Bayesian training loss $\trainlossQ$ and population loss $\poplossQ$ are obtained as
\begin{align}
    \trainlossQ &= \E_{\rh \sim \posterior}[ \trainlossarg{\rvz}{\rh} ],\label{eq:trainlossQ-def}\\ 
    \poplossQ &=  \E_{\rh\sim \posterior} [ \poplossarg{\rh} ] .\label{eq:poplossQ-def}
\end{align}
We refer to convex functions $\Delta: \sL^2 \rightarrow \sR^+$ as \emph{comparator functions}.
Intuitively, 
a comparator function computes a discrepancy between the training and population loss.
With this notation in place, we are ready to state the generic PAC-Bayesian bound for bounded losses~\citep{germain-09a,begin-16a}.
\begin{restatable}{theorem}{boundedgenericpacbayes}\textnormal{(\citealp[Thm.~1]{begin-16a}).}\label{thm:bounded-generic-pac-bayes}
Consider a fixed prior~$\prior\in\distrospace(\hypospace)$, a convex comparator function $\Delta: \sL^2\rightarrow \sR^+$, 
and an uncertainty~$\delta\in(0,1)$.
Assume that~$\sL=[0,1]$.
Then, with probability~$1-\delta$ simultaneously for all~$\posterior$ such that $\posterior\ll\prior$, 
\begin{equation}\label{eq:generic-bound-bounded}
    \Delta\big( \trainlossQ , \poplossQ \big) \leq \frac{ \KLterm + \ln\frac{\IDn}{\delta}  }{n} 
\end{equation}
where $\KLterm$ is the KL divergence and
\begin{equation}\label{eq:IDn-def}
    \IDn = \sup_{r\in[0,1]} \sum_{k=0}^n \binom{n}{k}r^k (1-r)^{n-k} e^{n \Delta(k/n, r)}.
\end{equation}
If~$\trainlossQ=\alpha$,~$\KLterm\leq\beta$, and~$\IDn\leq \iota(n)$, this leads to the bound~$\poplossQ\leq  B^{\Delta}_{n}(\alpha,\beta,\iota)$, where
\begin{align}\label{eq:B-def}
    B^{\Delta}_{n}(\alpha,\beta,\iota) = \sup_{\rho\in\sL}\bigg\{ \rho : \Delta(\alpha, \rho) \leq \frac{\beta + \ln\frac{\iota(n)}{\delta} }{n} \bigg\}.
\end{align}
\end{restatable}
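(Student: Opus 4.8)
The plan is to follow the classical "change of measure + Markov" template for PAC-Bayesian bounds, as in \citet{begin-16a}. First I would apply the Donsker--Varadhan variational representation of the KL divergence (or equivalently, Jensen's inequality via the change of measure $\dv\posterior/\dv\prior$): for any measurable function $g:\hypospace\to\sR$,
\begin{equation*}
  \E_{\rh\sim\posterior}[g(\rh)] \leq \KLterm + \ln \E_{\rh\sim\prior}\big[ e^{g(\rh)} \big].
\end{equation*}
The natural choice here is $g(\rh) = n\,\Delta\big(\trainlossrh, \poplossrh\big)$, where $\trainlossrh$ depends on the random draw of $\trainset$. Since $\Delta$ is convex in its arguments, Jensen's inequality gives $\Delta\big(\trainlossQ, \poplossQ\big) = \Delta\big(\E_{\rh\sim\posterior}[\trainlossrh], \E_{\rh\sim\posterior}[\poplossrh]\big) \leq \E_{\rh\sim\posterior}\big[\Delta(\trainlossrh, \poplossrh)\big]$, so it suffices to bound the right-hand side.

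Next I would take expectation over $\trainset\sim\datadistro^n$ \emph{inside} the logarithm: by Fubini and the fact that $\prior$ is fixed (hence independent of $\trainset$),
\begin{equation*}
  \E_{\trainset}\,\E_{\rh\sim\prior}\big[ e^{n\Delta(\trainlossrh,\poplossrh)} \big]
  = \E_{\rh\sim\prior}\,\E_{\trainset}\big[ e^{n\Delta(\trainlossrh,\poplossrh)} \big].
\end{equation*}
For a fixed $\rh$, the quantity $n\trainlossrh = \sum_{i=1}^n \ell(\rh,\rz_i)$ is a sum of $n$ i.i.d.\ random variables in $[0,1]$ with common mean $\poplossrh$. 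The key observation is that among all distributions on $[0,1]$ with a given mean $r=\poplossrh$, the moment generating function of the relevant statistic is maximized (for convex test functions) by the two-point distribution on $\{0,1\}$, i.e.\ the Bernoulli$(r)$ law. Hence $\E_{\trainset}\big[e^{n\Delta(\trainlossrh,r)}\big] \leq \sum_{k=0}^n \binom{n}{k} r^k(1-r)^{n-k} e^{n\Delta(k/n,r)} \leq \IDn$ after taking the supremum over $r\in[0,1]$. Combining, $\E_{\trainset}\,\E_{\rh\sim\prior}[e^{n\Delta}] \leq \IDn$, so Markov's inequality applied to the nonnegative random variable $\E_{\rh\sim\prior}[e^{n\Delta(\trainlossrh,\poplossrh)}]$ yields that, with probability at least $1-\delta$ over $\trainset$, this quantity is at most $\IDn/\delta$. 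Plugging back gives \eqref{eq:generic-bound-bounded}. The final claim about $B^{\Delta}_n(\alpha,\beta,\iota)$ is then immediate: if $\trainlossQ=\alpha$, $\KLterm\leq\beta$, and $\IDn\leq\iota(n)$, then $\Delta(\alpha,\poplossQ)\leq (\beta+\ln(\iota(n)/\delta))/n$, so $\poplossQ$ lies in the set $\{\rho:\Delta(\alpha,\rho)\leq(\beta+\ln(\iota(n)/\delta))/n\}$ and is therefore at most its supremum.

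The main obstacle is the Bernoulli-extremality step: one must justify rigorously that, for a fixed hypothesis and a given mean $r$, the expectation $\E\big[e^{n\Delta(\frac1n\sum_i X_i, r)}\big]$ over i.i.d.\ $X_i\in[0,1]$ with $\E[X_i]=r$ is maximized by the Bernoulli law. The standard argument conditions on all but one coordinate and uses that $x\mapsto e^{n\Delta(\cdot,r)}$ composed with the affine map $x\mapsto \frac1n(x + \text{rest})$ is convex in $x$ (since $\Delta$ is convex in its first argument and $t\mapsto e^{t}$ is convex and increasing, and $\Delta\geq 0$); a convex function on $[0,1]$ is maximized at an endpoint, so replacing each $X_i$ in turn by a $\{0,1\}$-valued variable with the same mean only increases the expectation. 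Care is needed because $\Delta$ need not be monotone in its second argument and the composition-of-convex-functions argument requires $\Delta(\cdot,r)$ convex (given) and the outer function convex \emph{and nondecreasing} (true for $\exp$); one also invokes that $\Delta$ maps into $\sR^+$ so no sign issues arise. This is exactly the content of \citet[Thm.~1]{begin-16a}, which we may cite; the remaining steps are routine.
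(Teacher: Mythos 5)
Your proposal is correct, and it matches the standard proof structure for this result, which the paper cites from B\'egin et al.\ rather than re-proving. The paper does, however, prove its own generalization (\cref{thm:cgf-generic-pac-bayes}), and its proof uses exactly the same sequence of steps you describe: Jensen's inequality (convexity of $\Delta$), the Donsker--Varadhan change of measure, Markov's inequality applied to the nonnegative random variable $\E_{\rh\sim\prior}\big[e^{n\Delta(\trainlossrh,\poplossrh)}\big]$, a supremum over hypotheses to decouple from $\prior$, and finally the replacement of the true loss distribution by the bounding distribution. In the bounded-loss special case, that last step is precisely your Bernoulli-extremality argument, which the paper handles by citing Maurer's Lemma 3 rather than rederiving it; your sketch of that argument (condition on all but one coordinate, note that $x\mapsto e^{n\Delta(\frac1n(x+\text{rest}),r)}$ is convex since $\exp$ is convex and nondecreasing and $\Delta(\cdot,r)$ is convex, and a convex function on $[0,1]$ attains its max at an endpoint) is the correct content of that lemma. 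So this is the same approach, carried out correctly.
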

Here, the function $B^{\Delta}_n$ is essentially a numerical inversion of the bound in \cref{eq:generic-bound-bounded}: it outputs the largest possible value of the population loss that is consistent with the bound.
By suitably selecting the comparator function~$\Delta$ and controlling the resulting~$\ID$, several explicit bounds can be obtained.
The perhaps most intuitive choice is to simply consider the scaled difference, \ie, $\Delta_t(q,p)=t(p-q)$~\citep{mcallester-03a}.
However, other choices are likely to lead to tighter bounds.
For instance, with $\Delta(q,p)=C_\gamma(q,p)$ for $\gamma\in\sR$, where
\begin{equation}\label{eq:catoni-function}
    C_\gamma(q,p) = \gamma q - \ln(1-p+pe^\gamma),
\end{equation}
we find that, with probability $1-\delta$ for a fixed $\gamma$,
\begin{equation}\label{eq:catoni-bound}
    C_\gamma(\trainlossQ,\poplossQ) \leq \frac{ \KLterm + \ln\frac{1}{\delta}  }{n} .
\end{equation}
This is the family of \emph{Catoni bounds}~\citep{catoni-07a}.
Now, let $\bernoulli(p)$ denote a Bernoulli distribution with parameter $p$, and define the binary KL divergence as
\begin{align}
    \kl(q,p) &= \KL( \bernoulli(q) \Vert \bernoulli(p) ) \\
    &= q\ln \frac{q}{p} + (1-q)\ln\frac{1-q}{1-p} .
\end{align}
With $\Delta(q,p)=\kl(q,p)$, we obtain the MLS bound, named for~\citet{maurer-04a,langford-01a}:
\begin{equation}\label{eq:mls-bound}
    \kl(\trainlossQ,\poplossQ) \!\leq\! \frac{ \KLterm \!+\! \ln\frac{2\sqrt n}{\delta}  }{n} .\!
\end{equation}
This flexibility raises the question: which~$\Delta$ leads to the tightest bound on~$\poplossQ$?
Recently,~\citet{foong-21a} established the following: first, no choice of $\Delta$ in \cref{thm:bounded-generic-pac-bayes} can give a tighter bound than
\begin{align}\label{eq:catoni-bound-with-inf}
   \poplossQ \leq \inf_\gamma B^{C_\gamma}_{n}(\trainlossQ,\KLterm,1) .
\end{align}
Second, the right-hand side of \cref{eq:catoni-bound-with-inf} is
\begin{equation}\label{eq:inf-catoni-mls}
   \inf_\gamma B^{C_\gamma}_{n}(\trainlossQ,\KLterm,1) \\
   = B^{\kl}_{n}(\trainlossQ,\KLterm,1).
\end{equation}
This can be expressed as follows:
no bound on $\poplossQ$ based on \cref{thm:bounded-generic-pac-bayes} is tighter than the one obtained from \cref{eq:mls-bound} without the $\ln(2\sqrt n)/n$ term, and this bound is equivalent to~\cref{eq:catoni-bound} for the optimal $\gamma$.
Now, it is important to emphasize that the optimal bound in \cref{eq:catoni-bound-with-inf}, sometimes called the \emph{optimistic} MLS bound, has \emph{not} been proven to be valid: the optimal value of $\gamma$ in \cref{eq:catoni-bound} depends on the random variable $\trainlossQ$, and hence, taking the infimum in \cref{eq:catoni-bound-with-inf} is invalid without a union bound.
However, this demonstrates that, in this sense, \cref{eq:mls-bound} is optimal up to the logarithmic term.
Note that the assumption of the loss function being bounded is central to these results, and it is unclear what can be said in more general settings with unbounded losses.

\textbf{Overview and contributions.} 
Based on the preceding discussion, the following questions naturally arise:\\
\begin{enumerate*}[label=(\roman*)]
\item Why does optimistic MLS yield the tightest bound?
\item What is the optimal $\Delta$ beyond bounded losses? \\%
\end{enumerate*}
In this paper, we answer these questions as follows.
In \cref{sec:average}, we consider the average setting, enabling us to state our conclusions in a simpler form.
First, %
we derive a generic generalization bound in terms of any convex comparator for which the cumulant-generating function (CGF) is bounded by the corresponding CGF from a family of bounding distributions.
We prove that the optimal comparator is the \emph{Cram\'er function}---\ie, the conxvex conjugate of the CGF---of the bounding distribution.
If the bounding distributions form a natural exponential family (NEF), the Cram\'er function is a KL divergence.
In \cref{sec:pac-bayes}, we turn to the PAC-Bayesian setting. 
We derive an analogous generic generalization bound, %
and establish that the same Cram\'er function is near-optimal (up to a logarithmic term). %
As special cases, we recover the conclusions of \citet{foong-21a} for bounded losses and establish the optimality of the bound from \citet{xu-17a} for sub-Gaussian losses.
In \cref{sec:applications}, we specialize our approach to obtain generalization bounds for sub-Poissonian, sub-gamma, and sub-Laplacian losses, and in \cref{sec:numeric}, we numerically evaluate these bounds.
A summary of our notation, along with useful facts about information theory, convex analysis, and NEFs, is provided in \cref{app:useful-facts}.
The proofs of all of our results are deferred to \cref{app:proofs}. We close with additional theoretical and experimental results in \cref{app:more-results}.

\textbf{Related work.} 
PAC-Bayesian bounds for bounded losses with the difference-comparator were initially studied by \citet{shawetaylor-97a,mcallester-98a,mcallester-03a}.
Subsequently, \cite{langford-01a,maurer-04a,catoni-07a} considered alternative comparators for bounded losses, leading to \cref{eq:catoni-bound} and \cref{eq:mls-bound}.
\citet{zhang-06a} derived bounds for potentially unbounded losses using a comparator based on the CGF of the loss evaluated at $1$, %
and established a relation between average and PAC-Bayesian bounds via exponential inequalities (explored in-depth in \citealp{grunwald-23a}).
Bounds with generic comparators for bounded losses
were obtained by \citet{germain-09a,begin-16a}, and extended to unbounded losses by \citet{rivasplata-20a}.
General tail behaviors beyond bounded losses were also considered by, \eg, \citet{germain-16b,alquier-17a,bu-20a,mhammedi-20a,banerjee-21a,haddouche-21a,haddouche2022supermartingales,wu-23-arxiv,rodriguezgalvez-23a,lugosi-23a}.
However, the optimal comparator choice was not studied in any of these works.
Most closely related to this paper is \citet{foong-21a}, where comparator optimality was studied for bounded losses.

\section{Average Bounds and the Optimal Comparator}
\label{sec:average}

As aforementioned, we will first consider \emph{average} generalization bounds.
In this section, we thus consider the average training and population loss, given by
\begin{align}
    \trainlossQavg &= \E_{ \rh , \rvz \sim \posterior  \datadistro^n}[ \trainlossarg{\rvz}{\rh} ],\label{eq:trainlossQavg-def}\\ 
    \poplossQavg &=  \E_{ \rh , \rvz \sim \posterior  \datadistro^n } [ \poplossarg{\rh} ] .\label{eq:poplossQavg-def}
\end{align}
Here, $\posterior\datadistro^n$ is the product distribution on $\hypospace\times\dataspace^n$ induced by $\posterior$ and $\datadistro^n$.

\subsection{A Generic Average Generalization Bound}

\begin{restatable}{theorem}{genericaverage}\label{thm:cgf-generic-average}
Let $\bounddistrospace$ be a set of distributions such that, for all $r \in \sL$, there exists a $\distra{r}\in \bounddistrospace$ with first moment~$r$.
Let $\convfuncs$ denote the set of functions from $\sR^2$ to~$\sR$ that are proper, convex, and lower semicontinuous.\footnote{Functions defined on a subset of $\sR^2$ are extended by setting them to be $+\infty$ outside of the original domain.}
For any $\rvx=(\rx_1,\dots,\rx_n)$, let $\mean{\rvx} = \sum_{i=1}^n \rx_i/n$.
Furthermore, let $\funcspace\subseteq\convfuncs$ denote the subset of $\convfuncs$ such that, for all $h\in\hypospace$ and $f\in\funcspace$,
\begin{align}\label{eq:bounding-function-condition}
\!\!\!\E_{\,\rvz \sim \datadistro^n}[e^{f(\trainlossh,\poplossh)}]\!\leq\! \E_{\,\rvx \sim \distra{\poplossh}^n}[e^{f(\mean{\rvx},\poplossh)}].\!
\end{align}
Then, for all~$\Delta\in\funcspace$ and all~$\posterior$ such that $\posterior\ll\prior$, 
\begin{align}
 \Delta\big( \trainlossQavg , \poplossQavg ) \leq \frac{ \KL( \posterior \datadistro^n \Vert \prior  \datadistro^n ) \!+\! \ln \gIDn }{n} .\label{eq:thm-cgf-generic-average}
\end{align}
Here, $\gIDn = \sup_{r\in\sL} \E_{ \, \rvx \sim \distra{r}^n} \exp\!\left({n \Delta( \bar\rvx, r )}  \right)$.
\end{restatable}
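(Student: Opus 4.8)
The plan is to follow the standard "change of measure + exponential moment control" recipe that underlies PAC-Bayesian and information-theoretic bounds, but organized so that the role of the bounding family $\bounddistrospace$ is isolated in a single clean step. First I would apply the Donsker–Varadhan variational formula (or, equivalently, the Gibbs variational principle): for any measurable $g\colon\hypospace\to\sR$ and any $\prior$, and for any $\posterior\ll\prior$,
\begin{align}
 \E_{\rh\sim\posterior}[g(\rh)] \leq \KL(\posterior\Vert\prior) + \ln \E_{\rh\sim\prior}[e^{g(\rh)}].\nonumber
\end{align}
Working on the product space $\hypospace\times\dataspace^n$ with $\posterior\datadistro^n$ and $\prior\datadistro^n$, and taking $g(\rh,\rvz) = n\,\Delta(\trainlossrh,\poplossrh)$, this yields
\begin{align}
 n\,\E_{\rh,\rvz\sim\posterior\datadistro^n}[\Delta(\trainlossrh,\poplossrh)] \leq \KL(\posterior\datadistro^n\Vert\prior\datadistro^n) + \ln \E_{\rh,\rvz\sim\prior\datadistro^n}\big[e^{n\Delta(\trainlossrh,\poplossrh)}\big].\nonumber
\end{align}

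Next I would bound the exponential-moment term. Since $\prior$ does not depend on $\rvz$ (and is fixed here), I can take the expectation over $\rvz\sim\datadistro^n$ inside first, for each fixed $\rh$. For each fixed $h$, the hypothesis of $\funcspace$ in \cref{eq:bounding-function-condition} — applied with $f = n\Delta(\cdot,\poplossh)$, which lies in $\funcspace$ because $\Delta\in\funcspace$ and the condition is assumed to hold for every $f\in\funcspace$ — gives
\begin{align}
 \E_{\rvz\sim\datadistro^n}\big[e^{n\Delta(\trainlossh,\poplossh)}\big] \leq \E_{\rvx\sim\distra{\poplossh}^n}\big[e^{n\Delta(\bar\rvx,\poplossh)}\big] \leq \sup_{r\in\sL}\ \E_{\rvx\sim\distra{r}^n}\big[e^{n\Delta(\bar\rvx,r)}\big] = \gIDn.\nonumber
\end{align}
Here I replaced $\poplossh\in\sL$ by the supremum over $r\in\sL$; this is legitimate because, by hypothesis on $\bounddistrospace$, for every $r\in\sL$ there is a distribution $\distra{r}\in\bounddistrospace$ with first moment $r$, so the object $\gIDn$ is well defined and dominates the $h$-specific term. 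Since this bound is uniform in $h$, it survives the outer expectation over $\rh\sim\prior$:
\begin{align}
 \E_{\rh,\rvz\sim\prior\datadistro^n}\big[e^{n\Delta(\trainlossrh,\poplossrh)}\big] = \E_{\rh\sim\prior}\Big[\E_{\rvz\sim\datadistro^n}\big[e^{n\Delta(\trainlossrh,\poplossrh)}\big]\Big] \leq \gIDn.\nonumber
\end{align}
Combining the two displays and dividing by $n$ gives exactly \cref{eq:thm-cgf-generic-average}, after noting that Jensen's inequality moves the expectation inside the convex $\Delta$: $\Delta\big(\E[\trainlossrh],\E[\poplossrh]\big)\leq \E[\Delta(\trainlossrh,\poplossrh)]$, and the two arguments of the outer $\Delta$ are precisely $\trainlossQavg$ and $\poplossQavg$ as defined in \cref{eq:trainlossQavg-def,eq:poplossQavg-def}.

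The main obstacle — really the only non-bookkeeping point — is the interchange of expectation and the convex comparator via Jensen, and more subtly the measure-theoretic justification that $\Delta(\trainlossrh,\poplossrh)$ is integrable (or at least quasi-integrable) so that Donsker–Varadhan applies; since $\Delta\in\convfuncs$ is proper, convex, and lower semicontinuous, it is bounded below by an affine function, which controls the negative part, and the exponential-moment bound controls the positive part, so this goes through. A secondary point to be careful about is that the condition defining $\funcspace$ must be invoked with $f=n\Delta$ rather than $f=\Delta$; since $\funcspace\subseteq\convfuncs$ is closed under positive scaling and the membership $n\Delta\in\funcspace$ (equivalently, that \cref{eq:bounding-function-condition} holds for $n\Delta$) is exactly what the theorem assumes for "all $f\in\funcspace$," there is no gap, but it is worth stating explicitly. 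Everything else is a direct substitution into Donsker–Varadhan and the definition of $\gIDn$.
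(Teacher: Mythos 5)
Your proof follows essentially the same route as the paper's: Jensen's inequality to move the expectations inside the convex $\Delta$, the Donsker--Varadhan formula applied on $\hypospace\times\dataspace^n$ with respect to $\posterior\datadistro^n$ and $\prior\datadistro^n$, the bounding-function condition defining $\funcspace$ invoked pointwise in $h$, and a final supremum over $r\in\sL$ yielding $\gIDn$; the only cosmetic difference is that you defer Jensen to the last step whereas the paper opens with it. One caution on your parenthetical justification: the claim that $n\Delta\in\funcspace$ because ``$\funcspace$ is closed under positive scaling'' does not actually follow from the theorem's hypotheses (the CGF-domination condition in \cref{eq:bounding-function-condition} is not manifestly stable under scaling, and $\Delta\in\funcspace$ is what is assumed, not $n\Delta\in\funcspace$), but the paper's own proof silently makes the same move when it writes $e^{n\Delta}$ into the condition after citing only ``$\Delta\in\funcspace$'', so this is a shared subtlety rather than a defect of your argument, and it is harmless in the intended applications where $\funcspace$ is populated via affine-in-$q$ functions $f(q,p)=tq+g(p)$.
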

If $\sL=[0,1]$, the condition in \cref{eq:bounding-function-condition} holds with $\bounddistrospace$ as
\begin{equation}\label{eq:P-Bernoulli}
    \bounddistrospacebern = \{ \bernoulli(r): r\in[0,1] \}
\end{equation}
and $\funcspace=\convfuncs$ \citep[Lemma~3]{maurer-04a}.
Thus, \cref{thm:cgf-generic-average} includes an average version of \cref{thm:bounded-generic-pac-bayes} as a special case.
Note that, if~$\prior$ is set to be the true marginal distribution $\priormarg$ induced on $\rh$ by $\posterior\datadistro^n$, \ie, for any measurable~$\setE \subset \hypospace$,
\begin{equation}
    \priormarg(\setE) = \int_{\dataspace^n} \posterior(\setE) \dv\datadistro^n(\rvz),
\end{equation}
we have that $ \KL( \posterior \datadistro^n \Vert \priormarg  \datadistro^n ) = \mutualinfo(\rh;\rvz)$ is the mutual information.
Hence, $ \KL( \posterior \datadistro^n \Vert \prior  \datadistro^n )$ can be seen as a mutual information with a mismatched marginal.
By the golden formula for mutual information, we have $\mutualinfo(\rh;\rvz)\leq  \KL( \posterior \datadistro^n \Vert \prior  \datadistro^n )$ for any prior~$\prior\ll\priormarg$ (see \cref{lemma:golden-formula} in \cref{app:useful-facts}).

\subsection{Beyond Bounded: Sub-$\bounddistrospace$ Losses}

To see the relevance of \cref{thm:cgf-generic-average} beyond the case of bounded losses, we need to be more concrete regarding the set $\funcspace$ of admissible functions and the set $\bounddistrospace$ of bounding distributions.
Recall that $\sigma$-sub-Gaussian random variables are characterized by having a CGF that is dominated by the CGF of some Gaussian distribution with variance $\sigma^2$, with similar notions for, \eg, sub-gamma and sub-exponential random variables~\citep[Chapter~2]{wainwright-19a}.
In \cref{def:sub-p}, we extend this to general bounding distributions.
\begin{definition}[\textbf{Sub-$\bounddistrospace$ Losses}]\label{def:sub-p}
    Let $\bounddistrospace$ be a set of distributions such that, for all $r \in \sL$, there exists a $\distra{r}\in \bounddistrospace$ with first moment~$r$.
    Furthermore, for all $r\in\sL$, let $\tspace_r\subseteq\sR$ and $\tspace=\{\tspace_r:r\in\sL\}$.
    Then, we say that the loss is sub-$(\bounddistrospace,\tspace)$ if, for all~$h\in\hypospace$ and $t\in\tspace_{\poplossh}$, we have
    \begin{equation}\label{eq:cgf-t-bound-assume}
        \E_{\,\rz\sim\datadistro}[\exp\!\big(t\ell(h,\rz)\big)] \leq \E_{\,\rx\sim\distra{\poplossh}}[\exp(t\rx)].
    \end{equation}
    If $\tspace_r=\sR$ for all $r\in\sL$, we say that the loss is sub-$\bounddistrospace$.
\end{definition}
Note that the condition in~\cref{eq:cgf-t-bound-assume} corresponds to assuming that the CGF of the loss is dominated by the CGF of the bounding distribution for all $t\in\tspace_{\poplossh}$.
In the language of Theorem 2, this corresponds to saying that the function $\flintg(q,p)=tq+g(p)$ is in $\mathcal F$ for all $t\in\tspace_{\poplossh}$ and all functions $g:\sL\rightarrow \sR$.
As indicated, sub-Gaussian random variables can be expressed as sub-$\bounddistrospacenorm$, where $\bounddistrospacenorm$ is the set of Gaussian distributions with a fixed variance $\sigma^2\in\sR^+$:
\begin{equation}\label{eq:nef-gaussian}
    \bounddistrospacenorm = \{ \normal(\mu, \sigma^2): \mu\in\sR \}.
\end{equation}
For a given loss, there are often multiple valid choices of $\mathcal P$.
Bounded losses, for instance, are both sub-Bernoulli and, by Hoeffding's lemma, sub-Gaussian.
Furthermore, for any $\sigma'>\sigma$, sub-$\bounddistrospacenorm$ losses are also sub-$\bounddistrospacenormprime$.
However, selecting $\mathcal P$ to be the family that most tightly bounds the true CGF will naturally yield the tightest bound.

Unlike for the case of a bounded loss, assuming a bound on the CGF does not in general guarantee that $\funcspace$ contains all of $\convfuncs$.
However, it does imply that $\funcspace$ contains a wide array of functions, including all totally monotone functions and all infinitely differentiable functions whose derivatives of all orders are non-negative. 
To the best of our knowledge, this includes all comparator functions that have been considered in the literature.
We provide a more detailed characterization in \cref{propo:cgf-implies-funcspace} in \cref{app:additional-theoretical}.
In any case, assuming that $\flintg\in\funcspace$ is sufficient to find the optimal comparator function in \cref{thm:cgf-generic-average}, as we show next.

\subsection{The Optimal Comparator Function}

Recall that the convex conjugate of a function~$f$ is
\begin{equation}
    f^*(\vy) = \sup_{\vx} \big\{ \langle\vx, \vy\rangle - f(\vx) \big\} ,
\end{equation}
where $\langle\cdot,\cdot\rangle$ is the inner product.
For $f\in\convfuncs$, $(f^*)^*=f$.
\begin{restatable}{theorem}{bestcomparatoraverage}\label{thm:best-comparator-cgf-average}
Assume that the loss is sub-$(\bounddistrospace,\tspace)$.
Let $\Psifunca{p}(t)=\ln \E_{\rx \sim \distra{p}}[e^{t \rx}]$ denote the CGF of the distribution~$\distra{p}$, and let $\Deltacram(q,p)$ be the Cram\'er function, \ie, the convex conjugate of $\Psifunca{p}$:
\begin{equation}\label{eq:deltacram-def}
\Deltacram(q,p) = \Psifunca{p}^*(q) = \sup_{t\in\tspace_p} \big\{ tq -\Psifunca{p}(t) \big\}.
\end{equation}
Furthermore, define
\begin{align}\label{eq:Bhat-def}
    \Bavg^{\Delta}_{n}(\alpha,\beta,\iota) = \sup_{\rho\in\sL}\bigg\{ \rho : \Delta(\alpha, \rho) \leq \frac{\beta + \ln\iota(n) }{n} \bigg\}.
\end{align}
Then, for any $\Delta\in\funcspace$, we have
\begin{align}\label{eq:avg-optimal-upper-bound}
 \!\!\! \poplossQavg &\!\leq\! \Bavg^{\Deltacram}_{n} \big(\trainlossQavg,\KLtermavg,1\big) \\\label{eq:avg-optimal-lower-bound}
 &\!\leq\! \Bavg^{\Delta}_{n} \big(\trainlossQavg,\KLtermavg,\gID\big) .
\end{align}
\end{restatable}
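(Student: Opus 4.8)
The plan is to establish the two inequalities in the statement separately: the upper bound \cref{eq:avg-optimal-upper-bound}, which is a standalone claim about $\Deltacram$, and the comparison \cref{eq:avg-optimal-lower-bound} between $\Deltacram$ and an arbitrary admissible $\Delta$. For \cref{eq:avg-optimal-upper-bound}, I would apply \cref{thm:cgf-generic-average} to the tilted-linear comparators $\flintg(q,p) = tq - \Psifunca{p}(t)$ --- that is, $\flintg$ with $g(\cdot) = -\Psifunca{\cdot}(t)$, which lies in $\funcspace$ under the sub-$(\bounddistrospace,\tspace)$ assumption by the remark following \cref{def:sub-p}. Since $\rx_1,\dots,\rx_n$ are i.i.d.\ under $\distra{r}^n$, the constant $\gIDn$ of \cref{thm:cgf-generic-average} for this comparator is
\[
\sup_{r\in\sL}\E_{\rvx\sim\distra{r}^n}\big[e^{n\flintg(\bar\rvx, r)}\big] = \sup_{r\in\sL} e^{-n\Psifunca{r}(t)}\big(\E_{\rx\sim\distra{r}}[e^{t\rx}]\big)^n = 1 ,
\]
so \cref{thm:cgf-generic-average} gives $t\trainlossQavg - \Psifunca{\poplossQavg}(t) \leq \KLtermavg/n$ for each admissible $t$. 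The crucial observation is that $\trainlossQavg$ and $\poplossQavg$ are deterministic in the average setting, so --- unlike in the PAC-Bayesian setting, where a union bound over $t$ would be required (which is precisely why the optimistic bound of \citet{foong-21a} is not known to be valid) --- I may take the supremum over $t\in\tspace_{\poplossQavg}$ on the left for free. This yields $\Deltacram(\trainlossQavg, \poplossQavg) \leq \KLtermavg/n$, and since $\poplossQavg\in\sL$ satisfies the constraint in \cref{eq:Bhat-def}, \cref{eq:avg-optimal-upper-bound} follows.

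For \cref{eq:avg-optimal-lower-bound}, I would reduce to the pointwise inequality
\[
\Delta(q,p) \leq \Deltacram(q,p) + \tfrac{1}{n}\ln\gIDn \qquad \text{for all } q,p\in\sL .
\]
Given this, every $\rho\in\sL$ with $\Deltacram(\trainlossQavg, \rho)\leq \KLtermavg/n$ also satisfies $\Delta(\trainlossQavg, \rho)\leq (\KLtermavg + \ln\gIDn)/n$, so the feasible set defining $\Bavg^{\Deltacram}_{n}(\trainlossQavg, \KLtermavg, 1)$ is contained in that defining $\Bavg^{\Delta}_{n}(\trainlossQavg, \KLtermavg, \gID)$, and \cref{eq:avg-optimal-lower-bound} follows since the supremum is monotone under set inclusion. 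To prove the pointwise inequality I would fix $q,p\in\sL$ (assuming $\Deltacram(q,p)<\infty$, with the supremum in \cref{eq:deltacram-def} attained at an interior point $t^*\in\tspace_p$; the remaining cases by a limiting argument), introduce the exponentially tilted probability measure $\dv\nu(x) = e^{t^*x - \Psifunca{p}(t^*)}\dv\distra{p}(x)$, whose mean is $\Psifunca{p}'(t^*) = q$ by first-order optimality, and then, taking $r = p$ in the definition of $\gIDn$, change measure from $\distra{p}^n$ to $\nu^n$ and apply Jensen's inequality --- first to the exponential and then to the convex map $\Delta(\cdot, p)$, using $\E_{\nu^n}[\bar\rvx] = q$:
\begin{align*}
\gIDn &\geq \E_{\rvx\sim\distra{p}^n}\big[e^{n\Delta(\bar\rvx, p)}\big] = e^{n\Psifunca{p}(t^*)}\,\E_{\rvx\sim\nu^n}\big[e^{n(\Delta(\bar\rvx, p) - t^*\bar\rvx)}\big] \\
&\geq e^{n\Psifunca{p}(t^*)}e^{n(\Delta(q,p) - t^*q)} = e^{n(\Delta(q,p) - \Deltacram(q,p))} .
\end{align*}
Taking logarithms and dividing by $n$ gives the pointwise inequality.

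The routine half is \cref{eq:avg-optimal-upper-bound}: it is \cref{thm:cgf-generic-average} specialised to tilted-linear comparators together with the fact that no union bound over $t$ is needed in the average setting. I expect the main obstacle to be the pointwise inequality behind \cref{eq:avg-optimal-lower-bound} --- specifically, making the tilting argument rigorous when the supremum in \cref{eq:deltacram-def} is not attained in the interior of $\tspace_p$ (in particular when $q$ lies outside the range of $\Psifunca{p}'$, so that $\Deltacram(q,p) = +\infty$ and the inequality is trivial), and checking that both uses of Jensen's inequality point in the intended direction. A secondary bookkeeping matter is the exact index set of admissible $t$ in \cref{eq:avg-optimal-upper-bound}: for sub-$\bounddistrospace$ losses ($\tspace_r = \sR$) it is all of $\sR$ and the supremum over $t$ recovers $\Deltacram$ exactly, while in general one must verify it still contains $\tspace_{\poplossQavg}$, so that $\Deltacram(\trainlossQavg,\poplossQavg)\leq\KLtermavg/n$ is not weakened.
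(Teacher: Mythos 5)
Your proof is correct, and the first half (\cref{eq:avg-optimal-upper-bound}) is essentially identical to the paper's: apply \cref{thm:cgf-generic-average} to $\Delta(q,p)=tq-\Psifunca{p}(t)$, observe $\gIDn=1$, then take the supremum over $t$ to recover $\Deltacram(\trainlossQavg,\poplossQavg)\le\KLtermavg/n$. Your observation that no union bound over $t$ is needed because $\trainlossQavg$ and $\poplossQavg$ are deterministic is exactly the reason this step is legitimate in the average setting and not in the PAC-Bayesian one.

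For the second half (\cref{eq:avg-optimal-lower-bound}), you reduce to the pointwise inequality $\Delta(q,p)\le\Deltacram(q,p)+\tfrac1n\ln\gIDn$ and then invoke monotonicity of $\Bavg$ under set inclusion, which is the same reduction the paper performs, but your proof of the pointwise inequality is genuinely different. The paper argues through convex duality: it writes $\Delta=\Delta^{**}$, interchanges the supremum defining $\Delta^{**}$ with the expectation inside $\gIDn$ to obtain a constant $A=\sup_{c_q,c_p}\{-\Delta^*(c_q,c_p)+\ffunc^*_{c_q}(c_p)\}$ satisfying $\tfrac1n\ln\gIDn\ge A$, constructs an auxiliary function $\tilde\Delta$ with $\tilde\Delta^*=-A+\ffunc^*_{c_q}(c_p)$, identifies $\tilde\Delta=A+\Deltacram$, and concludes $\Delta\le\tilde\Delta$ from $\tilde\Delta^*\le\Delta^*$ by order-reversal. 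You instead argue probabilistically: fix $q,p$, pass to the exponentially tilted measure $\nu$ with mean $q$, change measure in the expectation defining $\gIDn$ at $r=p$, and apply Jensen twice (once to $x\mapsto e^{nx}$, once to the convex $\Delta(\cdot,p)$). Both arguments establish the same inequality. Yours is shorter, more probabilistic, and makes the Cram\'er--Chernoff mechanism behind the result visible; the paper's biconjugate argument is a bit more bureaucratic but sidesteps entirely the attainment question you flag (whether the supremum in $\Deltacram(q,p)=\sup_t\{tq-\Psifunca{p}(t)\}$ is achieved at an interior $t^*$ with $\Psifunca{p}'(t^*)=q$), which in your version requires a separate limiting argument for $q$ on the boundary of, or outside, the range of $\Psifunca{p}'$. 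You already identified this as the only loose end; it is indeed the only place where your route needs additional care that the paper's does not.
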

Note that $\Bavg^{\Delta}_{n}$ in \cref{eq:Bhat-def} is simply the average counter-part to $B^{\Delta}_{n}$ in \cref{eq:B-def}, and hence, without the $\delta$ term.
The result in \cref{thm:best-comparator-cgf-average} allows us to conclude that, using the generic bound in \cref{thm:cgf-generic-average}, the optimal average generalization bound is obtained by setting the comparator function to be the Cram\'er function.
Specifically, this is obtained by numerically inverting
\begin{equation}
\Deltacram(\trainlossQavg,\poplossQavg) \leq \frac{\KLtermavg}{n} 
\end{equation}
as described in \cref{eq:Bhat-def}.
For independent and identically distributed random variables, the Cram\'er function characterizes the probability of rare events~(\citealp{cramer-44a}, \citealp[Sec.~2.2]{boucheron-13a}).
Thus, the connection to generalization bounds is somewhat natural.

While we focus on information-theoretic and PAC-Bayesian bounds for concreteness, the conclusions of \cref{thm:best-comparator-cgf-average} hold more broadly for generalization bounds with a similar structure.
Specifically, if \cref{eq:thm-cgf-generic-average} holds with the KL divergence replaced by some other complexity measure, the same reasoning still applies.
For the case of Bernoulli distributions in \cref{eq:P-Bernoulli}, we have
\begin{equation}
    \Deltacramarg{\bounddistrospacebern}(q,p) = \kl(q,p) ,
\end{equation}
as can be shown via a straight-forward calculation.
As it turns out, a similar statement holds more generally as long as $\bounddistrospace$ is a natural exponential family (NEF).
A NEF is a set of probability distributions whose probability density (or mass) functions can be written
\begin{equation}\label{eq:nef-definition}
    p(x\vert \theta) = h(x) e^{\theta x - g(\theta) } ,
\end{equation}
where $h(x)$ and $g(\theta)$ are known functions and $\theta$ is the natural parameter.
A NEF can equivalently be described by its expectation parameter $\mu=g'(\theta)$, which equals its first moment (\citealp{nielsen-09a}, \citealp[Sec.~9.13.3]{wasserman-10a}).
Unless otherwise specified, we characterize NEFs using expectation parameters.
In the case where $\bounddistrospace$ is a NEF, Kullback's inequality becomes an equality \citep{kullback-54a}.
\begin{restatable}{proposition}{kullbackequality}\label{propo:kullback-equality}
Assume that $\bounddistrospace$ is a NEF.
Then,
\begin{equation}
\Deltacram(q,p) = \Psifunca{p}^*(q) = \KL(\distra{q} \Vert \distra{p} ) . 
\end{equation}
\end{restatable}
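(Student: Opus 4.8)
The plan is to exploit the explicit form of a natural exponential family directly. Since $\bounddistrospace$ is a NEF, write each $\distra{\mu}$ in its natural parametrization $p(x \mid \theta) = h(x) e^{\theta x - g(\theta)}$, where the expectation parameter $\mu = g'(\theta)$ is the first moment; let $\theta_q$ and $\theta_p$ be the natural parameters corresponding to the expectation parameters $q$ and $p$, so $g'(\theta_q) = q$ and $g'(\theta_p) = p$. The CGF of $\distra{p}$ is then $\Psifunca{p}(t) = \ln \E_{\rx \sim \distra{p}}[e^{t\rx}] = g(\theta_p + t) - g(\theta_p)$, which is just a shifted copy of $g$.

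First I would compute the convex conjugate $\Psifunca{p}^*(q) = \sup_t \{ tq - g(\theta_p + t) + g(\theta_p) \}$. Substituting $s = \theta_p + t$ turns this into $\sup_s \{ (s - \theta_p) q - g(s) \} + g(\theta_p) = g^*(q) - \theta_p q + g(\theta_p)$, where $g^*$ is the conjugate of $g$. The supremum over $s$ is attained where $g'(s) = q$, i.e. at $s = \theta_q$, giving $g^*(q) = \theta_q q - g(\theta_q)$. Hence
\begin{equation}
  \Deltacram(q,p) = \theta_q q - g(\theta_q) - \theta_p q + g(\theta_p) = (\theta_q - \theta_p) q - g(\theta_q) + g(\theta_p).
\end{equation}
Next I would compute $\KL(\distra{q} \Vert \distra{p})$ directly from the density ratio: $\ln \frac{p(x \mid \theta_q)}{p(x \mid \theta_p)} = (\theta_q - \theta_p) x - g(\theta_q) + g(\theta_p)$, and taking expectation under $\distra{q}$ (so $\E[x] = q$) yields exactly $(\theta_q - \theta_p) q - g(\theta_q) + g(\theta_p)$, matching the expression for $\Deltacram(q,p)$ above.

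The one point requiring care — and the main obstacle — is justifying that the supremum in the conjugate is genuinely attained at an interior point and that the change of variables is legitimate over the relevant parameter set; this hinges on $g$ being the (essentially smooth, strictly convex) log-partition function of the NEF and on $q$ lying in the interior of the mean-parameter range, so that $\theta_q$ is well-defined and the Legendre duality $g^* (q) = \theta_q q - g(\theta_q)$ holds. I would handle this by invoking standard properties of NEFs and their log-partition functions (the steepness/essential smoothness of $g$, stated in the appendix on useful facts about NEFs), which guarantee that $g'$ is a bijection from the natural parameter space onto the interior of the mean domain and that Legendre conjugacy holds pointwise there. With that in hand, both computations above are routine and the two expressions coincide, completing the proof.
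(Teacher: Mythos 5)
Your proof is correct, but it takes a genuinely different route from the paper's. The paper first establishes Kullback's inequality in full generality---for arbitrary $P \ll Q$, it introduces the exponentially tilted distribution $Q_\alpha$, decomposes $\KL(P\Vert Q) = \KL(P\Vert Q_\alpha) + \alpha\mu_P - \Psi_Q(\alpha)$, and uses non-negativity of KL to conclude $\KL(P\Vert Q) \geq \Psi_Q^*(\mu_P)$. It then observes that when $P$ and $Q$ lie in the same NEF, choosing $\alpha = \theta_P - \theta_Q$ makes $Q_\alpha = P$, so the $\KL(P\Vert Q_\alpha)$ term vanishes and the inequality is tight. Your argument skips the intermediate inequality entirely: you compute $\Psifunca{p}^*(q)$ directly via the change of variables $s = \theta_p + t$ and the Legendre identity $g^*(q) = \theta_q q - g(\theta_q)$, and separately compute $\KL(\distra{q}\Vert\distra{p})$ from the density ratio; both reduce to $(\theta_q-\theta_p)q - g(\theta_q) + g(\theta_p)$, so they coincide. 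Your route is a shorter and more explicit verification tailored to the NEF case, relying on essential smoothness of the log-partition function so that $g'$ is a bijection and the supremum is attained in the interior (a point you correctly flag). The paper's route is slightly longer but yields Kullback's inequality for general $P$, $Q$ as a by-product and makes transparent \emph{why} equality holds precisely when the tilt of $Q$ can reach $P$, i.e.\ within an exponential family. Both are sound; yours trades that structural insight for brevity.
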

Thus, the optimal comparator function for bounded losses is the binary KL divergence (as in \citealp[Thm.~9]{hellstrom-22a}).
As another example, consider the set of Gaussian distributions with known variance in~\cref{eq:nef-gaussian}.
Then, the optimal comparator function is
\begin{equation}
   \! \KL\big(\normal(q,\sigma^2) \Vert \normal(p,\sigma^2) \big) = \frac{(q-p)^2}{2\sigma^2} ,
\end{equation}
as $\bounddistrospacenorm$ is a NEF.
This demonstrates the optimality of the bound in \citet[Thm.~1]{xu-17a}.
As discussed by \citet{foong-21a}, these optimal comparators are not necessarily unique.
We discuss further applications of the generic bound in \cref{sec:applications}.

\subsection{A Samplewise Generalization Bound}

By an altered derivation, one can obtain a bound in terms of a \emph{samplewise} KL divergence, akin to \citet{negrea-19a, bu-20a, haghifam-20a}.
While it is possible to obtain bounds in terms of arbitrary random subsets of $\trainset$, we focus on the samplewise case as it yields the tightest bound~\citep{rodriguezgalvez-20a,harutyunyan-21a}.
\begin{restatable}{theorem}{genericaveragedisintegrated}\label{thm:cgf-generic-average-disintegrated}
Consider the setting of \cref{thm:cgf-generic-average}.
Let~$\rvz_{-i}$ denote $\rvz$ with the $i$th element removed.
Let $\posteriori$ denote the distribution induced on $\rh$ when marginalizing over $\rvz_{-i}$, \ie, for any measurable~$\setE \subset \hypospace$,
\begin{equation}
    \posteriori(\setE) = \int_{\dataspace^{n-1}} \posterior(\setE) \dv\datadistro^{n-1}(\rvz_{-i}),
\end{equation}
Then, for all~$\Delta\in\funcspace$ and~$\posterior$ such that $\posteriori\ll\prior$, 
\begin{equation}\label{eq:thm-cgf-generic-average-disintegrated-B}
   \!\!\!\! \poplossQavg \!\leq\! \frac1n\! \sum_{i=1}^n  \Bavg^{\Delta}_{1}\!\!\left(\!\trainlossQavgi , \KLtermavgi , \!\gID \!\right) \!.\!\!\!
\end{equation}
\end{restatable}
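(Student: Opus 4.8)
The plan is to mimic the derivation of \cref{thm:cgf-generic-average}, but to apply the change-of-measure argument one sample at a time rather than once for the whole training set. First I would fix $i\in\{1,\dots,n\}$ and condition on $\rvz_{-i}$. Conditioned on this, the posterior $\posterior$ becomes a distribution over $\rh$ that depends only on $\rz_i$; averaging over $\rz_{-i}$ turns this into $\posteriori$ in the sense defined in the statement. For a fixed $h$, the sub-$(\bounddistrospace,\tspace)$ assumption gives, via the single-sample CGF bound in \cref{eq:cgf-t-bound-assume} (equivalently, that $\flintg\in\funcspace$), that $\E_{\rz_i\sim\datadistro}[e^{f(\ell(h,\rz_i),\poplossh)}]\leq \E_{\rx\sim\distra{\poplossh}}[e^{f(\rx,\poplossh)}]$ for any admissible $f$. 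Taking $f=\Delta$ and rearranging, $\E_{\rz_i}[e^{\Delta(\ell(h,\rz_i),\poplossh)}]\leq \gIDtwoarg{\Delta}{\bounddistrospace}(1)$ uniformly in $h$, since the right-hand side is bounded by $\sup_{r\in\sL}\E_{\rx\sim\distra{r}}e^{\Delta(\rx,r)}=\gID(1)$.

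Next I would introduce the prior $\prior$ and use the Donsker–Varadhan / change-of-measure step: for any $\rz_i$,
\begin{equation}
\E_{\rh\sim\posteriori}[\Delta(\ell(\rh,\rz_i),\poplossarg{\rh})] \leq \KLtermavgi + \ln \E_{\rh\sim\prior}\E_{\rz_i\sim\datadistro}\big[e^{\Delta(\ell(\rh,\rz_i),\poplossarg{\rh})}\big] ,
\end{equation}
where I would be careful that the expectation over $\rz_i$ inside the log is what makes the bound hold in expectation rather than with high probability. Bounding the inner double expectation by $\gID(1)$ using the previous step, and then taking $\E_{\rz_i\sim\datadistro}$ on the left (so that $\posteriori$ genuinely becomes the $\rz_{-i}$-marginalized posterior and $\ell(\rh,\rz_i)$ averages to the per-sample quantity $\trainlossQavgi$ after also invoking Jensen on the convex $\Delta$ to move the $\rz_i$-expectation inside), I obtain
\begin{equation}
\Delta\big(\trainlossQavgi,\poplossQavg\big) \leq \KLtermavgi + \ln\gID(1) .
\end{equation}
Inverting this via the definition of $\Bavg^{\Delta}_1$ in \cref{eq:Bhat-def} yields $\poplossQavg \leq \Bavg^{\Delta}_{1}(\trainlossQavgi,\KLtermavgi,\gID)$ for each $i$. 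Averaging over $i=1,\dots,n$ and using that $\poplossQavg$ does not depend on $i$ gives \cref{eq:thm-cgf-generic-average-disintegrated-B}; here one should also note that $\rho\mapsto\Bavg^{\Delta}_1(\alpha,\beta,\iota)$ behaves correctly under averaging because $\poplossQavg$ satisfies each of the $n$ inequalities simultaneously, so it is at most the average of the $n$ right-hand sides.

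The main obstacle I anticipate is the bookkeeping around which expectations are taken where, and in particular justifying that applying the change-of-measure step with the per-sample prior–posterior pair $(\prior,\posteriori)$ — rather than $(\prior\datadistro^n,\posterior\datadistro^n)$ — still produces a \emph{valid} in-expectation bound: one must check that $\E_{\rz_{-i}}$ commutes appropriately with the Donsker–Varadhan inequality (using convexity of $(\mu,\nu)\mapsto\KL(\mu\Vert\nu)$ for the KL term, and Jensen/Fubini for the exponential moment term), and that the convexity of $\Delta$ is used in the right direction when pulling the $\rz_i$-average inside the first argument. A secondary, more routine point is verifying that $\gID(1)$, rather than $\gID(n)$, is the correct normalizing constant in the single-sample regime — this follows directly from the $n=1$ instance of the definition $\gIDn=\sup_{r\in\sL}\E_{\rvx\sim\distra{r}^n}e^{n\Delta(\bar\rvx,r)}$.
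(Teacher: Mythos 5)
Your proposal follows essentially the same route as the paper: decompose the training loss over samples, and for each $i$ apply Donsker--Varadhan with the pair $(\posteriori\datadistro,\prior\datadistro)$, Jensen's inequality to pull expectations inside $\Delta$, a supremum over $\sL$ to bound the exponential moment by $\gIDarg{\Delta}(1)$, and finally averaging over $i$. One small imprecision to flag: your change-of-measure display is stated ``for any $\rz_i$'' on the left but averages over $\rz_i$ on the right, which is internally inconsistent; the clean version (and what the paper does) is to apply Donsker--Varadhan once on the joint space $\hypospace\times\dataspace$ with $\posteriori\datadistro$ against $\prior\datadistro$, which directly gives $\KLtermavgi$ as the complexity term and the $(\rh,\rz_i)$-joint exponential moment---exactly the resolution you anticipate in your ``main obstacle'' paragraph. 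A further minor difference is that the paper's write-up opens with two Jensen steps that pass through $\Delta(\trainlossQavg,\poplossQavg)$ before disintegrating, which is not strictly needed for the stated per-$i$ form; your direct derivation of $\Delta(\trainlossQavgi,\poplossQavg)\leq\KLtermavgi+\ln\gIDarg{\Delta}(1)$ for each $i$, followed by inversion to $\Bavg^\Delta_1$ and averaging, is a cleaner way to reach the theorem as stated.
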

Now, setting the prior to be the true marginal gives the samplewise mutual information $ \KL( \posteriori \datadistro \Vert \priormarg  \datadistro ) = \mutualinfo(\rh;\rz_i)$.
With this prior and $\gID=1$, the bound in \cref{thm:cgf-generic-average-disintegrated} is always at least as tight as the one in \cref{thm:cgf-generic-average}, as we show in Appendix~\ref{app:additional-theoretical}.

\section{Generic PAC-Bayesian Bound for Sub-$\bounddistrospace$ Losses}
\label{sec:pac-bayes}

Having introduced the main ideas in the average setting, we now turn to PAC-Bayesian bounds.
We follow a strategy similar to the one presented in \cref{sec:average}, but the additional difficulty of handling the randomness of the training data calls for a more elaborate treatment.

\subsection{A Generic PAC-Bayesian Bound}

We begin by deriving a version of \cref{thm:bounded-generic-pac-bayes} that holds under the assumption that the CGF of the comparator under the true data distribution is bounded by the CGF under a certain bounding distribution---\ie, a PAC-Bayesian variant of \cref{thm:cgf-generic-average}.
\begin{restatable}{theorem}{genericpacbayes}\label{thm:cgf-generic-pac-bayes}
Let $\bounddistrospace$, $\funcspace$ and $\gID$ be as in \cref{thm:cgf-generic-average}.
Consider a fixed function $\Delta\in\funcspace$.
Then, with probability~$1-\delta$ simultaneously for all~$\posterior$ such that $\posterior\ll\prior$, 
\begin{equation}\label{eq:thm-cgf-generic-pac-bayes}
   \!\! \Delta( \trainlossQ , \poplossQ ) \leq \frac{ \KL( \posterior \Vert \prior ) + \ln\frac{\gIDn}\delta }{n} .\!\!
\end{equation}
\end{restatable}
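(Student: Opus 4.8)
The plan is to follow the standard PAC-Bayes recipe adapted to a general comparator: apply Jensen's inequality to the convex $\Delta$, then the Donsker--Varadhan change-of-measure inequality to trade the expectation over $\posterior$ for $\KL(\posterior\Vert\prior)$ plus a log-moment-generating function under the fixed prior $\prior$, and finally Markov's inequality to turn that log-MGF into the high-probability statement. The defining property of $\funcspace$ in \cref{eq:bounding-function-condition}, together with the definition of $\gIDn$, is exactly what controls the relevant moment.

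First I would fix the data $\rvz$ and use joint convexity of $\Delta$: since $(\trainlossQ,\poplossQ)=\E_{\rh\sim\posterior}[(\trainlossrh,\poplossrh)]$, Jensen's inequality gives $n\Delta(\trainlossQ,\poplossQ)\le\E_{\rh\sim\posterior}[n\Delta(\trainlossrh,\poplossrh)]$ for every $\posterior$. Next, the Donsker--Varadhan variational formula for the KL divergence yields, simultaneously for all $\posterior\ll\prior$,
\[
\E_{\rh\sim\posterior}[n\Delta(\trainlossrh,\poplossrh)] \le \KL(\posterior\Vert\prior) + \ln Z(\rvz),\qquad Z(\rvz):=\E_{\rh\sim\prior}\big[e^{n\Delta(\trainlossrh,\poplossrh)}\big].
\]
The key point is that $Z(\rvz)$ no longer depends on $\posterior$; hence it suffices to show $Z(\rvz)\le\gIDn/\delta$ on an event of probability at least $1-\delta$, since such an event is then valid for all $\posterior$ at once.

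For that, Markov's inequality gives, with probability at least $1-\delta$ over $\rvz\sim\datadistro^n$, the bound $Z(\rvz)\le\E_{\rvz}[Z]/\delta$, so $\ln Z(\rvz)\le\ln(\E_{\rvz}[Z]/\delta)$. It remains to bound $\E_{\rvz}[Z]$. Since $\prior$ is fixed (data-independent), Fubini's theorem gives $\E_{\rvz}[Z]=\E_{\rh\sim\prior}\,\E_{\rvz\sim\datadistro^n}[e^{n\Delta(\trainlossrh,\poplossrh)}]$. For each fixed $h$, the function $n\Delta$ belongs to $\funcspace$ whenever $\Delta$ does (all of $\convfuncs$ in the bounded case, and the function classes described in \cref{propo:cgf-implies-funcspace} in general, all of which are closed under multiplication by positive scalars), so \cref{eq:bounding-function-condition} applies and, combined with the definition of $\gIDn$ and $\poplossrh\in\sL$,
\[
\E_{\rvz\sim\datadistro^n}\big[e^{n\Delta(\trainlossrh,\poplossrh)}\big]\le\E_{\rvx\sim\distra{\poplossrh}^n}\big[e^{n\Delta(\bar\rvx,\poplossrh)}\big]\le\gIDn.
\]
Averaging over $\rh\sim\prior$ yields $\E_{\rvz}[Z]\le\gIDn$. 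Chaining the three displays, on an event of probability at least $1-\delta$ that does not depend on $\posterior$ we obtain $n\Delta(\trainlossQ,\poplossQ)\le\KL(\posterior\Vert\prior)+\ln(\gIDn/\delta)$ simultaneously for all $\posterior\ll\prior$; dividing by $n$ gives \cref{eq:thm-cgf-generic-pac-bayes}.

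The main obstacle — and the reason the order of steps is what it is — is that the high-probability event must be chosen independently of the data-dependent posterior $\posterior$. Applying Donsker--Varadhan first isolates the $\posterior$-free quantity $Z(\rvz)$, to which Markov's inequality can then be applied cleanly; reversing the order would require an invalid union bound over all $\posterior$. The remaining points — measurability and integrability of $Z$, and the verification that $n\Delta$ is admissible — are routine and parallel the corresponding steps in the proof of \cref{thm:cgf-generic-average}.
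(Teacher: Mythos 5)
Your proof is correct and follows exactly the paper's own route: Jensen's inequality via convexity of $\Delta$, then Donsker--Varadhan to isolate the $\posterior$-free quantity $Z(\rvz)=\E_{\rh\sim\prior}[e^{n\Delta(\trainlossrh,\poplossrh)}]$, then Markov's inequality, then bounding $\E_{\rvz}[Z]$ by $\gIDn$ via the membership condition \cref{eq:bounding-function-condition} (where the paper reuses the chain \cref{eq:genericaveragepfmiddle}--\cref{eq:genericaveragepflast} and you spell out the Fubini/supremum step instead). The only cosmetic difference is that you are slightly more explicit about why the Markov event can be chosen independently of $\posterior$ and about the (tacit) scale-invariance needed to apply the $\funcspace$ condition to $n\Delta$; the substance matches.
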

The bound in~\cref{eq:thm-cgf-generic-pac-bayes} is similar to the generic PAC-Bayesian bound from \citet{rivasplata-20a}, but with a more explicit bound on the CGF term therein.
For $\sL=[0,1]$ and $\bounddistrospace$ being the Bernoulli distributions, \cref{thm:bounded-generic-pac-bayes} is recovered as a special case.

\subsection{The Near-Optimal Comparator}

We are now ready to present a characterization of the near-optimal bound obtainable via \cref{thm:cgf-generic-pac-bayes}.
Specifically, in \cref{eq:cgf-generic-pac-bayes-best-lower}, we state a lower limit on the bound that can be obtained from \cref{thm:cgf-generic-pac-bayes} in terms of the Cram\'er function.
Then, in \cref{eq:cgf-generic-pac-bayes-best-upper-parametric}, we derive a parametric bound, which is used to obtain explicit bounds in terms of the Cram\'er function in \cref{eq:cgf-generic-pac-bayes-best-upper,eq:cgf-generic-pac-bayes-best-ufunc-def,eq:cgf-generic-pac-bayes-best-upper-un}.

\begin{restatable}{theorem}{bestcomparator}\label{thm:best-comparator-cgf-pac-bayes}
Assume that the loss is sub-$(\bounddistrospace,\tspace)$.
Then, for any $\Delta\in\funcspace$ in \cref{thm:cgf-generic-pac-bayes},
\begin{equation}
B^{\Deltacram}_{n}(\trainlossQ,\KLterm,1) \leq B^{\Delta}_{n}(\trainlossQ,\KLterm,\gID) . \label{eq:cgf-generic-pac-bayes-best-lower}
\end{equation}
Furthermore, with $\gIDcramshort := \gIDcram$, we have
\begin{equation}\label{eq:cgf-generic-pac-bayes-best-upper-chernoff}
    \poplossQ \leq B^{\Deltacram}_{n} \big(\trainlossQ,\KLterm,\gIDcramshort\big) .
\end{equation}
Finally, for all $t\in\tspace_p$, let $\Delta^t_{\bounddistrospace}(q,p)=tq - \Psifunca{p}(t)$.
Then, for any fixed $t$, we have
\begin{equation}\label{eq:cgf-generic-pac-bayes-best-upper-parametric}
    \poplossQ \leq B^{\Delta^t_{\bounddistrospace}}_{n} \big(\trainlossQ,\KLterm,1\big) .
\end{equation}
\end{restatable}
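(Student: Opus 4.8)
I would obtain all three displays from \cref{thm:cgf-generic-pac-bayes}, applied to well-chosen comparators, together with one pointwise comparison inequality proved by exponential tilting; the two validity statements \eqref{eq:cgf-generic-pac-bayes-best-upper-parametric} and \eqref{eq:cgf-generic-pac-bayes-best-upper-chernoff} are short, while \eqref{eq:cgf-generic-pac-bayes-best-lower} carries the content.

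For the parametric bound \eqref{eq:cgf-generic-pac-bayes-best-upper-parametric}, fix $t$ and apply \cref{thm:cgf-generic-pac-bayes} with $\Delta=\Delta^t_{\bounddistrospace}$. That $\Delta^t_{\bounddistrospace}\in\funcspace$ for $t\in\tspace_p$ is exactly the observation made after \cref{def:sub-p}, namely that the affine-in-$q$ function $\flintg$ with $g(p)=-\Psifunca{p}(t)$ lies in $\funcspace$ under the sub-$(\bounddistrospace,\tspace)$ assumption. Using $n\Delta^t_{\bounddistrospace}(\bar\rvx,r)=t\sum_i\rx_i-n\Psifunca{r}(t)$ and independence, the constant attached to \cref{thm:cgf-generic-pac-bayes} evaluates to
\[
 \sup_{r\in\sL}\E_{\rvx\sim\distra{r}^n}\big[e^{\,n\Delta^t_{\bounddistrospace}(\bar\rvx,r)}\big]
 \;=\;\sup_{r\in\sL}\Big(e^{-n\Psifunca{r}(t)}\,\E_{\rvx\sim\distra{r}^n}\big[e^{\,t\sum_i\rx_i}\big]\Big)
 \;=\;\sup_{r\in\sL}\big(e^{-n\Psifunca{r}(t)}e^{\,n\Psifunca{r}(t)}\big)\;=\;1 ,
\]
and numerically inverting the resulting bound yields \eqref{eq:cgf-generic-pac-bayes-best-upper-parametric}. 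For \eqref{eq:cgf-generic-pac-bayes-best-upper-chernoff}, apply \cref{thm:cgf-generic-pac-bayes} instead with $\Delta=\Deltacram$: since $\Deltacram(q,p)=\sup_{t\in\tspace_p}\Delta^t_{\bounddistrospace}(q,p)$ is a pointwise supremum of functions in $\convfuncs$, it is proper, convex and lower semicontinuous (and nonnegative, by taking $t=0$), so $\Deltacram\in\funcspace$; the corresponding constant is $\sup_{r\in\sL}\E_{\rvx\sim\distra{r}^n}[e^{n\Deltacram(\bar\rvx,r)}]=\gIDcramshort$ by definition, which gives \eqref{eq:cgf-generic-pac-bayes-best-upper-chernoff}.

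The comparison \eqref{eq:cgf-generic-pac-bayes-best-lower} I would derive from the pointwise estimate
\[
 \Delta(q,p)\;\leq\;\Deltacram(q,p)+\frac{\ln\gIDn}{n}\qquad\text{for every }\Delta\in\funcspace .
\]
To prove it, fix $q,p$, let $t^{\star}$ attain $\Deltacram(q,p)=\sup_t\{tq-\Psifunca{p}(t)\}$ (so $\Psifunca{p}'(t^{\star})=q$), and let $\distra{p}^{(t^{\star})}$ denote the exponential tilt of $\distra{p}$ at $t^{\star}$, which has mean $q$. Bounding the supremum defining $\gIDn$ below by its $r=p$ term and changing measure from $\distra{p}^n$ to $\distra{p}^{(t^{\star}),n}$,
\begin{align*}
 \gIDn &\geq \E_{\rvx\sim\distra{p}^n}\big[e^{\,n\Delta(\bar\rvx,p)}\big]
 = \E_{\rvx\sim\distra{p}^{(t^{\star}),n}}\big[e^{\,n\Delta(\bar\rvx,p)-n t^{\star}\bar\rvx+n\Psifunca{p}(t^{\star})}\big] \\
 &\geq e^{\,n\Delta(q,p)-n t^{\star}q+n\Psifunca{p}(t^{\star})} = e^{\,n(\Delta(q,p)-\Deltacram(q,p))} ,
\end{align*}
where the last inequality is Jensen's inequality, applied first to $\exp$ and then---since $\E_{\distra{p}^{(t^{\star}),n}}[\bar\rvx]=q$ and $\Delta(\cdot,p)$ is convex---to $\Delta$. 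Taking logarithms gives the estimate. Consequently, whenever $\Deltacram(\trainlossQ,\rho)\leq(\KLterm+\ln\tfrac1\delta)/n$ one also has $\Delta(\trainlossQ,\rho)\leq(\KLterm+\ln(\gIDn/\delta))/n$, so the sublevel set over $\rho\in\sL$ defining $B^{\Deltacram}_{n}(\trainlossQ,\KLterm,1)$ is contained in the one defining $B^{\Delta}_{n}(\trainlossQ,\KLterm,\gID)$; taking suprema yields \eqref{eq:cgf-generic-pac-bayes-best-lower}.

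The main obstacle is the tilting step: one must know the maximizer $t^{\star}$ exists and lies where $\Psifunca{p}$ is finite---equivalently, that $q$ is interior to the convex hull of the support of $\distra{p}$---and dispatch the boundary cases ($\Deltacram(q,p)=+\infty$, or $q$ extremal) by approximation; for unbounded-support families this also bears on \eqref{eq:cgf-generic-pac-bayes-best-upper-chernoff}, where $\gIDcramshort$ may be infinite (\eg for Gaussian bounding families), leaving only \eqref{eq:cgf-generic-pac-bayes-best-upper-parametric} informative there. A secondary point is verifying that $\Delta^t_{\bounddistrospace}$ and $\Deltacram$ genuinely lie in $\funcspace$---that $p\mapsto-\Psifunca{p}(t)$ is convex and that the CGF-domination \eqref{eq:bounding-function-condition} holds for them---which I would settle through the characterization of $\funcspace$ under the sub-$(\bounddistrospace,\tspace)$ assumption (it is automatic, with $\funcspace=\convfuncs$, in the bounded case).
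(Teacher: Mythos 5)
Your derivations of \eqref{eq:cgf-generic-pac-bayes-best-upper-chernoff} and \eqref{eq:cgf-generic-pac-bayes-best-upper-parametric} coincide with the paper's (direct application of \cref{thm:cgf-generic-pac-bayes} with $\Delta=\Deltacram$, resp.\ the observation that $\gIDtwoarg{\Delta^t_{\bounddistrospace}}{\bounddistrospace}=1$). For the comparison \eqref{eq:cgf-generic-pac-bayes-best-lower}, however, you take a genuinely different route. The paper (via the proof of \cref{thm:best-comparator-cgf-average}) argues by Fenchel duality: it writes $\Delta=\Delta^{**}$, pushes the resulting linear functional inside the expectation to obtain $\frac{1}{n}\ln\gIDn\geq A:=\sup_{c_q,c_p}\{-\Delta^*(c_q,c_p)+\ffunc^*_{c_q}(c_p)\}$ with $\ffunc_q(r)=-\Psifunca{r}(q)$, and then constructs a majorant $\tilde\Delta$ via $\tilde\Delta^*=-A+\ffunc^*_{c_q}(c_p)$, showing $\tilde\Delta=A+\Deltacram\geq\Delta$ by the order-reversing property of conjugation. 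You instead prove the equivalent pointwise estimate $\Delta(q,p)\leq\Deltacram(q,p)+\frac{1}{n}\ln\gIDn$ directly, by lower-bounding the $r=p$ term of $\gIDn$, changing measure to the exponential tilt $\distra{p}^{(t^\star)}$ centered so that its mean is $q$, and applying Jensen twice. Both arguments yield the same conclusion; yours is shorter and more probabilistic (it makes explicit the large-deviations mechanism behind the Cram\'er function), at the price of requiring the maximizer $t^\star$ to exist and to lie in the finiteness domain of $\Psifunca{p}$, with boundary cases (non-attainment, $\Deltacram(q,p)=\infty$, $\gIDn=\infty$) handled separately --- as you yourself flag. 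The paper's biconjugation argument handles these cases uniformly because it never needs a maximizing $t$, though it relies implicitly on closedness of $\ffunc_{c_q}$ for the identity $\ffunc^{**}_{c_q}=\ffunc_{c_q}$. Your secondary caveat --- verifying $\Delta^t_{\bounddistrospace},\Deltacram\in\funcspace$ --- is also shared by the paper, which treats the former as immediate from \cref{def:sub-p} and leaves the latter implicit (cf.\ \cref{propo:cgf-implies-funcspace}), so it is fair to regard it as a hypothesis already absorbed into $\funcspace$.
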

Here,~\cref{eq:cgf-generic-pac-bayes-best-lower} demonstrates that no choice of $\Delta$ leads to a tighter bound than what is obtained with $\Deltacram$, provided that $\gIDcramshort$ is replaced by $1$.
This is analogous to \citet[Thm.~4]{foong-21a}, with the crucial difference that~\cref{eq:cgf-generic-pac-bayes-best-lower} holds beyond bounded losses.
While \cref{eq:cgf-generic-pac-bayes-best-lower} is not shown to be a valid generalization bound,~\cref{eq:cgf-generic-pac-bayes-best-upper-chernoff} provides a valid bound in terms of $\gIDcramshort$.
Hence, the result in \cref{thm:best-comparator-cgf-pac-bayes} demonstrates that, potentially up to the $\gIDcramshort$-dependent term, the optimal bound on $\poplossQ$ obtainable from \cref{thm:cgf-generic-pac-bayes} is obtained by setting the comparator to be the Cram\'er function.
For the special case of bounded losses, \cref{eq:cgf-generic-pac-bayes-best-upper-chernoff} reduces to the MLS bound in \cref{eq:mls-bound}, while \cref{eq:cgf-generic-pac-bayes-best-upper-parametric} reduces to the Catoni bound in \cref{eq:catoni-bound}.

Next, we use \cref{eq:cgf-generic-pac-bayes-best-upper-parametric} to obtain upper bounds in terms of $\Deltacram$, but with explicit expressions in place of~$\gIDcramshort$.
\begin{restatable}{corollary}{unionbasedbounds}\label{cor:union-based-bounds}
    Assume that $\KLterm \leq u(n)$ or that $n\trainlossQ \leq u(n)$ for a function $u:\sN\rightarrow \sR^+$.
    Then, we have
\begin{equation}\label{eq:cgf-generic-pac-bayes-best-upper-un}
   \!\! \poplossQ \leq B^{\Deltacram}_{n} \big(\trainlossQ,\KLterm, 2e\lceil u\rceil \big) .
\end{equation}
For any value of $\KLterm$ and $\trainlossQ$, we have
\begin{equation}\label{eq:cgf-generic-pac-bayes-best-upper}
    \poplossQ \leq B^{\Deltacram}_{n} \big(\trainlossQ,\KLterm, \ufunc \big) 
\end{equation}
where
\begin{equation}\label{eq:cgf-generic-pac-bayes-best-ufunc-def}
    \ufunc = \frac{\pi^2  (1+\min\{n\trainlossQ, \KLterm\})^2}{3} .
\end{equation}
\end{restatable}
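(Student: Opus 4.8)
The plan is to realize the Cramér comparator through a union bound over a countable grid of the parameter $t$, using the single-parameter bound \cref{eq:cgf-generic-pac-bayes-best-upper-parametric} as the building block. Recall that $\Delta^t_{\bounddistrospace}(q,p)=tq-\Psifunca{p}(t)$ and that, by definition of the convex conjugate, $\Deltacram(q,p)=\sup_{t\in\tspace_p}\Delta^t_{\bounddistrospace}(q,p)$. For each fixed $t$, \cref{eq:cgf-generic-pac-bayes-best-upper-parametric} gives $\poplossQ\leq B^{\Delta^t_{\bounddistrospace}}_{n}(\trainlossQ,\KLterm,1)$ with probability $1-\delta$, simultaneously over all $\posterior\ll\prior$. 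Fix a grid $\{t_j\}$ with positive weights $w_j$ satisfying $\sum_j w_j\leq 1$; applying \cref{eq:cgf-generic-pac-bayes-best-upper-parametric} with $\delta$ replaced by $w_j\delta$ for each $t_j$ and taking a union bound over $j$ shows that, with probability $1-\delta$, simultaneously over $j$ and $\posterior$, $\Delta^{t_j}_{\bounddistrospace}(\trainlossQ,\poplossQ)\leq\frac{\KLterm+\ln(1/\delta)+\ln(1/w_j)}{n}$, equivalently $\poplossQ\leq B^{\Delta^{t_j}_{\bounddistrospace}}_{n}(\trainlossQ,\KLterm,1/w_j)$; the inversion in \cref{eq:B-def} is legitimate because $\Delta^{t_j}_{\bounddistrospace}(\trainlossQ,\cdot)$ is monotone for $t_j$ of the sign relevant to upper-bounding $\poplossQ$. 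Minimizing over $j$ then produces a data-dependent bound that approximately attains the supremum defining $\Deltacram$.

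It remains to choose the grid and weights so that this minimum is at most $B^{\Deltacram}_{n}(\trainlossQ,\KLterm,\iota)$ for the claimed $\iota$. I would argue contrapositively: if $\poplossQ$ exceeded $B^{\Deltacram}_{n}(\trainlossQ,\KLterm,\iota)$, then $\Deltacram(\trainlossQ,\poplossQ)$ would exceed the associated threshold, so some $t^\star\in\tspace_{\poplossQ}$ would give $\Delta^{t^\star}_{\bounddistrospace}(\trainlossQ,\poplossQ)$ above that threshold — a contradiction with the grid bound as soon as $t^\star$, or a point approximating it up to $O(1/n)$, lies near a grid point $t_j$ with $1/w_j\leq\iota$. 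The quantitative heart is therefore a confinement estimate for the ``useful'' range of $t$: combining Jensen's inequality $\Psifunca{p}(t)\geq tp$, the monotonicity of $\Psifunca{p}$ in $p$, and the a priori bound $\Deltacram(\trainlossQ,\poplossQ)=O((\KLterm+\ln(1/\delta))/n)$ already implied by the statement, one shows that the maximizer of $t\mapsto\Delta^t_{\bounddistrospace}(\trainlossQ,\poplossQ)$ — or a point approximating it up to $O(1/n)$ — can be taken within a region covered by a grid point whose index is of order $1+\min\{n\trainlossQ,\KLterm\}$, the concavity of $t\mapsto\Delta^t_{\bounddistrospace}$ near its maximizer making the discretization error quadratically small in the grid spacing. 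For \cref{eq:cgf-generic-pac-bayes-best-upper,eq:cgf-generic-pac-bayes-best-ufunc-def} I would take $w_j=6/(\pi^2 j^2)$, so that $1/w_j=\pi^2 j^2/6$, and substituting $j\lesssim 1+\min\{n\trainlossQ,\KLterm\}$ (with the remaining slack absorbed into a factor two) yields $\ufunc=\pi^2(1+\min\{n\trainlossQ,\KLterm\})^2/3$. For \cref{eq:cgf-generic-pac-bayes-best-upper-un}, the hypothesis $\KLterm\leq u(n)$ — or $n\trainlossQ\leq u(n)$ — caps the useful range of $t$ outright, so a uniform grid of order $\lceil u\rceil$ points with equal weights suffices, with the rounding and the $O(1/n)$ discretization error together supplying the extra multiplicative factor $2e$.

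The step I expect to be the main obstacle is exactly this confinement-plus-discretization estimate, since it has to hold uniformly over every admissible bounding family $\bounddistrospace$ and parameter set $\tspace$: this needs lower bounds on $\Psifunca{p}$ strong enough to keep $\Delta^t_{\bounddistrospace}(\trainlossQ,p)$ from being large when $t$ leaves the controlled interval, together with a careful treatment of the regime where $\trainlossQ$ is tiny, in which the maximizing $t$ is large in magnitude yet the comparator is still well approximated by a moderate-index grid point. Everything else — the union bound, the monotone inversion defining $B^{\Delta}_{n}$, and the accounting of the constants $2e$ and $\pi^2/3$ — is routine bookkeeping.
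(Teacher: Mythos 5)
Your proposal takes a genuinely different route from the paper's, and it contains a gap precisely at the step you yourself flag. You discretize the parameter $t$ into a grid $\{t_j\}$ with weights $w_j = 6/(\pi^2 j^2)$, union-bound over the grid, and then try to argue that the maximizer of $t\mapsto t\trainlossQ - \Psifunca{\poplossQ}(t)$ lies near a grid point whose index is $O\!\left(1+\min\{n\trainlossQ,\KLterm\}\right)$, with the curvature of $\Psifunca{p}$ controlling the discretization error. The paper never discretizes $t$. It instead starts from the single-parameter bound \cref{eq:cgf-generic-pac-bayes-best-upper-parametric}, rewrites it using the integer ceiling $\rk = \lceil\KLterm\rceil$ (respectively $\rs = \lceil n\trainlossQ\rceil$) as
\begin{equation*}
  \trainlossQ \leq \frac{\rk + \ln(1/\delta)}{n t} + \frac{\Psifunca{\poplossQ}(t)}{t},
\end{equation*}
conditions on the discrete outcome $\rk=k'$ so that the infimum over $t$ can be taken on the right-hand side, and then applies the union bound over the integer values of $\rk$: a uniform bound over $1,\dots,\lceil u(n)\rceil$ for \cref{eq:cgf-generic-pac-bayes-best-upper-un}, and the weighted bound $\delta\mapsto 6\delta/(\pi^2 k'^2)$ over $\sN_+$ for \cref{eq:cgf-generic-pac-bayes-best-upper}. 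The constant $e$ comes from $\rk\leq \KLterm+1$, the squared $(1+\min\{n\trainlossQ,\KLterm\})$ from the weight $6/(\pi^2 k'^2)$, and the overall factor $2$ from combining the $\KLterm$ and $n\trainlossQ$ branches via one more union bound. No grid over $t$, no grid spacing, and no confinement estimate appear.

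The confinement-plus-discretization estimate is a real gap in your plan, not merely a technicality to be ironed out. The maximizer $t^\star$ satisfies the stationarity condition $\Psifunca{\poplossQ}'(t^\star) = \trainlossQ$, so it depends jointly on the random pair $(\trainlossQ,\poplossQ)$ through the specific shape of $\Psifunca{p}$, which differs drastically across the families treated in the paper (Bernoulli, Gaussian, Poisson, gamma, Laplace, inverse Gaussian, negative binomial). There is no universal, family-independent relationship between the location of $t^\star$ and $\min\{n\trainlossQ,\KLterm\}$ that would let a single fixed grid with $1/w_j = \pi^2 j^2/6$ always land within $O(1/n)$ of $t^\star$ at an index $j\lesssim 1+\min\{n\trainlossQ,\KLterm\}$; for small $\trainlossQ$ the maximizer can be large in magnitude or sit at the boundary of $\tspace_p$, and the concavity-based quadratic error estimate you invoke would need uniform curvature that is not assumed. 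You would effectively have to re-derive, case by case, the very structure that the paper's ceiling-of-$\KLterm$ device provides in one stroke. As written, the proposal does not establish \cref{eq:cgf-generic-pac-bayes-best-upper-un}, \cref{eq:cgf-generic-pac-bayes-best-upper}, or \cref{eq:cgf-generic-pac-bayes-best-ufunc-def}.
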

The bound in \cref{eq:cgf-generic-pac-bayes-best-upper-un} is essentially a variation of \cref{eq:cgf-generic-pac-bayes-best-upper-chernoff}.
To shed light on this comparison, consider the bounded loss case.
Specifically, if $\sL=[0,1]$ and $\Delta(q,p)\leq 1$ for all $q,p\in\sL$---as is the case for \cref{eq:mls-bound}---it is sufficient to consider~$u(n)=n$, since the boundedness of the loss implies $n\trainlossQ\leq n$.
Thus, we recover \cref{eq:mls-bound} but with $\ln(2en)/n$ in place of $\ln(\sqrt{2n})/n$.
As argued by \citet{rodriguezgalvez-23a}, $u(n)=n$ is also a reasonable choice for more general settings, as we are mainly interested in cases where $\KLterm/n\rightarrow 0$ as $n\rightarrow\infty$;~otherwise, our bound will not vanish as the number of training data increases. 
Note that the more benign dependence on $n$ in \cref{eq:mls-bound} stems from bounding $\gIDcramshort$ directly in \cref{eq:cgf-generic-pac-bayes-best-upper-chernoff} instead of starting from \cref{eq:cgf-generic-pac-bayes-best-upper-parametric}, with a similar situation for the sub-Gaussian case (cf.\ \citealp[corrected Cor.~2]{hellstrom-20b} and \citealp[Thm.~10]{rodriguezgalvez-23a}).
The upside of \cref{eq:cgf-generic-pac-bayes-best-upper-un} is that it leads to explicit bounds without necessitating a bound on~$\gIDcramshort$.
The bound in \cref{eq:cgf-generic-pac-bayes-best-upper} can potentially be tighter than \cref{eq:cgf-generic-pac-bayes-best-upper-un} if either $\trainlossQ$ or $\KLterm$ are small.
The appearance of the KL term is similar to \citet[Thm.~6]{seldin-12a}, who obtained a similar dependence in a PAC-Bayes bound based on Azuma's inequality, while the bound with the training loss in \cref{eq:cgf-generic-pac-bayes-best-upper} is, to the best of our knowledge, new.
For the bounded loss setting, if the minimum in \cref{eq:cgf-generic-pac-bayes-best-ufunc-def} is 0, we recover \cref{eq:mls-bound} but with $\ln(\pi^2/3)/n$ instead of $\ln(\sqrt{2n})/n$, leading to an improved bound for $n>5$.

\section{Applications}
\label{sec:applications}

So far, we have used the comparator characterization in \cref{thm:best-comparator-cgf-average} and \cref{thm:best-comparator-cgf-pac-bayes} to shed light on the bounded loss case and verify the (near-)optimality of known bounds for sub-Gaussian losses.
We now apply our general techniques to other bounding distributions, and present new explicit generalization bounds.
Specifically, we consider sub-Poissonian, sub-gamma, and sub-Laplacian losses.
As Poisson and gamma distributions are both NEFs, the relevant Cram\'er functions can be expressed as KL divergences, as per \cref{propo:kullback-equality}.
Since Laplace distributions with different first moments do not form a NEF, the relevant Cram\'er function is not a KL divergence for this case.
The average bounds that we present are optimal in the sense of \cref{thm:best-comparator-cgf-average}, while the PAC-Bayesian bounds are near-optimal in the sense of \cref{thm:best-comparator-cgf-pac-bayes}.
In \cref{app:additional-theoretical}, we also present explicit bounds for sub-inverse Gaussian and sub-negative binomial losses.

\subsection{Sub-Poissonian Losses}

We begin by considering losses that are sub-$\bounddistrospacepoi$, with $\bounddistrospacepoi$ being the set of Poisson distributions:
\begin{equation}\label{eq:poisson-distro}
    \bounddistrospacepoi = \{  \poisson(\mu): \mu \in \sR^+ \} .
\end{equation}
With this, we obtain the following.
\begin{restatable}{corollary}{poissonbound}\label{cor:poisson-bounds}
Assume that the loss is sub-$\bounddistrospacepoi$, as defined in~\cref{eq:poisson-distro}.
Define $\Deltacramarg{\bounddistrospacepoi}$ as
\begin{align}\label{eq:poisson-kl}
\Deltacramarg{\bounddistrospacepoi}(q,p) &= \KL\big(\poisson(q) \Vert \poisson(p)  \big) \\
&= p-q+q\ln\frac{q}{p}.
\end{align}
Then, we have the average bound
\begin{equation}\label{eq:subpoisson-average}
    \Deltacramarg{\bounddistrospacepoi}( \trainlossQavg , \poplossQavg )
    \leq \frac{ \KL( \posterior \datadistro^n \Vert \prior  \datadistro^n ) }{n} .
\end{equation}
Furthermore, with probability $1-\delta$, we have the PAC-Bayesian bound, with $\ufunc$ as defined in \cref{eq:cgf-generic-pac-bayes-best-ufunc-def},
\begin{equation}\label{eq:subpoisson-pac-bayes}
    \Deltacramarg{\bounddistrospacepoi}(\trainlossQ,\poplossQ) \leq \frac{\KLterm + \ln \frac{\Xi}{\delta} }{n}.
\end{equation}
\end{restatable}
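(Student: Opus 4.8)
The plan is to specialize \cref{thm:best-comparator-cgf-average} and \cref{cor:union-based-bounds} to $\bounddistrospace=\bounddistrospacepoi$, which requires only that we first identify the associated Cram\'er function in closed form. Since $\bounddistrospacepoi$ is a NEF whose expectation parameter equals the Poisson mean, \cref{propo:kullback-equality} gives $\Deltacramarg{\bounddistrospacepoi}(q,p)=\KL(\poisson(q)\Vert\poisson(p))$. Expanding this KL divergence as a sum over $\rk\in\sN$ and using $\ln\frac{e^{-q}q^{\rk}/\rk!}{e^{-p}p^{\rk}/\rk!}=(p-q)+\rk\ln(q/p)$ together with $\E_{\rk\sim\poisson(q)}[\rk]=q$ yields $p-q+q\ln(q/p)$, which is \cref{eq:poisson-kl}. (The same expression follows directly from the Poisson CGF $\Psifunca{p}(t)=p(e^{t}-1)$ and its convex conjugate $\Psifunca{p}^{*}(q)=\sup_{t\in\sR}\{tq-p(e^{t}-1)\}$, whose maximizer is $t=\ln(q/p)$ for $q>0$ and whose value at $q=0$ equals $p$.) Here $\sL=\sR^{+}$ and $\tspace_{r}=\sR$, so that sub-$\bounddistrospacepoi$ is exactly the sub-$(\bounddistrospace,\tspace)$ hypothesis of \cref{def:sub-p} with no restriction on $t$.

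For the average bound in \cref{eq:subpoisson-average}, I would run the argument behind \cref{thm:best-comparator-cgf-average} specialized to $\bounddistrospacepoi$. Because the loss is sub-$\bounddistrospacepoi$, every function of the form $\flintg(q,p)=tq-\Psifunca{p}(t)$ lies in $\funcspace$ (it is linear in $q$ plus a function of $p$, as noted after \cref{def:sub-p}), so \cref{thm:cgf-generic-average} applies with $\Delta=\flintg$; its normalizing constant is $\gIDn=\sup_{r\in\sL}e^{-n\Psifunca{r}(t)}\,\E_{\rvx\sim\poisson(r)^{n}}[e^{t\sum_{i}\rx_{i}}]=1$, since $\E_{\rvx\sim\poisson(r)^{n}}[e^{t\sum_{i}\rx_{i}}]=e^{n\Psifunca{r}(t)}$ cancels the first factor. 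This gives $t\,\trainlossQavg-\Psifunca{\poplossQavg}(t)\le\KLtermavg/n$ for every $t\in\sR$, and taking the supremum over $t$ produces $\Deltacramarg{\bounddistrospacepoi}(\trainlossQavg,\poplossQavg)\le\KLtermavg/n$, which is \cref{eq:subpoisson-average}.

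For the PAC-Bayesian bound in \cref{eq:subpoisson-pac-bayes}, I would invoke \cref{cor:union-based-bounds}: since the loss is sub-$\bounddistrospacepoi$, \cref{eq:cgf-generic-pac-bayes-best-upper} with $\ufunc$ as in \cref{eq:cgf-generic-pac-bayes-best-ufunc-def} holds, and substituting the explicit $\Deltacramarg{\bounddistrospacepoi}$ into it gives \cref{eq:subpoisson-pac-bayes}. Concretely, the intermediate estimate established in the proof of \cref{eq:cgf-generic-pac-bayes-best-upper}---a union bound over a countable grid of $t$ applied to the parametric inequalities $t\,\trainlossQ-\Psifunca{\poplossQ}(t)\le(\KLterm+\ln(\cdot/\delta))/n$ of \cref{eq:cgf-generic-pac-bayes-best-upper-parametric}, followed by passing to the supremum over $t$---is precisely $\Deltacramarg{\bounddistrospacepoi}(\trainlossQ,\poplossQ)\le(\KLterm+\ln(\ufunc/\delta))/n$ once $\Deltacram$ is identified with $\Deltacramarg{\bounddistrospacepoi}$.

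I do not expect a genuine obstacle here: the corollary is a direct instantiation of machinery proved earlier. The two points needing care are (i) carrying out the Poisson Cram\'er-function computation cleanly, in particular the boundary case $q=0$ and the fact that the relevant support $\sL=\sR^{+}$ is unbounded (so that, unlike the bounded case, $\ufunc$ cannot be further simplified via $n\trainlossQ\le n$), and (ii) the bookkeeping when translating between the numerical-inversion form $B^{\Deltacramarg{\bounddistrospacepoi}}_{n}(\cdots)$ delivered by \cref{cor:union-based-bounds} and the implicit display in \cref{eq:subpoisson-pac-bayes}, which relies on $p\mapsto\Deltacramarg{\bounddistrospacepoi}(q,p)$ being convex with minimum $0$ at $p=q$ and increasing for $p\ge q$.
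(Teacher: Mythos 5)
Your proof is correct and follows essentially the same route as the paper: identify $\Deltacramarg{\bounddistrospacepoi}$ as the Poisson KL divergence via \cref{propo:kullback-equality} (since $\bounddistrospacepoi$ is a NEF), then instantiate \cref{thm:best-comparator-cgf-average} for the average bound and \cref{cor:union-based-bounds}, specifically \cref{eq:cgf-generic-pac-bayes-best-upper}, for the PAC-Bayesian bound. Your explicit re-derivation of the parametric step behind the average bound and your direct computation of the Cram\'er function are verifications of cited results rather than a different approach. One small slip in your description of \cref{cor:union-based-bounds}: the union bound there is not over a countable grid of $t$, but over the integer ceiling of $\KLterm$ (respectively $n\trainlossQ$), with $t$ optimized conditionally on each such outcome; this does not affect the validity of citing that corollary.
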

For sub-Poissonian losses, \cref{eq:cgf-generic-pac-bayes-best-upper-chernoff} does not yield a finite bound, since $\gIDcramshortarg{\bounddistrospacepoi}$ is unbounded.
This demonstrates the usefulness of \cref{cor:union-based-bounds}, as it allows for finite tail bounds in terms of the near-optimal comparator, despite this unboundedness.

The bounds in terms of $\Deltacramarg{\bounddistrospacepoi}$ admit a closed-form solution.
Specifically, we have that
\begin{align}
    \Bavg^{\Deltacramarg{\bounddistrospacepoi}}_{n}(\alpha,\beta,1) = \alpha W\bigg(e^{1-\frac{\beta}{n\alpha}}\bigg) ,
\end{align}
where $W(\cdot)$ denotes the Lambert $W$ function.

One can also derive a bound based on the comparator $\DeltatPoi(q,p)=(1-e^{-t})q - tp$, which is chosen to ensure that the CGF is independent of the mean.
We present the resulting bound in the following corollary.
\begin{restatable}{corollary}{subpoissondiffbound}\label{cor:subpoisson-diffbased-bound}
Assume that the loss is sub-$\bounddistrospacepoi$, as defined in~\cref{eq:poisson-distro}.
Then, we have the average bound
\begin{equation}\label{eq:subpoisson-average-diffbased}
\!\!\!\poplossQavg \!\leq \inf_{t>0}\! \left\{ \frac{t\trainlossQavg}{1\!-\!e^{-t}} \!+\! \frac{\KLtermavg}{(1\!-\!e^{-t})n} \right\}\!.\!
\end{equation}
\end{restatable}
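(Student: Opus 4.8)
The plan is to instantiate the generic average bound \cref{thm:cgf-generic-average} with the comparator $\DeltatPoi$ and then invert the resulting inequality for $\poplossQavg$. Concretely, I would (i) check that $\DeltatPoi$ is an admissible comparator, i.e.\ $\DeltatPoi\in\funcspace$; (ii) evaluate the constant $\gIDn$ for this comparator and show it equals $1$; (iii) substitute into \cref{eq:thm-cgf-generic-average} and solve the resulting (affine) inequality for $\poplossQavg$; (iv) take the infimum over the free parameter $t>0$.

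For step (i): $\DeltatPoi$ is affine in $(q,p)$, hence proper, convex and lower semicontinuous, so $\DeltatPoi\in\convfuncs$. It is moreover of the form $\flintg$ — a linear function of its first argument plus a function of its second — and, since the loss is sub-$\bounddistrospacepoi$ (so $\tspace_r=\sR$ for every $r$), the discussion following \cref{def:sub-p} places every such function in $\funcspace$; hence $\DeltatPoi\in\funcspace$ and \cref{thm:cgf-generic-average} applies for any $\posterior\ll\prior$.

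For step (ii): fix $r\in\sL$ and let $\rvx\sim\distra{r}^n$ with $\distra{r}=\poisson(r)$. Since Poisson distributions are closed under convolution, $n\bar\rvx=\sum_{i=1}^n\rx_i\sim\poisson(nr)$, and as $\E[e^{sX}]=\exp(\mu(e^s-1))$ for $X\sim\poisson(\mu)$, the coefficients of $\DeltatPoi$ are tailored so that the $r$-dependent terms in the exponent of $\E_{\rvx\sim\distra{r}^n}\exp(n\DeltatPoi(\bar\rvx,r))$ cancel exactly, leaving the expectation equal to $1$ for every $r$; hence $\gIDn=\sup_{r\in\sL}1=1$. I expect this cancellation to be the only nonroutine ingredient: it is precisely what makes the bound finite for sub-Poissonian losses even though $\sL$ may be unbounded and $\gIDcramshortarg{\bounddistrospacepoi}$ is infinite.

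For steps (iii)–(iv): with $\gIDn=1$, \cref{eq:thm-cgf-generic-average} becomes $\DeltatPoi(\trainlossQavg,\poplossQavg)\le\KLtermavg/n$, which is affine in $\poplossQavg$; since $1-e^{-t}>0$ for $t>0$, isolating $\poplossQavg$ gives $\poplossQavg\le t\trainlossQavg/(1-e^{-t})+\KLtermavg/((1-e^{-t})n)$. Because \cref{thm:cgf-generic-average} is a deterministic statement valid simultaneously for every admissible comparator — the average-setting inequality involves no high-probability event, so there is no union-bound cost in ranging over $t$ — I can then take the infimum over $t>0$, obtaining \cref{eq:subpoisson-average-diffbased}. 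No real obstacle remains; essentially all the content is in the choice of $\DeltatPoi$ that yields $\gIDn=1$.
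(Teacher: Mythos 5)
Your proposal is correct and takes essentially the same route as the paper: instantiate \cref{thm:cgf-generic-average} with the affine, mean-cancelling comparator, verify $\gIDn=1$ via the Poisson MGF, rearrange, and take the infimum over $t$ (which, as you note, is free in the deterministic average setting). One small caution: as written in the corollary the paper's formula $\DeltatPoi(q,p)=(1-e^{-t})q-tp$ has the roles of $q$ and $p$ swapped relative to what the proof actually uses -- the comparator that makes $\gIDn=1$ and yields an \emph{upper} bound on $\poplossQavg$ is $\Delta(q,p)=(1-e^{-t})p-tq$, and your step (iii) implicitly uses this corrected form, so you should make that explicit (and actually carry out the one-line MGF computation $\E_{\rx\sim\poisson(r)}[e^{(1-e^{-t})r-t\rx}]=1$ rather than only asserting the cancellation).
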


\subsection{Sub-Gamma Losses}

We now turn to sub-gamma losses with fixed shape parameter $k$, which can be viewed as being sub-$(\bounddistrospacegamma,\tspace^\Gamma)$ with $\tspace^\Gamma_\mu = [0,k/\mu)$ and
\begin{equation}\label{eq:sub-gamma-def}
    \bounddistrospacegamma = \big\{ \Gamma(k, \mu/k): \mu\in\sR \big\}.
\end{equation}
Since the mean of a gamma distribution is the product of its parameters, $\mu$ above is indeed the mean.
Note that sub-gamma random variables are often defined in a slightly different way, stated in terms of an upper bound on the CGF of the gamma distribution (cf.\ \citealp[Sec.~2.4]{boucheron-13a}).

Several average information-theoretic generalization and PAC-Bayesian bounds for sub-gamma losses have been considered in the literature, but they are all based on the scaled difference between the training and population loss, \ie, $\Delta_t(q,p) = t(p-q)$ \citep{germain-16b,banerjee-21a,wu-23-arxiv}.
A consequence of this is that, in order to evaluate the bounds, one needs to know \emph{both} parameters of the bounding gamma distribution, which implies that one also has a bound on the mean.
Indeed, the supremum over $r\in\sL$ in the definition of $\gID$ precludes the use of $\Delta_t$, as $\gIDarg{\Delta_t}$ is unbounded.
Here, we instead consider
\begin{align}\label{eq:kl-gamma}
\Deltacramarg{\Gamma}(q,p) &= \KL\bigg( \Gamma\big(k,q/k\big) \, \big\Vert \, \Gamma\big(k,p/k \big) \bigg) \\
&= k\left( \frac{q}{p} - 1 - \ln \frac{q}{p} \right).
\end{align}
With this, we obtain the following bounds, which only depend on the shape factor $k$ in \cref{eq:sub-gamma-def}.

\begin{restatable}{corollary}{subgammabound}\label{cor:subgamma-bounds}
Assume that the loss is sub-$(\bounddistrospacegamma,\tspace^\Gamma)$.
Then, we have the average bound
\begin{equation}\label{eq:subgamma-average}
    \Deltacramarg{\Gamma}( \trainlossQavg , \poplossQavg )\
    \leq \frac{ \KL( \posterior \datadistro^n \Vert \prior  \datadistro^n ) }{n} .
\end{equation}
Furthermore, with probability $1-\delta$, we have the PAC-Bayesian bound
\begin{equation}\label{eq:subgamma-pac-bayes}
    \Deltacramarg{\Gamma}(\trainlossQ,\poplossQ) \leq \frac{\KLterm + \ln \frac{\Xi}{\delta} }{n}.
\end{equation}
\end{restatable}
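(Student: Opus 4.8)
The plan is to derive both displays by specializing the generic results of \cref{sec:average} and \cref{sec:pac-bayes} to the bounding family $\bounddistrospacegamma$ of \cref{eq:sub-gamma-def}; the only genuinely case-specific work is (i) verifying that, for a fixed shape $k$, $\bounddistrospacegamma$ is a natural exponential family whose expectation parameter coincides with the mean $\mu$ used in the parametrization $\Gamma(k,\mu/k)$, (ii) computing the associated Cram\'er function in closed form, and (iii) keeping track of which of the generic upper bounds stays finite for this family.

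I would start from the density of $\Gamma(k,\mu/k)$, which can be written $h(x)\exp(\theta x - g(\theta))$ with natural parameter $\theta = -k/\mu \in (-\infty,0)$, base measure $h(x) = x^{k-1}/\Gamma(k)$, and log-partition $g(\theta) = -k\ln(-\theta)$. Since $g'(\theta) = -k/\theta = \mu$, this is a NEF whose mean (= expectation parameter) is exactly $\mu$; in particular $\distra{r} = \Gamma(k,r/k)$ has first moment $r$, so the hypotheses of \cref{thm:cgf-generic-average} and \cref{def:sub-p} are met with $\bounddistrospace = \bounddistrospacegamma$ and $\tspace = \tspace^\Gamma$. \cref{propo:kullback-equality} then identifies the Cram\'er function $\Deltacramarg{\Gamma} = \Psifunca{p}^*$ with a KL divergence, $\Deltacramarg{\Gamma}(q,p) = \KL\!\big(\Gamma(k,q/k)\,\Vert\,\Gamma(k,p/k)\big)$, and the standard formula for the relative entropy between two gamma laws with common shape $k$ and scales $q/k$, $p/k$ gives $\KL = k\ln(p/q) + k\,q/p - k = k(q/p - 1 - \ln(q/p))$, which is \cref{eq:kl-gamma}. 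One point to treat carefully is that the supremum in \cref{eq:deltacram-def} is over the restricted set $\tspace^\Gamma_p = [0,k/p)$ rather than all of $\sR$, so I would check that the conjugate is still correctly represented by this KL expression on the range of arguments relevant to the inversion step below.

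For the average bound \cref{eq:subgamma-average}, I would simply apply \cref{thm:best-comparator-cgf-average}: since the loss is sub-$(\bounddistrospacegamma,\tspace^\Gamma)$, \cref{eq:avg-optimal-upper-bound} gives $\poplossQavg \le \Bavg^{\Deltacramarg{\Gamma}}_n\!\big(\trainlossQavg,\KLtermavg,1\big)$, \ie, the largest $\rho$ with $\Deltacramarg{\Gamma}(\trainlossQavg,\rho) \le \KLtermavg/n$ (the constant here is $1$, so no $\Upsilon$-type term survives). Since $u \mapsto u - 1 - \ln u$ is strictly monotone on each side of $u = 1$, the map $\rho \mapsto \Deltacramarg{\Gamma}(\trainlossQavg,\rho)$ is monotone on the relevant branch, and inverting $\Bavg$ yields exactly $\Deltacramarg{\Gamma}(\trainlossQavg,\poplossQavg) \le \KLtermavg/n$. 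Alternatively, one can feed the parametric comparator $\Deltatarg{\bounddistrospacegamma}(q,p) = tq - \Psifunca{p}(t)$, which lies in $\funcspace$ and has $\Upsilon$-constant equal to $1$ for every fixed $t \in \tspace^\Gamma$, into \cref{thm:cgf-generic-average} and take a supremum over $t$ to reassemble the conjugate.

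For the PAC-Bayesian bound \cref{eq:subgamma-pac-bayes}, I expect the main obstacle: exactly as in the sub-Poissonian case, the Chernoff-type constant $\gIDcramshortarg{\bounddistrospacegamma}$ is \emph{infinite} for the gamma family, because the upper-tail moment generating function of a gamma law blows up at $t = k/\mu$ and the integral defining it fails to converge; hence \cref{eq:cgf-generic-pac-bayes-best-upper-chernoff} is vacuous here and cannot be used. Instead I would invoke \cref{eq:cgf-generic-pac-bayes-best-upper} of \cref{cor:union-based-bounds}, which is valid for any value of $\KLterm$ and $\trainlossQ$ and asserts $\poplossQ \le B^{\Deltacramarg{\Gamma}}_n\!\big(\trainlossQ,\KLterm,\ufunc\big)$ with $\ufunc$ as in \cref{eq:cgf-generic-pac-bayes-best-ufunc-def}; unpacking the definition of $B$ (again using monotonicity of $\Deltacramarg{\Gamma}(\trainlossQ,\cdot)$ on the relevant branch, and including the $\delta$ term) turns this into $\Deltacramarg{\Gamma}(\trainlossQ,\poplossQ) \le (\KLterm + \ln(\ufunc/\delta))/n$. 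The genuinely delicate parts are thus step (iii) --- recognizing that the clean Chernoff constant diverges for the gamma family so the bound must be routed through the union-bound variant of \cref{cor:union-based-bounds} --- together with the bookkeeping of steps (i)--(ii) needed to legitimize the identification $\Deltacram = \Deltacramarg{\Gamma}$ under the restricted transform domain $\tspace^\Gamma$.
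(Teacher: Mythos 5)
Your proposal is correct and follows essentially the same route as the paper: identify $\bounddistrospacegamma$ as a NEF, apply \cref{propo:kullback-equality} to realize the Cram\'er function as the gamma KL divergence in \cref{eq:kl-gamma}, obtain the average bound by specializing \cref{thm:best-comparator-cgf-average}, and obtain the PAC-Bayesian bound from \cref{eq:cgf-generic-pac-bayes-best-upper} of \cref{cor:union-based-bounds} after observing that $\gIDcramshortarg{\bounddistrospacegamma}$ is unbounded so \cref{eq:cgf-generic-pac-bayes-best-upper-chernoff} is vacuous. You supply explicit details the paper leaves implicit---the NEF parametrization with natural parameter $\theta=-k/\mu$ and log-partition $g(\theta)=-k\ln(-\theta)$, the closed-form KL computation, and the caveat about the restricted transform domain $\tspace^\Gamma_p=[0,k/p)$---but the logical skeleton is identical to the paper's.
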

To the best of our knowledge, \cref{cor:subgamma-bounds} provides the first PAC-Bayesian and information-theoretic generalization bounds for sub-gamma losses that do not require knowledge of both parameters of the bounding distribution.
Note that, since $\gIDcramshortarg{\bounddistrospacegamma}$ is unbounded,
\cref{eq:cgf-generic-pac-bayes-best-upper-chernoff} does not yield a finite bound.

\subsection{Sub-Laplacian Losses}

As a final example, we consider losses that are sub-$(\bounddistrospacelaplace,\tspace^b)$, where $\tspace^b_\mu=[0, 1/b)$ for all $\mu\in\sR$ and
\begin{equation}
    \bounddistrospacelaplace = \{ \laplace(\mu, b): \mu \in \sR \}
\end{equation}
are the Laplace distributions with mean $\mu$ and fixed scale parameter $b$.
Note that the Laplace distributions form an exponential family only if $\mu$ is fixed, and hence, $\bounddistrospacelaplace$ is not a NEF.
Therefore, the optimal comparator is not a KL divergence, but the Cram\'er function can still be computed as
\begin{equation}\label{eq:cramer-laplace}
    \Deltacramarg{\textrm{Lap}}(q,p) = \frac{\sqrt {(q-p)^2+b^2}}{b} - 1 
    + \ln\!\bigg( \frac{2(b\sqrt{(q-p)^2+b^2} -b^2)}{(q-p)^2} \bigg).
\end{equation}
With this, we obtain the following.
\begin{restatable}{corollary}{sublaplacebound}\label{cor:sublaplace-bounds}
Assume that the loss is sub-$(\bounddistrospacelaplace,\tspace^b)$.
Then, we have the average bound
\begin{equation}\label{eq:sublaplace-average}
    \Deltacramarg{\textrm{Lap}}( \trainlossQavg , \poplossQavg )\
    \leq \frac{ \KL( \posterior \datadistro^n \Vert \prior  \datadistro^n ) }{n} .
\end{equation}
\end{restatable}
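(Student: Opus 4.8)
The plan is to instantiate \cref{thm:best-comparator-cgf-average} (more precisely, the upper bound in \cref{eq:avg-optimal-upper-bound}) with $\bounddistrospace = \bounddistrospacelaplace$ and $\tspace = \tspace^b$, and then to carry out two explicit computations: first, that the Cram\'er function $\Deltacramarg{\bounddistrospacelaplace}$ of the Laplace family is exactly the expression claimed in \cref{eq:cramer-laplace}; second, that $\gIDarg{\Deltacramarg{\bounddistrospacelaplace}}(n) = 1$, so that the $\ln\gIDn$ term drops out and \cref{eq:avg-optimal-upper-bound} becomes precisely the numerical inversion stated in \cref{eq:sublaplace-average}. For the first point, I would recall that $\Psifunca{p}(t) = \ln\E_{\rx\sim\laplace(p,b)}[e^{t\rx}] = tp - \ln(1 - b^2 t^2)$ for $t\in[0,1/b)$, so $\Deltacramarg{\bounddistrospacelaplace}(q,p) = \sup_{t\in[0,1/b)}\{t(q-p) + \ln(1-b^2t^2)\}$; differentiating the objective in $t$, setting the derivative to zero gives a quadratic in $t$ whose relevant root is $t^\star = \big(\sqrt{(q-p)^2+b^2} - |q-p|\big)/\big(b^2 \,\mathrm{sign}(q-p)\big)$ (one checks $t^\star\in[0,1/b)$), and substituting back and simplifying yields \cref{eq:cramer-laplace}. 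This is a routine but somewhat fiddly algebraic manipulation, so I would relegate the full simplification to an appendix.

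The key structural step — and the reason the bound has the clean form in \cref{eq:sublaplace-average} — is verifying $\gIDarg{\Deltacramarg{\bounddistrospacelaplace}}(n) = 1$, i.e.
\begin{equation}
    \sup_{r\in\sL}\ \E_{\rvx\sim\laplace(r,b)^n}\!\Big[\exp\!\big(n\,\Deltacramarg{\bounddistrospacelaplace}(\bar\rvx, r)\big)\Big] = 1.
\end{equation}
Here I would invoke the general Chernoff/Cram\'er mechanism: for a NEF or, more generally, any location-type family, the exponential moment of the large-deviation rate function against $n$ i.i.d.\ samples is controlled by $1$. Concretely, since $\Deltacramarg{\bounddistrospacelaplace}(q,r) = \sup_{t}\{t(q-r) - (\Psifunca{r}(t) - tr)\}$ and the centered CGF $\psi_r(t) := \Psifunca{r}(t) - tr = -\ln(1-b^2t^2)$ does not depend on $r$, one has $n\,\Deltacramarg{\bounddistrospacelaplace}(\bar\rvx,r) = \sup_{t}\{t\sum_i(\rx_i - r) - n\psi(t)\} \le t^\dagger\sum_i(\rx_i-r) - n\psi(t^\dagger)$ for any fixed admissible $t^\dagger$; exponentiating, taking expectations, using independence and the definition of the CGF gives $\E[\exp(t^\dagger\sum_i(\rx_i-r))] = e^{n\psi(t^\dagger)}$, hence the expectation in question is $\le 1$ for every $r$, and taking $t^\dagger \to 0$ (or noting $\Deltacramarg{\bounddistrospacelaplace}(r,r)=0$) shows the supremum equals $1$. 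This is essentially the same argument that underlies \cref{eq:cgf-generic-pac-bayes-best-upper-chernoff} restricted to the average case, and I expect it to be the main obstacle only in the sense that one must be careful that the supremum defining the Cram\'er function is attained within $\tspace^b_r = [0,1/b)$ and that the interchange of supremum and expectation is handled by the pointwise bound rather than directly.

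Finally, I would assemble the pieces: \cref{thm:best-comparator-cgf-average} applies because a sub-$(\bounddistrospacelaplace,\tspace^b)$ loss guarantees $\flintg \in \funcspace$ for the relevant linear-in-$q$ functions (as discussed after \cref{def:sub-p}), and $\bounddistrospacelaplace$ satisfies the first-moment hypothesis of \cref{thm:cgf-generic-average} since $\laplace(r,b)$ has mean $r$ for every $r\in\sL$. Plugging $\gIDn = 1$ into \cref{eq:avg-optimal-upper-bound} gives $\poplossQavg \le \Bavg^{\Deltacramarg{\bounddistrospacelaplace}}_n(\trainlossQavg, \KLtermavg, 1)$, and by the definition of $\Bavg$ in \cref{eq:Bhat-def} this is exactly the statement that $\Deltacramarg{\bounddistrospacelaplace}(\trainlossQavg,\poplossQavg) \le \KLtermavg/n$, which — unwinding the notation $\KLtermavg = \KL(\posterior\datadistro^n \Vert \prior\datadistro^n)$ — is \cref{eq:sublaplace-average}. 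The only genuinely new content beyond the general theorems is the closed form \cref{eq:cramer-laplace} for the Laplace Cram\'er function; everything else is specialization.
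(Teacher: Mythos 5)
Your overall strategy---compute the Laplace CGF, compute the associated Cram\'er function to obtain \cref{eq:cramer-laplace}, and then specialize the upper bound \cref{eq:avg-optimal-upper-bound}---is indeed the paper's own route. But two of your steps need correction.

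First, the claimed estimate $\gIDcram(n)=1$ is both wrong and unneeded. It is wrong because the inequality in your Chernoff-style argument is reversed: for any admissible $t^\dagger$, one has $\sup_{t}\{t\sum_i(\rx_i-r)-n\psi(t)\}\geq t^\dagger\sum_i(\rx_i-r)-n\psi(t^\dagger)$, not $\leq$. Exponentiating and taking expectations therefore yields $\gIDcram(n)\geq 1$, and in general the inequality is strict (for the Bernoulli case, the analogous quantity grows like $\sqrt{n}$---this is precisely the source of the $\ln(2\sqrt n)/n$ term in \cref{eq:mls-bound}). You cannot pull $\sup_t$ through the expectation in the favorable direction. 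Fortunately, this estimate is also unnecessary: \cref{eq:avg-optimal-upper-bound} is already stated with $\iota=1$ hard-coded in $\Bavg^{\Deltacram}_{n}$. Its proof applies the generic bound \cref{eq:thm-cgf-generic-average} \emph{not} to $\Deltacram$ itself but to each linear comparator $\Delta(q,p)=tq-\Psifunca{p}(t)$ for fixed $t$ (which has $\gIDn=1$ by construction), and takes the supremum over $t$ only after the expectation has been absorbed. So once sub-$(\bounddistrospacelaplace,\tspace^b)$-ness guarantees $\flintg\in\funcspace$ for all $t\in\tspace^b$, \cref{eq:avg-optimal-upper-bound} directly delivers $\Deltacramarg{\textrm{Lap}}(\trainlossQavg,\poplossQavg)\leq\KLtermavg/n$; no control on $\gIDcram(n)$ is required.

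Second, your critical point is incorrect. Writing $s=q-p$, stationarity of $t\mapsto ts+\ln(1-b^2t^2)$ gives $s(1-b^2t^2)=2b^2t$, i.e., $sb^2t^2+2b^2t-s=0$, whose relevant root is $t^*=(\sqrt{s^2+b^2}-b)/(bs)$, as in the paper's proof---not $(\sqrt{s^2+b^2}-\abs{s})/(b^2\sign(s))$, which coincides with the correct value only when $b=\abs{s}$. Since \cref{eq:cramer-laplace} is obtained by substituting $t^*$ back into the objective, this slip must be fixed in the appendix computation you defer to.
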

While similar PAC-Bayesian results hold, we state only the average result for brevity.

For sub-Laplacian losses, the comparator $\Delta_t(q,p) = t(p-q)$ can also be used, as we show in the following.
\begin{restatable}{corollary}{sublaplacediffbound}\label{cor:sublaplace-diffbased-bound}
Assume that the loss is sub-$(\bounddistrospacelaplace,\tspace^b)$.
Then, we have the average bound
\begin{equation}
    \poplossQavg - \trainlossQavg 
  \!  \leq \!\!\inf_{t\in \! \big(0,\frac{1}{b}\big)} \!\!\bigg\{ \frac{\KLtermavg}{nt} - \frac{\ln(1-b^2t^2)}{t} \bigg\} .\label{eq:sublaplace-average-diffbased}
\end{equation}
\end{restatable}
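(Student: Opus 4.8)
The plan is to instantiate the generic average bound of \cref{thm:cgf-generic-average} with the scaled-difference comparator $\Delta_t(q,p)=t(p-q)$ for a fixed $t\in(0,1/b)$, and then optimize over $t$. The first step is to verify that $\Delta_t$ is an admissible comparator, \ie\ that $\flintg\in\funcspace$ for the relevant choice of $g$. Since $\Delta_t(q,p)=t(p-q)=(-t)q+g(p)$ with $g(p)=tp$, and the loss is assumed sub-$(\bounddistrospacelaplace,\tspace^b)$, the discussion following \cref{def:sub-p} tells us $\flintg\in\funcspace$ precisely when $-t\in\tspace^b_{\poplossh}=[0,1/b)$ --- but note the sign: we actually need the comparator to involve $+t$ times the \emph{population} loss, so applying \cref{eq:bounding-function-condition} to $f(q,p)=tp-tq$ reduces, after pulling out the deterministic factor $e^{tp}$, to a statement about $\E[e^{-t\bar{\rvx}}]$, which by i.i.d.\ tensorization is $\E[e^{-t\rx/n}]^n$; so what is really needed is a CGF bound at the negative argument $-t/n$, or equivalently one works with the symmetric Laplace CGF. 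I would handle this cleanly by noting that the Laplace CGF $\Psi_p(s)=sp-\ln(1-b^2s^2)$ is even in $s$ up to the linear term, so the condition $t\in(0,1/b)$ is exactly what makes everything finite.

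The second step is to compute $\gIDn$ for this comparator. We have
\begin{equation}
\gIDn=\sup_{r\in\sL}\E_{\rvx\sim\distra{r}^n}\exp\!\big(n\Delta_t(\bar{\rvx},r)\big)
=\sup_{r\in\sL}\E_{\rvx\sim\laplace(r,b)^n}\exp\!\Big(t(nr-\textstyle\sum_i\rx_i)\Big).
\end{equation}
By independence this factors as $\prod_{i=1}^n\E_{\rx\sim\laplace(r,b)}e^{t(r-\rx)}=\big(e^{tr}\,\E e^{-t\rx}\big)^n$, and using the Laplace moment generating function $\E_{\rx\sim\laplace(r,b)}e^{s\rx}=e^{sr}/(1-b^2s^2)$ valid for $|s|<1/b$, with $s=-t$, each factor equals $e^{tr}\cdot e^{-tr}/(1-b^2t^2)=1/(1-b^2t^2)$. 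Hence $\gIDn=(1-b^2t^2)^{-n}$, independent of $r$ (this is the whole point of the difference comparator), and crucially \emph{finite} because $t<1/b$.

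The third step is to plug these into \cref{eq:thm-cgf-generic-average}: for the fixed $t$,
\begin{equation}
t\big(\poplossQavg-\trainlossQavg\big)=\Delta_t\big(\trainlossQavg,\poplossQavg\big)\le\frac{\KLtermavg+\ln\gIDn}{n}=\frac{\KLtermavg-n\ln(1-b^2t^2)}{n},
\end{equation}
and dividing by $t>0$ gives $\poplossQavg-\trainlossQavg\le\KLtermavg/(nt)-\ln(1-b^2t^2)/t$. Since this holds for every fixed $t\in(0,1/b)$, and the left-hand side does not depend on $t$, we may take the infimum over $t$ on the right, yielding exactly \cref{eq:sublaplace-average-diffbased}.

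The main obstacle is the bookkeeping around signs and the interval of validity of the Laplace MGF: one must be careful that the argument fed into the Laplace CGF (namely $-t$, scaled appropriately by the tensorization) stays inside $(-1/b,1/b)$, which is why the constraint is $t\in(0,1/b)$ rather than $t>0$, and one must confirm that $\Delta_t$ genuinely lands in $\funcspace$ under the sub-$(\bounddistrospacelaplace,\tspace^b)$ hypothesis --- since $\tspace^b_\mu=[0,1/b)$ only contains nonnegative arguments, the relevant check uses that the Laplace CGF bound, being two-sided by symmetry of the Laplace density, in fact controls both $e^{t\rx}$ and $e^{-t\rx}$ for $t\in[0,1/b)$. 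Everything else is a routine MGF computation and a one-line inversion.
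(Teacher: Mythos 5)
Your proof takes the same route as the paper's: instantiate \cref{thm:cgf-generic-average} with $\Delta_t(q,p)=t(p-q)$, compute $\gIDn=(1-b^2t^2)^{-n}$ from the Laplace MGF (so $\tfrac1n\ln\gIDn=-\ln(1-b^2t^2)$), divide by $t$, and take the infimum. Those computations and the final inversion are all correct.

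The admissibility discussion, however, rests on a flawed heuristic. You are right to notice that $\Delta_t(q,p)=-tq+tp$, so after factoring out $e^{ntp}$ the condition in \cref{eq:bounding-function-condition} reduces to a CGF inequality at the argument $-t$, which lies outside $\tspace^b_\mu=[0,1/b)$ as literally stated; this is a genuine subtlety that the paper's own proof passes over silently. But your fix --- that ``the Laplace CGF bound, being two-sided by symmetry of the Laplace density, in fact controls both $e^{t\rx}$ and $e^{-t\rx}$'' --- does not follow. The symmetry of the \emph{bounding} Laplace distribution tells you nothing about the left tail of the \emph{loss}: the hypothesis $\E_{\rz\sim\datadistro}[e^{t\ell(h,\rz)}]\leq\E_{\rx\sim\laplace(\poplossh,b)}[e^{t\rx}]$ for $t\in[0,1/b)$ is a genuinely one-sided assumption, and a loss whose left tail is much lighter or heavier than Laplace can satisfy it. The actual resolution is that $\tspace^b_\mu$ must be read as $(-1/b,1/b)$, \ie\ the sub-Laplacian hypothesis is two-sided. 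This reading is already forced by \cref{cor:sublaplace-bounds}: for $q<p$, $\sup_{t\in[0,1/b)}\{t(q-p)+\ln(1-b^2t^2)\}=0$ (attained at $t=0$), which is not the expression in \cref{eq:cramer-laplace}; the supremum there is over $|t|<1/b$. With that two-sided reading, $-t\in\tspace^b_\mu$ for $t\in(0,1/b)$ and both your proof and the paper's are valid --- but the justification should be ``the assumption is two-sided,'' not ``the Laplace density is symmetric.''
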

As it turns out, the bound based on \cref{eq:sublaplace-average-diffbased} is \emph{identical} to the one based on \cref{eq:sublaplace-average}.
The reason for this is the particular form of the CGF of the Laplace distribution, as established in the following proposition.
\begin{restatable}{proposition}{exponentialcgf}\label{propo:exponential-cgf}
Assume that the CGF for any distribution $P_r\in \bounddistrospace$ and $t\in\tspace$ can be written as
\begin{equation}\label{eq:condition-exponential-cgf}
   \ln \E_{\rx\sim P_r}[ e^{t\rx} ] = tr + \ln g(t^2),
\end{equation}
where $g(t^2)$ does not depend on the mean $r$.
Then,
\begin{equation}
    \Bavg^{\Deltacram}_{n} \big(\alpha,\beta,1\big) 
 = \inf_t \Bavg^{\Delta_t}_{n} \big(\alpha,\beta,\gIDarg{\Delta_t}\big) .
\end{equation}
\end{restatable}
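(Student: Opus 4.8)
The plan is to show that both sides of the claimed identity equal the same quantity by unfolding the definitions of $\Bavg^{\Delta}_n$ from \cref{eq:Bhat-def} and exploiting the special additive structure \cref{eq:condition-exponential-cgf}. First I would compute $\gIDarg{\Delta_t}$ for the comparator $\Delta_t(q,p)=t(p-q)$. By definition $\gIDarg{\Delta_t}(n)=\sup_{r\in\sL}\E_{\rvx\sim\distra{r}^n}\exp(n\Delta_t(\bar\rvx,r))=\sup_{r\in\sL}\E_{\rvx\sim\distra{r}^n}\exp\big(\sum_{i=1}^n t(r-\rx_i)\big)$. Using independence and \cref{eq:condition-exponential-cgf}, $\E_{\rx\sim P_r}[e^{-t\rx}]=e^{-tr}g(t^2)$, so the product telescopes to $e^{ntr}\cdot e^{-ntr}g(t^2)^n=g(t^2)^n$, which is independent of $r$; hence $\gIDarg{\Delta_t}(n)=g(t^2)^n$. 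Then $\Bavg^{\Delta_t}_n(\alpha,\beta,\gIDarg{\Delta_t})$ is the largest $\rho$ with $t(\rho-\alpha)\le \big(\beta+n\ln g(t^2)\big)/n$, i.e.\ $\rho\le\alpha+\beta/(nt)+\ln g(t^2)/t$, so it equals exactly $\alpha+\beta/(nt)+\ln g(t^2)/t$ (assuming this lies in $\sL$; otherwise it is capped, and I would note the cap is harmless since the left-hand side is also a supremum over $\rho\in\sL$). Taking $\inf_t$ gives $\inf_t\{\alpha+\beta/(nt)+\ln g(t^2)/t\}$.

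Next I would evaluate the left-hand side $\Bavg^{\Deltacram}_n(\alpha,\beta,1)$. Under \cref{eq:condition-exponential-cgf} the CGF is $\Psifunca{p}(t)=tp+\ln g(t^2)$, so by \cref{eq:deltacram-def} the Cram\'er function is $\Deltacram(q,p)=\sup_{t\in\tspace_p}\{t(q-p)-\ln g(t^2)\}$, which depends on $q,p$ only through $q-p$; write $\Deltacram(q,p)=\psi^*(q-p)$ where $\psi(t)=\ln g(t^2)$. By \cref{eq:Bhat-def}, $\Bavg^{\Deltacram}_n(\alpha,\beta,1)=\sup\{\rho:\psi^*(\alpha-\rho)\le\beta/n\}$. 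Since $\psi^*$ is convex with minimum $0$ at $0$ and we seek the largest $\rho$, this is $\rho=\alpha+s$ where $s\ge 0$ is the largest value with $\psi^*(-s)\le\beta/n$; by monotonicity of $\psi^*$ on the relevant branch, $s$ solves $\psi^*(-s)=\beta/n$ (again modulo the $\sL$-cap).

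The core of the argument is then to check that $\inf_{t>0}\{\alpha+\beta/(nt)+\psi(t)/t\}$ equals $\alpha+s^*$ where $\psi^*(-s^*)=\beta/n$. Setting $c=\beta/n$, this is the statement $\inf_{t>0}\frac{c+\psi(t)}{t}=s^*$ where $\psi^*(-s^*)=c$, equivalently $\sup_{t}\{-s^* t-\psi(t)\}=c$. This is precisely the Legendre-duality identity $\inf_{t>0}\frac{c+\psi(t)}{t}=\sup\{u: \psi^*(-u)\le c\}$, which I would verify directly: for any $t>0$ and any $u$ with $\psi^*(-u)\le c$ we have $-ut-\psi(t)\le\psi^*(-u)\le c$, so $u\le (c+\psi(t))/t$, giving "$\ge$"; for "$\le$", at the optimal $t$ the derivative condition pins down the value, or one invokes that $\psi^*$ being the closed convex conjugate makes the inequality tight. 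I expect this duality step --- correctly handling the domains $\tspace_p$, the sign conventions, and the edge cases where the unconstrained optimum falls outside $\sL$ so that the capping in $\Bavg$ activates on both sides simultaneously --- to be the main obstacle; the moment-generating computations are routine. I would close by remarking that when $\bounddistrospacelaplace$ is the Laplace family, $g(t^2)=1/(1-b^2t^2)$, so $\psi(t)=-\ln(1-b^2t^2)$, and the general identity specializes to the equality between \cref{eq:sublaplace-average} and \cref{eq:sublaplace-average-diffbased} asserted in the text.
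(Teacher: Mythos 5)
Your proof follows essentially the same route as the paper's: compute $\gIDarg{\Delta_t}$ from the factorized CGF, unfold $\Bavg^{\Delta_t}_n(\alpha,\beta,\gIDarg{\Delta_t})=\alpha+\beta/(nt)+\ln g(t^2)/t$, and match this against $\Bavg^{\Deltacram}_n(\alpha,\beta,1)$ using the fact that $\Deltacram$ is the convex conjugate of $\psi(t)=\ln g(t^2)$. Where the paper compresses the final step into a one-line ``reorganization'', you spell out the Legendre duality explicitly, which is a welcome improvement in rigor --- but your execution of it has two issues worth flagging.

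First, the ``$\ge$'' direction as written has a sign slip. You start from $-ut-\psi(t)\le\psi^*(-u)\le c$ and conclude $u\le (c+\psi(t))/t$, but $-ut-\psi(t)\le c$ actually gives $u\ge -(c+\psi(t))/t$, which is vacuous. The correct chain is $ut-\psi(t)\le\psi^*(-u)\le c$, where the first inequality uses $\psi^*(-u)=\sup_s\{-us-\psi(s)\}\ge(-u)(-t)-\psi(-t)=ut-\psi(t)$ together with the evenness $\psi(-t)=\psi(t)$ you already noted. That then delivers $u\le(c+\psi(t))/t$ as desired. Second, for the ``$\le$'' direction you appeal to attainment of the supremum defining $\psi^*$, which is not automatic (the optimizing $t$ could escape to the boundary of $\tspace_p$). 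There is a cleaner route that bypasses both attainment and closure subtleties: since $\Deltacram(\alpha,\rho)=\sup_{t}\{t(\alpha-\rho)-\psi(t)\}$, the constraint $\Deltacram(\alpha,\rho)\le\beta/n$ is \emph{by definition} the pointwise constraint ``$t(\alpha-\rho)-\psi(t)\le\beta/n$ for all $t$''. Using evenness of $\psi$, this reduces (for the relevant branch $\rho\ge\alpha$) to the family of affine constraints $\rho\le\alpha+(\beta/n+\psi(t))/t$ indexed by $t>0$, and the supremum of admissible $\rho$ is then tautologically $\alpha+\inf_{t>0}(\beta/n+\psi(t))/t$ --- no duality lemma or attainment argument needed. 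That observation, rather than closedness of $\psi^*$, is what makes both directions exact and is what the paper's ``reorganizing'' quietly relies on. With those two repairs your argument is complete and coincides with the paper's.
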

Since $\bounddistrospacelaplace$ satisfies \cref{eq:condition-exponential-cgf}, the claimed equivalence of the bounds based on \cref{eq:sublaplace-average} and \cref{eq:sublaplace-average-diffbased} follows.

\section{Numerical Evaluation}
\label{sec:numeric}

As established, the bounds based on the Cram\'er function are near-optimal in the sense of \cref{thm:best-comparator-cgf-average,thm:best-comparator-cgf-pac-bayes}.
Still, it is interesting to evaluate their quantitative advantage compared to, \eg, bounds based on the (scaled) difference-comparator.
In this section, we evaluate this discrepancy numerically.
For simplicity, we focus on average bounds, but similar conclusions apply for the PAC-Bayesian case.

It is well-established in the literature that PAC-Bayesian and information-theoretic bounds can give accurate loss estimates and be used to construct learning algorithms for many settings, including neural networks~\citep{langford-01b,ambroladze-06a,dziugaite-17a,neyshabur-18a,letarte-19a,zhou-18a,biggs-21a,biggs-22c,dziugaite-21a,harutyunyan-21a,perez-ortiz-21a,lotfi-22a,biggs-22a,wang-23a}.
Thus, instead of studying any specific setting, we evaluate the bounds while varying the relevant inputs: the training loss $\trainlossQavg$ and the normalized KL divergence, \ie, $\KLtermavg/n$.
This provides a wider perspective, as any specific setting can be identified with a subset of these input values.

\begin{figure}
\centering
\begin{subfigure}{0.43\textwidth}
\centering
    \includegraphics[width=0.99\textwidth]{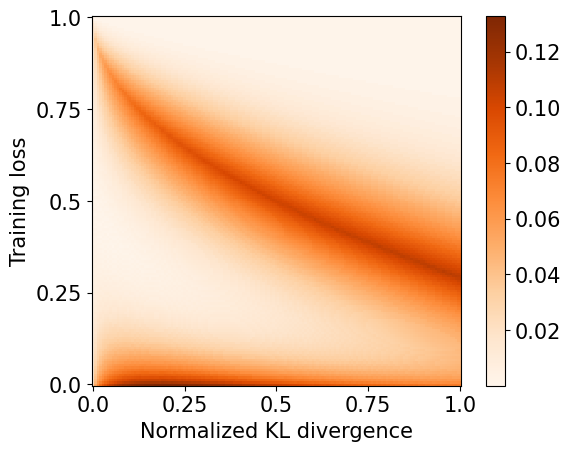}
    \caption{The difference between the binary KL bound on the population loss and the one obtained via sub-Gaussianity for a bounded loss function.}
    \label{fig:subbernoulli}
\end{subfigure}%
\hspace{0.1\textwidth}
\begin{subfigure}{0.43\textwidth}
\centering
    \includegraphics[width=0.99\textwidth]{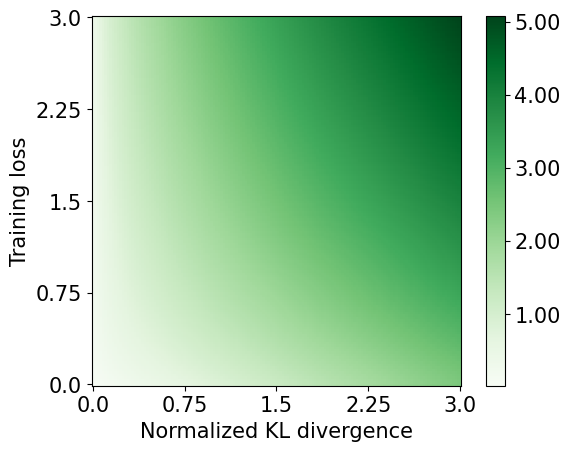}
    \caption{The difference between the sub-Poissonian bound on the population loss obtained via \cref{eq:subpoisson-average-diffbased} and the one obtained via \cref{eq:subpoisson-average}.}
    \label{fig:subpoisson}
\end{subfigure}
\caption{Numerical evaluations for \cref{sec:numeric}.}
\end{figure}

To begin, we consider sub-Bernoulli losses---that is, bounded losses.
As mentioned, the Cram\'er function in this case is the binary KL divergence $\kl(q,p)$, while the difference comparator $\Delta_t(q,p)$ leads to the sub-Gaussian bound (since bounded losses are $1/2$-sub-Gaussian).
To compare these bounds, we evaluate
\begin{equation}\label{eq:bounded-comparison-plot}
\min\big\{1, \big( \alpha + \sqrt{\beta/2n} \big) \big\} - \Bavg^\kl_n(\alpha,\beta,1),
\end{equation}
where $\alpha$ is the training loss and $\beta/n$ is the normalized KL divergence.
This is illustrated in \cref{fig:subbernoulli}.
When both $\alpha$ and $\beta$ are high, both bounds lead to the trivial upper bound of $1$, and are thus equal.
The binary KL bound is most clearly advantageous for small training losses and in the region where the sub-Gaussian bound becomes trivial.

In \cref{fig:subpoisson}, we consider sub-Poissonian losses, and numerically evaluate the discrepancy between the bound based on \cref{eq:subpoisson-average} and the one based on \cref{eq:subpoisson-average-diffbased}, that is,
\begin{equation}\label{eq:subpoisson-discrepancy}
\inf_t \big\{ \Bavg^{\Delta_t}_{n}(\alpha,\beta, \gIDarg{\Delta_t} ) \big\} - \Bavg^{\Deltacramarg{\bounddistrospacepoi}}_{n}(\alpha,\beta,1) .
\end{equation}
Since \cref{eq:subpoisson-average} is optimal, \cref{eq:subpoisson-discrepancy} is non-negative for all values.
The biggest discrepancy arises when the training loss and the normalized KL divergence are both high.
If one removes the minimum in \cref{eq:bounded-comparison-plot}, the same behavior emerges for bounded losses (see \cref{app:additional-numerical}).

For sub-gamma losses, it is unclear how to construct an alternative comparator.
Indeed, for any comparator based on a scaled difference of population and training loss, the CGF depends on the true mean, and is unbounded when taking the supremum.
One approach, taken by \citet{germain-16b}, is to assume that both parameters of the bounding distribution, and hence its mean, are bounded.
However, this necessitates stronger assumptions on the true loss distribution.
In light of this discussion, we simply present the values of the bound based on \cref{eq:subgamma-average} in \cref{app:additional-numerical}, where we also evaluate bounds for other bounding distributions and study the $n$-dependence of the bounds based on the Cram\'er function.
Code for reproducing all of our figures is available on \codelink.

\section{Discussion and Outlook}

In this paper, we studied the optimal comparator function for generalization bounds under CGF constraints.
For PAC-Bayesian bounds, we showed that the bounds in terms of the Cram\'er function are near-optimal up to a logarithmic term.
In a subsequent paper, \citet{casado-24a} showed that this term is always at most logarithmic in $n$, provided that $n$ in the denominator in \cref{eq:thm-cgf-generic-pac-bayes} is replaced by $n-1$.
Whether or not this dependence can be improved to a constant remains an open question which, as discussed by \citet{foong-21a}, is relevant for the small-data regime.
Furthermore, the use of almost exchangeable priors, which gives rise to average bounds in terms of the conditional mutual information, has proven fruitful to obtain tighter bounds for bounded losses~\citep{audibert-04a,catoni-07a,steinke-20a,haghifam-22a}, with some work on improving comparators~\citep{hellstrom-22a}.
Combining this with our techniques may shed further light on the comparator choice.
Finally, while we considered generalization bounds under CGF constraints, this precludes heavy-tailed losses.
Extending our analysis to generalization bounds under moment constraints, for instance, is a promising avenue for future studies.

\subsubsection*{Acknowledgements}
\noindent F.H.\ acknowledges support by the Wallenberg AI, Autonomous Systems and Software Program (WASP) funded by the Knut and Alice Wallenberg Foundation.
B.G.\ acknowledges partial support by the U.S. Army Research Laboratory and the U.S. Army Research Office, and by the U.K. Ministry of Defence and the U.K. Engineering and Physical Sciences Research Council (EPSRC) under grant number EP/R013616/1. B.G. acknowledges partial support from the French National Agency for Research, through grants ANR-18-CE40-0016-01 and ANR-18-CE23-0015-02, and through the programme ``France 2030'' and PEPR IA on grant SHARP ANR-23-PEIA-0008. 

\bibliography{biblio}

\newpage
 \appendix

\section{Useful Facts}
\label{app:useful-facts}

In this section, we summarize the main notation used in the paper and provide some relevant background.

\subsection{Summary of Notation}\label{app:notation}

\renewcommand{\arraystretch}{1.3}
\begin{table}[h]
    \centering
    \begin{tabular}{l l r}
        Notation \;\;\;\;& Definition & First use  \\ \hline 
        $\posterior$ and $\prior$ & Posterior and prior & page 1 \\
        $\sL\subseteq \sR^+$ & Loss range & page 2\\
        $\trainlossh$ & $ \frac1n \sum_{i=1}^n \ell(h,\rz_i)$  & \cref{eq:trainlossh-def} \\
        $\poplossh$ & $\E_{ \rz \sim \datadistro } [ \ell(h, \rz) ]$  & \cref{eq:poplossh-def} \\
        $\trainlossQ$ & $\E_{\rh \sim \posterior}[ \trainlossarg{\rvz}{\rh} ]$  & \cref{eq:trainlossQ-def} \\
        $\poplossQ$ & $\E_{\rh\sim \posterior} [ \poplossarg{\rh} ]$  & \cref{eq:poplossQ-def} \\
        $\IDn$ & $\sup_{r\in[0,1]} \sum_{k=0}^n \binom{n}{k}r^k (1-r)^{n-k} e^{n \Delta(k/n, r)}$ &\cref{eq:IDn-def} \\
        $B^{\Delta}_{n}(\alpha,\beta,\iota)$ \;\;& $\sup_{\rho\in\sL}\big\{ \rho : \Delta(\alpha, \rho) \leq \frac{\beta + \ln\frac{\iota(n)}{\delta} }{n} \big\}$ & \cref{eq:B-def}\\
        $\trainlossQavg$ & $\E_{ \rh , \rvz \sim \posterior  \datadistro^n}[ \trainlossarg{\rz}{\rh} ]$ & \cref{eq:trainlossQavg-def}\\ 
    $\poplossQavg$ & $\E_{ \rh , \rvz \sim \posterior  \datadistro^n } [ \poplossarg{\rh} ]$ & \cref{eq:poplossQavg-def} \\
     $\gIDn $ & $ \sup_{r\in\sL} \E_{ \, \rvx \sim \distra{r}^n} \exp\!\left({n \Delta( \bar\rvx, r )}  \right)$ & Thm. \ref{thm:cgf-generic-average} \\
    $\Psifunca{p}(t)$ & $\ln \E_{X\sim \distra{p}}[e^{tX}]$ & Thm. \ref{thm:best-comparator-cgf-average} \\
    $\Deltacram(q,p)$ & $\Psifunca{p}^*(q)=\sup_{t\in\tspace_p} \big\{ tq -\Psifunca{p}(t) \big\}$ & \cref{eq:deltacram-def} \\
    $\Bavg^{\Delta}_{n}(\alpha,\beta,\iota)$ & $\sup_{\rho\in\sL}\big\{ \rho : \Delta(\alpha, \rho) \leq \frac{\beta + \ln\iota(n) }{n} \big\}$ & \cref{eq:Bhat-def}\\
    $\gIDcramshort$ & $\gIDcram$ & Thm. \ref{thm:best-comparator-cgf-pac-bayes} \\
    $\ufunc$ & $\pi^2  (1+\min\{\trainlossQ, \KLterm\})^2/3$ & \cref{eq:cgf-generic-pac-bayes-best-ufunc-def} \\
    $\ffunc_q(r)$ & $-\Psifunca{r}(q)$ & \cref{eq:phi-def} \\
    $A$ & $\sup_{ c_q,c_p\in \sR} \big\{  - \Delta^*(c_q, c_p)  + \ffunc^*_{c_q}(c_p)  \big\}$ & \cref{eq:A-def}
    \end{tabular}
    \caption{Summary of notation.}
    \label{tab:notation}
\end{table}
For reference, in \cref{tab:notation}, we summarize the main notation used throughout the main paper and the appendix.

\subsection{Information Theory}

We begin by providing some definitions and results from information theory.
More details are available, for instance, in \citet{cover-06a}.
First, we provide the definition of the KL divergence.%

\begin{definition}[KL divergence]\label{def:kl-divergence}
Let $P$ and $Q$ be two distributions such that $P\ll Q$.
Then, the KL divergence between $P$ and $Q$ is, with $\ln \frac{\dv P}{\dv Q}$ denoting the Radon-Nikodym derivative,
\begin{equation}
    \KL(P\Vert Q) = \int \dv P \ln \frac{\dv P}{\dv Q}.
\end{equation}
\end{definition}

\paragraph{\,}

\textcolor{white}{.}\\[-1.1cm]

Note that the KL divergence is non-negative, \ie, $\KL(P\Vert Q)\geq 0$.

If $\rx$ and $\ry$ are random variables with joint distribution $P_{\rx\ry}$ and product of marginals $P_\rx P_\ry$, $\KL(P_{\rx\ry} \Vert P_\rx P_\ry)= \mutualinfo(\rx;\ry)$ is the mutual information between $\rx$ and $\ry$.
The chain rule of mutual information states that, with a third random variable $\rz$, $\mutualinfo(\rx;\ry,\rz)=\mutualinfo(\rx;\rz) + \mutualinfo(\rx;\ry\vert \rz)$, where $\mutualinfo(\rx;\ry\vert \rz)$ is the conditional mutual information.
If $\rz$ is independent of either $\rx$ or $\ry$, we have $\mutualinfo(\rx;\ry) \leq \mutualinfo(\rx;\ry\vert \rz)$.

A cornerstone of information-theoretic and PAC-Bayesian analysis is the Donsker-Varadhan variational representation of the KL divergence~\citep{donsker-75a}.

\begin{lemma}[Donsker-Varadhan variational representation]\label{lemma:donsker}
Let~$Q$ be a probability distribution on a measurable space~$\sX$, and let~$\Pi$ denote the set of probability measures such that, for all~$P\in\Pi$, we have~$P\ll Q$.
For every measurable function~$f:\sX\rightarrow \sR$ such that~$\E_{\rx\sim Q}[e^{ f(\rx)}]<\infty$, we have
\begin{equation}
 \ln \E_{\rx\sim Q}[e^{ f(\rx)}] =  \sup_{P\in \Pi} \bigg\{\E_{\rx\sim P}[f(\rx)] - \KL(P\Vert Q) \bigg\}.
\end{equation}
The supremum is attained by the \emph{Gibbs distribution}~$G$, which for any measurable $\setE\subset \sX$ is given by
\begin{equation}
\dv G(\setE) = \frac{\int_\setE e^{f(x)} \dv Q(x) }{\E_{\rx\sim Q}[e^{f(\rx)}]}.
\end{equation}
\end{lemma}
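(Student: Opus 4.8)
The plan is to identify the Gibbs distribution $G$ as the unique maximizer by showing that the deficit of any competitor $P$ is \emph{exactly} a KL divergence, so that non-negativity of the KL divergence closes the argument. Set $Z=\E_{\rx\sim Q}[e^{f(\rx)}]$, which is finite by hypothesis and strictly positive since $e^{f}>0$, and define $G$ through $\dv G/\dv Q=e^{f}/Z$; this is exactly the $G$ in the statement. Because the Radon--Nikodym derivative $e^f/Z$ is strictly positive and finite $Q$-almost everywhere, $G$ and $Q$ are mutually absolutely continuous, so for a probability measure $P$ one has $P\ll Q$ if and only if $P\ll G$; thus $\Pi$ is precisely the set of $P$ with $P\ll G$, and $\KL(P\Vert G)$ is well defined for every $P\in\Pi$.

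First I would establish the change-of-measure identity. Fix $P\in\Pi$ with $\KL(P\Vert Q)<\infty$. Using $P\ll G\ll Q$ and the chain rule for Radon--Nikodym derivatives, $\dv P/\dv Q=(\dv P/\dv G)(\dv G/\dv Q)$ $P$-a.s., and taking logarithms,
\[
\ln\frac{\dv P}{\dv Q}=\ln\frac{\dv P}{\dv G}+f-\ln Z \qquad P\text{-a.s.}
\]
Integrating against $P$ yields $\KL(P\Vert Q)=\KL(P\Vert G)+\E_{\rx\sim P}[f(\rx)]-\ln Z$, hence
\[
\E_{\rx\sim P}[f(\rx)]-\KL(P\Vert Q)=\ln Z-\KL(P\Vert G)\le \ln Z,
\]
with equality exactly when $\KL(P\Vert G)=0$, i.e.\ $P=G$. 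Since $G\in\Pi$ and $\KL(G\Vert Q)=\E_{\rx\sim G}[f(\rx)]-\ln Z$, substituting $P=G$ into the objective returns precisely $\ln Z$; therefore the supremum over $\Pi$ equals $\ln Z$ and is attained at $G$.

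It remains to dispose of $P\in\Pi$ with $\KL(P\Vert Q)=+\infty$, for which the additive decomposition is of the indeterminate form $\infty-\infty$. Here I would invoke Jensen's inequality directly: since $\dv Q/\dv P$ is $P$-a.s.\ well defined as $P\ll Q$,
\[
\E_{\rx\sim P}[f(\rx)]-\KL(P\Vert Q)=\E_{\rx\sim P}\!\Big[\ln\!\Big(e^{f(\rx)}\tfrac{\dv Q}{\dv P}(\rx)\Big)\Big]\le \ln \E_{\rx\sim P}\!\Big[e^{f(\rx)}\tfrac{\dv Q}{\dv P}(\rx)\Big]=\ln \E_{\rx\sim Q}[e^{f(\rx)}]=\ln Z,
\]
so no such $P$ can exceed $\ln Z$ either. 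Combining this with the previous paragraph proves both the claimed identity and the attainment statement.

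The main obstacle, and essentially the only place where care is needed, is the measure-theoretic bookkeeping: one must check that $\ln(\dv G/\dv Q)=f-\ln Z$ holds $P$-almost everywhere and not merely $Q$-almost everywhere (which is why the mutual absolute continuity of $G$ and $Q$ was recorded at the outset), that the splitting of $\KL(P\Vert Q)$ does not produce an indeterminate $\infty-\infty$ when $\KL(P\Vert Q)<\infty$ --- equivalently that $\E_{\rx\sim P}[f^{+}(\rx)]<\infty$ in that regime, which follows from the variational bound $\E_P[f]\le\ln Z+\KL(P\Vert Q)$ applied to the truncations $f_M=\min\{f,M\}$ and then letting $M\to\infty$ by monotone convergence --- and that the Jensen step in the last display is applied to an integrand that is genuinely $[-\infty,\infty)$-valued. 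None of this is deep, but it is the substance of the argument beyond the one-line identity $\E_P[f]-\KL(P\Vert Q)=\ln Z-\KL(P\Vert G)$.
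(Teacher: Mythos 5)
Your argument is correct, and it is the standard proof of this result: tilt $Q$ into the Gibbs measure $G$, observe that the objective's deficit is exactly $\KL(P\Vert G)\ge 0$, and handle the degenerate case by a direct Jensen/change-of-measure bound. There is nothing in the paper to compare it against line by line, since \cref{lemma:donsker} is quoted as background with a citation to Donsker and Varadhan rather than proved; note, though, that your decomposition $\KL(P\Vert Q)=\KL(P\Vert G)+\E_{\rx\sim P}[f(\rx)]-\ln Z$ is precisely the exponential-tilting identity the paper itself deploys (with $f(x)=\alpha x$) in its proof of Kullback's inequality, \cref{propo:kullback-equality}, so your route is fully consonant with the paper's toolkit. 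Two small points of precision: in the Jensen step, $\E_{\rx\sim P}\big[e^{f(\rx)}\tfrac{\dv Q}{\dv P}(\rx)\big]=\int_{\{\dv P/\dv Q>0\}} e^{f}\,\dv Q$, which is in general only $\le \E_{\rx\sim Q}[e^{f(\rx)}]$ (equality can fail when $Q$ charges the set where $\dv P/\dv Q=0$); the inequality is all you need, so nothing breaks, but the displayed equality should be an inequality. Second, your attainment claim tacitly assumes $\E_{\rx\sim G}[f(\rx)]<\infty$ (equivalently $\KL(G\Vert Q)<\infty$); when $\E_{\rx\sim G}[f(\rx)]=+\infty$ the objective at $P=G$ is of the form $\infty-\infty$, a boundary case the lemma as stated (and most textbook statements) also glosses over, so it does not affect how the lemma is used in the paper, where the representation is only invoked as an upper bound on $\E_{\rx\sim P}[f(\rx)]$.
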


Finally, we present the golden formula for mutual information~\citep[Eq.~8.7]{csiszar-11a}.

\begin{lemma}[Golden formula for mutual information]\label{lemma:golden-formula}
Consider two random variables $\rx$ on $\sX$ and $\ry$ on $\sY$ with joint distribution $P_{\rx\ry}$ and marginal distributions $P_\rx$ and $P_\ry$.
Let $Q_\rx$ be a distribution on $\sX$, such that $\rx\sim Q_\rx$ is independent of $\ry$.
Then,
\begin{equation}
    \mutualinfo(\rx; \ry) = \KL( P_{\rx\ry} \Vert P_\rx P_\ry) \leq \KL( P_{\rx\ry} \Vert Q_\rx P_\ry) .
\end{equation}
\end{lemma}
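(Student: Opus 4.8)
The plan is to establish the stronger identity
\begin{equation}
\KL( P_{\rx\ry} \Vert Q_\rx P_\ry) = \mutualinfo(\rx;\ry) + \KL(P_\rx \Vert Q_\rx),
\end{equation}
from which the inequality follows immediately by nonnegativity of the KL divergence, while the equality $\mutualinfo(\rx;\ry) = \KL(P_{\rx\ry}\Vert P_\rx P_\ry)$ is just the definition of mutual information. I would first dispense with the degenerate cases. If $P_\rx\not\ll Q_\rx$, then $P_{\rx\ry}\not\ll Q_\rx P_\ry$, so the right-hand side equals $+\infty$ and there is nothing to prove. If instead $P_\rx\ll Q_\rx$ but $P_{\rx\ry}\not\ll P_\rx P_\ry$, a short argument (using that $\dv P_\rx/\dv Q_\rx$ vanishes only on a $P_\rx$-null set) shows that $P_{\rx\ry}\not\ll Q_\rx P_\ry$ as well, so the bound is again vacuous. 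It thus remains to treat the case $P_{\rx\ry}\ll P_\rx P_\ry$ and $P_\rx\ll Q_\rx$; together with the product structure, these give $P_\rx P_\ry\ll Q_\rx P_\ry$, and hence $P_{\rx\ry}\ll Q_\rx P_\ry$ by transitivity.

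In this regime, the chain rule for Radon--Nikodym derivatives along the tower $P_{\rx\ry}\ll P_\rx P_\ry\ll Q_\rx P_\ry$ gives, $P_{\rx\ry}$-almost everywhere,
\begin{equation}
\ln \frac{\dv P_{\rx\ry}}{\dv (Q_\rx P_\ry)}(x,y) = \ln \frac{\dv P_{\rx\ry}}{\dv (P_\rx P_\ry)}(x,y) + \ln \frac{\dv P_\rx}{\dv Q_\rx}(x),
\end{equation}
where I used that the density of the product measure $P_\rx P_\ry$ with respect to $Q_\rx P_\ry$ is $(x,y)\mapsto (\dv P_\rx/\dv Q_\rx)(x)$, depending only on the first coordinate. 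Integrating both sides against $P_{\rx\ry}$ and using that the $\rx$-marginal of $P_{\rx\ry}$ is $P_\rx$, the last term integrates to $\E_{\rx\sim P_\rx}[\ln(\dv P_\rx/\dv Q_\rx)] = \KL(P_\rx\Vert Q_\rx)$, and the first term on the right integrates to $\KL(P_{\rx\ry}\Vert P_\rx P_\ry) = \mutualinfo(\rx;\ry)$. Since both summands lie in $[0,\infty]$, their sum is unambiguously defined, yielding the displayed identity and hence $\KL(P_{\rx\ry}\Vert Q_\rx P_\ry)\ge \mutualinfo(\rx;\ry)$.

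The one genuinely delicate part is the measure-theoretic bookkeeping: justifying the chain rule of Radon--Nikodym derivatives along the tower $P_{\rx\ry}\ll P_\rx P_\ry\ll Q_\rx P_\ry$, verifying that the density of $P_\rx P_\ry$ relative to $Q_\rx P_\ry$ depends only on the $\rx$-coordinate, and confirming that the non-absolutely-continuous cases indeed force the right-hand side to be $+\infty$. Each of these is standard but requires the usual care with dominating measures; once the absolute-continuity relations are arranged as above, no conceptual difficulty remains, and one could equally phrase the same computation through the Donsker--Varadhan representation (\cref{lemma:donsker}).
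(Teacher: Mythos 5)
The paper does not prove \cref{lemma:golden-formula}; it simply cites \citet[Eq.~8.7]{csiszar-11a}. Your proof is correct, and it is the standard argument behind the golden formula: the additive decomposition $\KL( P_{\rx\ry} \Vert Q_\rx P_\ry) = \mutualinfo(\rx;\ry) + \KL(P_\rx \Vert Q_\rx)$ via the Radon--Nikodym chain rule along $P_{\rx\ry}\ll P_\rx P_\ry\ll Q_\rx P_\ry$, plus nonnegativity of KL. Your handling of the degenerate absolute-continuity cases is also sound (in particular, the observation that $P_{\rx\ry}(N\times\sY)=P_\rx(N)=0$ where $N=\{\dv P_\rx/\dv Q_\rx =0\}$ does close the second case). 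This is essentially the argument in the cited reference, so there is nothing to reconcile with the paper's (absent) proof.
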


\subsection{Convex Analysis}

The convex conjugate of a function $f:\sX\rightarrow \sY$ is defined as
\begin{equation}
    f^*(\vy) = \sup_{\vx\in\sX} \big\{ \langle\vx, \vy\rangle - f(\vx) \big\} ,
\end{equation}
where $\langle\cdot,\cdot\rangle$ is the inner product.
The convex conjugate of any function is convex and lower semicontinuous.
Recall that $\convfuncs$ denotes the set of functions that are convex, proper, and lower semicontinuous.
For $f\in\convfuncs$, $(f^*)^*=f$.
The convex conjugate is order-reversing in the following sense:
if, for two functions $f$ and $g$, we have $f(\vx)\leq g(\vx)$ for all $\vx\in\sX$, we have $f^*(\vy)\geq g^*(\vy)$ for all $\vy\in\sY$. 
For a more comprehensive overview, see \citet{rockafellar-70a}.

\subsection{Natural Exponential Families}

An natural exponential family (NEF) is a set of probability distributions whose probability density (or mass) functions can be written
\begin{equation}
    p(x\vert \theta) = h(x) e^{\theta x - g(\theta) } ,
\end{equation}
where $h(x)$ and $g(\theta)$ are known functions and $\theta$ is the natural parameter.
The function $g(\theta)$ is referred to as the log-normalizer.
The CGF for a distribution $P$ in a NEF is given by
\begin{equation}
    \Psi_{P}(t) = \ln \E_{\rx\sim P} [e^{t\rx}] = g(\theta+t) - g(\theta).
\end{equation}
This implies that the mean can be computed as $g'(\theta)$.
Further details are available in \citet{nielsen-09a} and \citet[Sec.~9.13.3]{wasserman-10a}.

\section{Proofs}
\label{app:proofs}

In this section, we provide the proofs of all results from the main paper.
For convenience, we repeat the statement of each result prior to proving it, demarcated by a horizontal rule on the left side. 
Note that the equation numbering in these repetitions coincides with the numbering used in the main paper, to avoid any possible confusion.

\subsection{Proofs for \cref{sec:average}}
\begin{leftrule}
\genericaverage*
\end{leftrule}
\begin{proof}
As $\Delta$ is convex, Jensen's inequality implies that
\begin{equation}\label{eq:genericaveragepf1}
    \Delta( \trainlossQavg , \poplossQavg ) \leq \E_{ \rh , \rvz \sim \posterior  \datadistro^n} \big[ \Delta( \trainlossrh, \poplossrh ) \big].
\end{equation}
Next, we use the Donsker-Varadhan variational representation of the KL divergence (\cref{lemma:donsker}) to obtain
\begin{equation}
\E_{ \rh , \rvz \sim \posterior  \datadistro^n} \big[ \Delta( \trainlossrh, \poplossrh ) \big] \leq \frac{\KLtermavg +  \ln \E_{ \rh , \rvz \sim \prior  \datadistro^n} \big[ e^{n \Delta( \trainlossrh, \poplossrh )} \big] }{n} .
\end{equation}
Next, we replace the expectation over the prior by the supremum:
\begin{equation}\label{eq:genericaveragepfmiddle}
\ln \E_{ \rh , \rvz \sim \prior  \datadistro^n} \big[ e^{n \Delta( \trainlossrh, \poplossrh )} \big] \leq \sup_{h\in\hypospace} \ln \big[ \E_{ \rvz \sim \datadistro^n} e^{n \Delta( \trainlossh, \poplossh )} \big] .
\end{equation}
By the assumption that $\Delta\in\funcspace$, we have
\begin{equation}
\sup_{h\in\hypospace} \ln \E_{ \rvz \sim \datadistro^n} \big[ e^{n \Delta( \trainlossh, \poplossh )} \big] \leq \sup_{h\in\hypospace} \ln \E_{ \rvr'_h \sim \distra{\poplossh}^n} \big[ e^{n \Delta( \mean{\rvr}'_h, \poplossh )} \big] .
\end{equation}
Finally, as the highest population loss is no greater than the highest loss, we get
\begin{equation}\label{eq:genericaveragepflast}
\sup_{h\in\hypospace} \ln \E_{ \rvr'_h \sim \distra{\poplossh}^n} \big[ e^{n \Delta( \mean{\rvr}'_h, \poplossh )}\big] \leq \sup_{r\in\sL}\ln \E_{ \rvx \sim \distra{r}^n} \big[ e^{n \Delta( \mean{\rvx}, r )} \big] .
\end{equation}
The result follows by combining \cref{eq:genericaveragepf1} to \cref{eq:genericaveragepflast}.
\end{proof}

\begin{leftrule}
\bestcomparatoraverage*
\end{leftrule}
\begin{proof}

We begin by proving the upper bound in \cref{eq:avg-optimal-upper-bound}.
First, we use the fact that the moment-generating function for a sum of independent random variables factorizes, so that
\begin{equation}
    \E_{\rvx\sim \distra{p}^n}[ e^{ nt\mean{\rvx} } ] = \left(\E_{\rx\sim \distra{p}}[ e^{ t\rx } ]\right)^n .
\end{equation}
By definition, for any fixed $t$, we have
\begin{equation}\label{eq:re-organize-cgf-average}
   \E_{\rx \sim \distra{p}}[ e^{ t \rx - \Psifunca{p}(t)  } ] = 1 .
\end{equation}
Hence, by applying \cref{thm:cgf-generic-average} with $\Delta(q,p)=tq-\Psifunca{p}(t)$, we find that with $\Delta(q,p) = tq- \Psifunca{p}(t)$ for any fixed $t\in\tspace$, we have
\begin{equation}
   t\trainlossQavg - \Psifunca{\poplossQavg}(t) \leq \frac{ \KLtermavg}{n} .
\end{equation}
Since this holds for any $t\in\tspace$, it also holds for the supremum.
Hence,
\begin{equation}
   \Deltacram(\trainlossQavg,\poplossQavg) = \sup_{t\in\tspace} \big\{ t\trainlossQavg - \Psifunca{\poplossQavg}(t) \big\} \leq \frac{ \KLtermavg}{n} .
\end{equation}
This establishes \cref{eq:avg-optimal-upper-bound}.

We now turn to proving the lower bound in \cref{eq:avg-optimal-lower-bound}.
To do this, we will show that, for any choice of $\Delta$ in \cref{thm:cgf-generic-average}, the resulting bound on $\poplossQavg$ is no better than the stated lower bound.
The proof consists of three steps:
\begin{enumerate*}[label=(\roman*)]
\item lower-bounding $\gID$, 
\item upper-bounding $\Delta$, and
\item putting every thing together. 
\end{enumerate*}
This roughly follows along the same lines as the proof of \citet[Thm.~4]{foong-21a}, with key modifications and subtleties that arise due to considering unbounded loss functions.
For convenience, we introduce
\begin{equation}\label{eq:phi-def}
    \ffunc_q(r) = -\Psifunca{r}(q).
\end{equation}

\textit{(i): Lower-bounding $\gID$.}\\
Since $\Delta$ is in $\convfuncs$, we have
\begin{equation}
\Delta(q,p) = \Delta^{**}(q,p) = \sup_{c_q,c_p\in \sR} (c_q q + c_p p - \Delta^*(c_q, c_p)).
\end{equation}
Recall that $\mean{\rvx} = \frac1n\sum_{i=1}^n \rx_i$.
Then, we have
\begin{align}
    \gIDn &= \sup_{r\in\sL} \E_{\rvx \sim \distra{r}^n} e^{n\Delta(\mean{\rvx},r)} \\
    &= \sup_{r\in\sL} \E_{\rvx \sim \distra{r}^n} e^{ \sup_{c_q,c_p\in \sR} (c_q \sum_i\rx_i + nc_p r - n\Delta^*(c_q, c_p)) } \\
     &\geq \sup_{r\in\sL, c_q,c_p\in \sR}  e^{ nc_p r - n\Delta^*(c_q, c_p) }\E_{\rvx \sim \distra{r}^n} e^{ c_q \sum_i\rx_i} \\
     &= \sup_{r\in\sL, c_q,c_p\in \sR}  e^{ nc_p r - n\Delta^*(c_q, c_p) }\exp\left(n\Psifunca{r}(c_q)\right)\\
     &= \sup_{r\in\sL, c_q,c_p\in \sR}  e^{ nc_p r - n\Delta^*(c_q, c_p) }\exp\left(-n\ffunc_{c_q}(r)\right).
\end{align}
Hence, we obtain
\begin{align}
\frac{\ln \gIDn}{n} &\geq \sup_{ c_q,c_p\in \sR} \big\{  - \Delta^*(c_q, c_p)  +  \sup_{r\in\sL}\big[c_p r - \ffunc_{c_q}(r) \big] \big\} \\
&= \sup_{ c_q,c_p\in \sR} \big\{  - \Delta^*(c_q, c_p)  + \ffunc^*_{c_q}(c_p)  \big\} \\
&:=  A.\label{eq:A-def}
\end{align}
Note that, since $\Delta$ is proper, $A$ is finite.

\textit{(ii): Upper-bounding $\Delta$.}\\
Define $\tilde \Delta^*$ as
\begin{equation}
    \tilde \Delta^*(c_p,c_q) = -  A + \ffunc^*_{c_q}(c_p).
\end{equation}
Since $A$ is finite, $\tilde \Delta^*$ is proper.
Furthermore, as it is an affine transformation of a convex conjugate, it is convex and lower semicontinuous.
Hence, $\tilde \Delta^*\in\convfuncs$, which implies that $(\tilde \Delta^*)^{**}=\tilde \Delta^*$.
This motivates the notation $\tilde \Delta=(\tilde \Delta^*)^*$.
With this, we obtain
\begin{align}
    \tilde \Delta(q,p) &=  A + \sup_{c_p,c_q\in\sR} [c_q q + c_p p - \ffunc^*_{c_q}(c_p)] \\
    &=  A + \sup_{c_q\in\sR} [c_q q + \ffunc_{c_q}(p)  ] \\
    &=  A + \sup_{c_q\in\sR} [ c_q q - \Psifunca{p}(c_q) ] \\
    &=  A + \Psifunca{p}^*(q)  .
\end{align}
We now need to show that $\tilde \Delta(q,p)\geq \Delta(q,p)$ for all $q,p\in\sL$:
\begin{align}
    -\tilde\Delta^*(c_q,c_p) + \ffunc^*_{c_q}(c_p) &=  A \\
&=\sup_{ c_q,c_p\in \sR} \big\{  - \Delta^*(c_q, c_p) + \ffunc^*_{c_q}(c_p) \big\} \\
&\geq  - \Delta^*(c_q, c_p)  + \ffunc^*_{c_q}(c_p) .
\end{align}
Therefore, we have $\tilde \Delta^*\leq \Delta^*$, which implies $\tilde \Delta \geq \Delta$ by the order-reversing property of the convex conjugate.

\textit{(iii): Putting everything together.} \\ 
First, since $\tilde \Delta \geq \Delta$, we have
\begin{equation}\label{eq:pf-bavg-start}
   \Bavg^{\tilde \Delta}_{n}(\trainlossQ,\KLterm,\gID) \leq  \Bavg^{\Delta}_{n}(\trainlossQ,\KLterm,\gID) .
\end{equation}
Furthermore, since $\ln\big(\gIDn\big)/n\geq A$, we have
\begin{equation}
   \Bavg^{\tilde \Delta}_{n}(\trainlossQ,\KLterm,e^{nA}) \leq  \Bavg^{\tilde \Delta}_{n}(\trainlossQ,\KLterm,\gID ).
\end{equation}
Finally, since $\tilde \Delta(q,p) = A + \Psifunca{p}^*(q) = A + \Deltacram(q,p)$, we obtain
\begin{equation}\label{eq:pf-bavg-end}
   \Bavg^{\tilde \Delta}_{n}(\trainlossQ,\KLterm,e^{nA}) = \Bavg^{\Deltacram}_{n}(\trainlossQ,\KLterm,1) ,
\end{equation}
leading to the final lower bound.

\end{proof}

\begin{leftrule}
\kullbackequality*
\end{leftrule}

\begin{proof}

For completeness, we begin by proving Kullback's inequality~\citep{kullback-54a}.
Let $P$ and $Q$ be two distributions such that $P\ll Q$.
Let $Q_\alpha$ be defined so that, for every measurable set $\setE$,
\begin{equation}\label{eq:theta-transform}
    Q_\alpha(\setE) = \frac{ \int_\setE e^{\alpha x} Q(dx) }{\int_\sR e^{\alpha x} Q(dx)}  = \frac{1}{M_Q(\alpha)} \int_\setE e^{\alpha x} Q(dx),
\end{equation}
where $M_Q(\alpha)$ denotes the moment-generating function of $Q$.
Then, we find that
\begin{align}
    \KL(P\Vert Q) &= \int_\sR \dv P \ln \left( 
\frac{\dv P}{\dv Q} \right) \\
 &= \int_\sR \dv P \ln \left( 
\frac{\dv P}{\dv Q} \frac{\dv Q_\alpha}{\dv Q_\alpha} \right) \\
    &= \KL(P\Vert Q_\alpha) + \int_\sR \dv P \ln \left( \frac{\dv Q_\alpha}{\dv Q } \right) .
\end{align}
The last term can be decomposed as
\begin{align}
\int_\sR \dv P \ln \left( \frac{\dv Q_\alpha}{\dv Q } \right) &= \int_\sR \dv P \ln \left( \frac{ e^{\alpha x} }{ M_Q(\alpha) } \right) \\
&= \alpha\mu_P - \Psi_Q(\alpha),
\end{align}
where $\mu_P$ denotes the first moment of $P$ and $\Psi_Q(\alpha)$ is the CGF of $Q$.
Now, due to the non-negativity of the KL divergence, we have
\begin{align}
    \KL(P\Vert Q) &=  \KL(P\Vert Q_\alpha) + \alpha\mu_P - \Psi_Q(\alpha)\\
    &\geq \alpha\mu_P - \Psi_Q(\alpha) .
\end{align}
Finally, by taking the supremum over $\alpha$, we obtain Kullback's inequality:
\begin{align}\label{eq:kullback-in-proof}
    \KL(P\Vert Q) \geq \sup_\alpha \big\{ \alpha\mu_P - \Psi_Q(\alpha) \big\} = \Psi^*_Q(\mu_P).
\end{align}
To establish the desired result, we need to show that the above is an equality provided that the distributions are in the same NEF.
Thus, assume that $P$ and $Q$ are in a NEF, with natural parameters $\theta_P$ and $\theta_Q$ respectively.
Denote the first moment of $P$ as $p$ and the first moment of of $Q$ as $q$.

First, observe that since $Q$ is in a NEF with parameter $\theta_Q$, as defined in \cref{eq:nef-definition}, the transformation in \cref{eq:theta-transform} gives another member of the NEF, but with parameter $\theta_Q+\alpha$.
Since $Q$ is in a NEF, the CGF is
\begin{equation}
    \Psi_Q(t) = g(\theta_Q+t) - g(\theta_Q).
\end{equation}
In particular, the first moment is $q=g'(\theta_Q)$.
Hence, the transformation in \cref{eq:theta-transform} leads to a distribution with first moment $g'(\theta_Q+\alpha)$.
If we set $\alpha=\theta_P-\theta_Q$, we thus obtain a distribution with first moment $p=g'(\theta_P)$---and since it is in the same NEF, $Q_{\theta_P-\theta_Q} = P$.
From this, it follows that $\KL(P\Vert Q_{\theta_P-\theta_Q})=0$.
Therefore, by following the same procedure as above,
\begin{align}
    \KL(P\Vert Q) &=  \KL(P\Vert Q_{\theta_P-\theta_Q}) + (\theta_P-\theta_Q)\mu_P - \Psi_Q(\theta_P-\theta_Q)\\
    &= (\theta_P-\theta_Q)\mu_P - \Psi_Q(\theta_P-\theta_Q) .
\end{align}
Now, since the general upper bound of Kullback's inequality in \cref{eq:kullback-in-proof} holds, and equality is achieved with $\alpha=(\theta_P-\theta_Q)$, it follows that this must be the supremum over $\alpha$.
Thus, we conclude
\begin{align}
    \KL(P\Vert Q) &= \sup_\alpha \big\{ \alpha \mu_P - \Psi_Q(\alpha) \big\} \\
    &= \Psi_Q^*(\mu_P).
\end{align}
\end{proof}

\begin{leftrule}
\genericaveragedisintegrated*
\end{leftrule}
\begin{proof}
As in the proof of \cref{thm:cgf-generic-average}, we begin by using the convexity of $\Delta$ and Jensen's inequality to conclude that
\begin{equation}\label{eq:genericaveragepfdisint1}
    \Delta( \trainlossQavg , \poplossQavg ) \leq \E_{ \rh , \rvz \sim \posterior  \datadistro^n} \big[ \Delta( \trainlossrh, \poplossrh ) \big].
\end{equation}
Now, recall that $\trainlossrh = \frac1n\sum_{i=1}^n \ell(\rh,\rz_i)$.
Thus, by using Jensen's inequality again,
\begin{equation}
\E_{ \rh , \rvz \sim \posterior  \datadistro^n} \big[ \Delta( \trainlossrh, \poplossrh )\big] \leq \E_{ \rh , \rvz \sim \posterior  \datadistro^n} \bigg[\frac1n \sum_{i=1}^n \Delta( \ell(\rh,\rz_i), \poplossrh ) \bigg].
\end{equation}
By the linearity of expectation and marginalizing,
\begin{equation}
\E_{ \rh , \rvz \sim \posterior  \datadistro^n} \bigg[ \frac1n \sum_{i=1}^n \Delta( \ell(\rh,\rz_i), \poplossrh )\bigg] = \frac1n \sum_{i=1}^n \E_{ \rh , \rz_i \sim \posteriori  \datadistro^n}  \big[ \Delta( \ell(\rh,\rz_i), \poplossrh ) \big].
\end{equation}
The proof now essentially proceeds as in \cref{thm:cgf-generic-average}, but for each term in the sum.
First, by using the Donsker-Varadhan variational representation of the KL divergence (\cref{lemma:donsker}), we obtain
\begin{equation}
\E_{ \rh , \rz_i \sim \posterior  \datadistro} \big[ \Delta( \ell(\rh,\rz_i), \poplossrh ) \big] \leq \KLtermavgi +  \ln \E_{ \rh , \rz_i \sim \prior  \datadistro} \big[ e^{\Delta( \ell(\rh,\rz_i), \poplossrh )} \big] .
\end{equation}
By replacing the expectation over the prior by the supremum, using the assumption that $\Delta\in\funcspace$, and the fact that the highest population loss is no greater than the highest loss, we get
\begin{equation}\label{eq:genericaveragepdfisintlast}
    \ln \E_{ \rh , \rz_i \sim \prior  \datadistro} \big[ e^{\Delta( \ell(\rh,\rz_i), \poplossrh )} \big] \leq \sup_{r\in\sL} \ln \E_{ \rx \sim  P_r} \big[ e^{\Delta( \rx, r )} \big] .
\end{equation}
The result follows by combining \cref{eq:genericaveragepfdisint1} to \cref{eq:genericaveragepdfisintlast}.
\end{proof}

\subsection{Proofs for \cref{sec:pac-bayes}}
 
\begin{leftrule}
 \genericpacbayes*
\end{leftrule}
 \begin{proof}
The proof essentially follows the same lines as \cref{thm:cgf-generic-average}, with an additional application of Markov's inequality.
Recall that $\mean{\rvx} = \frac1n\sum_{i=1}^n \rx_i$.
Then, by Jensen's inequality and the Donsker-Varadhan variational representation (\cref{lemma:donsker}),
\begin{align}
\Delta( \trainlossQ , \poplossQ ) &\leq \E_{\rh \sim \posterior} \big[ \Delta( \trainlossrh, \poplossrh ) \big] \\
    &\leq \frac{\KL(\posterior\Vert\prior) +  \ln \E_{\rh \sim \prior} \big[ e^{n \Delta( \trainlossrh, \poplossrh )} \big] }{n} .
\end{align}
Now, by Markov's inequality, we have that with probability $1-\delta$,
\begin{equation}
    \ln \E_{\rh \sim \prior} \big[ e^{n \Delta( \trainlossrh, \poplossrh )} \big] \leq \ln \bigg(\frac1\delta\E_{\rh \sim \prior,  \rvz \sim \datadistro^n} \big[ e^{n \Delta( \trainlossrh, \poplossrh )} \big] \bigg).
\end{equation}
The remaining steps are identical to \cref{eq:genericaveragepfmiddle} to \cref{eq:genericaveragepflast}, after which the result follows.
\end{proof}

\begin{leftrule}
\bestcomparator*
\end{leftrule}

\begin{proof}
We begin with \cref{eq:cgf-generic-pac-bayes-best-lower}.
The proof of \cref{eq:avg-optimal-lower-bound}, for the average case in \cref{thm:best-comparator-cgf-average}, can be applied verbatim in the PAC-Bayesian case, as it is only concerned with the structure of $\Delta$ and $\gIDn$.
As these are identical in \cref{thm:best-comparator-cgf-pac-bayes}, the exact same argument can be used, with $B$ in place of $\Bavg$ in \cref{eq:pf-bavg-start} to \cref{eq:pf-bavg-end}.

Next, the result in \cref{eq:cgf-generic-pac-bayes-best-upper-chernoff} follows immediately from \cref{thm:cgf-generic-pac-bayes} by setting $\Delta$ to $\Deltacram$.

We now turn to \cref{eq:cgf-generic-pac-bayes-best-upper-parametric}.
By definition, for any fixed $t$, we have
\begin{equation}\label{eq:re-organize-cgf}
   \E_{\rx \sim \distra{p}}[ e^{ t\mean{\rx} - \Psifunca{p}(t)  } ] = 1 .
\end{equation}
Hence, by applying \cref{thm:cgf-generic-pac-bayes} with $\Delta(q,p)=tq-\Psifunca{p}(t)$, we find that for any fixed $t\in\tspace$, with probability $1-\delta$,
\begin{equation}\label{eq:upper-bound-deriv-fixed-t}
   t\trainlossQ - \Psifunca{\poplossQ}(t) \leq \frac{ \KL( \posterior \Vert \prior ) + \ln\frac{1}\delta }{n} .
\end{equation}
This establishes \cref{eq:cgf-generic-pac-bayes-best-upper-parametric}.

\end{proof}

\begin{leftrule}
\unionbasedbounds*
\end{leftrule}

\begin{proof}
The proofs of these upper bounds are similar to the average case in \cref{thm:best-comparator-cgf-average}, but as we are dealing with a probabilistic result, we need to apply carefully constructed union bounds.

We begin with \cref{eq:cgf-generic-pac-bayes-best-upper-un}.
We will consider the situations where the KL divergence and the training loss are bounded separately, and we begin with the KL case.
Now, note that the supremum over $t$ in \cref{eq:upper-bound-deriv-fixed-t} is achieved for a $t>0$~\citep[Sec.~2.2]{boucheron-13a}.
Hence, we can recast \cref{eq:upper-bound-deriv-fixed-t} as
\begin{equation}\label{eq:upper-bound-deriv-fixed-t-recast}
   \trainlossQ \leq \frac{ \KL( \posterior \Vert \prior ) + \ln\frac{1}\delta }{nt}  + \frac{\Psifunca{\poplossQ}(t)}{t}.
\end{equation}
We now wish to take the infimum over $t$ in the right-hand side in \cref{eq:upper-bound-deriv-fixed-t-recast}, which corresponds to taking the supremum over~$t$ in the left-hand side of \cref{eq:upper-bound-deriv-fixed-t}.

As per our assumption, we have $\KL( \posterior \Vert \prior ) \leq u(n) $.
Let $\rk = \lceil \KL( \posterior \Vert \prior ) \rceil$.
We now follow an approach similar to \citet{rodriguezgalvez-23a}.
Specifically, \cref{eq:upper-bound-deriv-fixed-t-recast} implies 
\begin{align}\label{eq:pf-of-un-disc-bound-rk}
   \trainlossQ &\leq \frac{ \rk + \ln\frac{1}\delta }{nt}  + \frac{\Psifunca{\poplossQ}(t)}{t} .
\end{align}
Now, conditioned on any outcome $\rk=k'$, we can take the infimum over $t$ in the right-hand side of \cref{eq:pf-of-un-disc-bound-rk}:
\begin{align}
   \trainlossQ \leq \inf_t \left\{ \frac{ k'  + \ln\frac{1}\delta }{nt}  + \frac{\Psifunca{\poplossQ}(t)}{t} \right\} .
\end{align}
Note that, given $\rk=k'$, we have $k'\leq \KLterm+1$.
Since the support of $\rk$ is $1,\dots,\lceil u(n) \rceil$, we can apply a union bound over all possible outcomes and perform the substitution $\delta\rightarrow\delta/\lceil u(n) \rceil$ to obtain
\begin{align}\label{eq:disc-union-bound-before-combine-kl}
\Psifunca{\poplossQ}^*(\trainlossQ) = \sup_{t\in\tspace} \left\{ t\trainlossQ - \Psifunca{\poplossQ}(t) \right\}  \leq \frac{ \KL( \posterior \Vert \prior ) + \ln\frac{e\lceil u(n) \rceil}{\delta} }{n}  ,
\end{align}
We now consider the case where $n\trainlossQ \leq u(n) $.
Let $\rs=\lceil n\trainlossQ \rceil$. We then get
\begin{equation}
  \frac{t\rs}{n} - \frac1n + \Psifunca{\poplossQ}(t)  \leq \frac{ \KL( \posterior \Vert \prior ) + \ln\frac{1}\delta }{n} .
\end{equation}
As for the KL divergence, we can optimize the above conditioned on any specific instance of $\rs$, which is supported on $1,\dots,\lceil u(n) \rceil$.
Hence, we apply a union bound over all possible outcomes and perform the substitution $\delta\rightarrow\delta/\lceil u(n) \rceil$ to obtain
\begin{align}\label{eq:disc-union-bound-before-combine-train}
\Psifunca{\poplossQ}^*(\trainlossQ) = \sup_{t\in\tspace} \left\{ t\trainlossQ - \Psifunca{\poplossQ}(t) \right\}  \leq \frac{ \KL( \posterior \Vert \prior ) + \ln\frac{e\lceil u(n)\rceil}{\delta} }{n}  .
\end{align}
By combining \cref{eq:disc-union-bound-before-combine-kl} and \cref{eq:disc-union-bound-before-combine-train} via an additional union bound, performing the substitution $\delta\rightarrow \delta/2$, we obtain \cref{eq:cgf-generic-pac-bayes-best-upper-un}.

We now turn to the upper bound in \cref{eq:cgf-generic-pac-bayes-best-upper}.
Now, we do not assume any bound on the KL divergence or training loss, so we cannot take a union bound over a finite set.
However, we can take the following approach, inspired by \citet{seldin-12a}.
We begin with the case where the minimum in \cref{eq:cgf-generic-pac-bayes-best-upper} is achieved by the KL divergence, and again, we let $\rk=\lceil \KL( \posterior \Vert \prior ) \rceil$.
Then, as before, we have
\begin{align}
 \frac{ \KL( \posterior \Vert \prior ) + \ln\frac{1}\delta }{nt}  + \frac{\Psifunca{\poplossQ}(t)}{t} \leq \frac{ \rk + \ln\frac{1}\delta }{nt}  + \frac{\Psifunca{\poplossQ}(t)}{t} .\label{eq:upper-bound-deriv-fixed-t-recast-k-form}
\end{align}
Conditioned on any outcome $\rk=k'$, we can take the infimum over $t$ in the right-hand side of \cref{eq:upper-bound-deriv-fixed-t-recast-k-form}.
Since $\rk\in\sN_+$, we can now take the following weighted union bound over $\sN$: let $\delta\rightarrow 6\delta/(\pi^2 k'^2)$.
Note that the sum of this over $k'$ is
\begin{equation}
 \sum_{k'\in\sN_+} \frac{6\delta}{\pi^2k'^2} = \delta.   
\end{equation}
We can thus conclude that, with probability $1-\delta$,
\begin{align}
 \trainlossQ \leq \inf_{t\in\tspace} \left\{\frac{ \KL( \posterior \Vert \prior ) + \ln\frac{e \pi^2  (1+\KL( \posterior \Vert \prior ))^2}{6\delta} }{nt}  + \frac{\Psifunca{\poplossQ}(t)}{t} \right\},
\end{align}
and hence,
\begin{align}\label{eq:pf-best-upper-kl}
\Psifunca{\poplossQ}^*(\trainlossQ) = \sup_{t\in\tspace} \left\{ t\trainlossQ - \Psifunca{\poplossQ}(t) \right\}  \leq \frac{ \KL( \posterior \Vert \prior ) + \ln\frac{e \pi^2  (1+\KL( \posterior \Vert \prior ))^2}{6\delta} }{n}   .
\end{align}

Finally, we turn to the upper bound in terms of the training loss in \cref{eq:cgf-generic-pac-bayes-best-upper}.
Let $\rs=\lceil n\trainlossQ \rceil$. Again, we then get
\begin{equation}
  \frac{t\rs}{n} - \frac1n - \Psifunca{\poplossQ}(t)  \leq \frac{ \KL( \posterior \Vert \prior ) + \ln\frac{1}\delta }{n} .
\end{equation}
For any fixed instance of $\rs=m'\in\sN_+$, we can take the supremum over $t$.
So, taking a union bound with $\delta\rightarrow 6\delta/(\pi^2 m'^2)$, we get
\begin{align}\label{eq:pf-best-upper-train}
\Psifunca{\poplossQ}^*(\trainlossQ) \leq \frac{ \KL( \posterior \Vert \prior ) + \ln\frac{\pi^2  (1+\trainlossQ)^2}{6\delta} }{n}   .
\end{align}
The result in \cref{eq:cgf-generic-pac-bayes-best-upper} follows by combining the bounds in \cref{eq:pf-best-upper-kl} and \cref{eq:pf-best-upper-train} via the union bound, performing the substitution $\delta\rightarrow \delta/2$.
\end{proof}

\subsection{Proofs for \cref{sec:applications}}

\begin{leftrule}
\poissonbound*
\end{leftrule}
\begin{proof}
Since the Poisson distributions form a NEF, \cref{propo:kullback-equality} implies that the Cram\'er function is, indeed, equal to the KL divergence in \cref{eq:poisson-kl}.
Hence, \cref{eq:subpoisson-average} follows immediately from \cref{eq:cgf-generic-pac-bayes-best-lower}, while \cref{eq:subpoisson-pac-bayes} follows immediately from \cref{eq:cgf-generic-pac-bayes-best-upper}.
\end{proof}

\begin{leftrule}
\subpoissondiffbound*
\end{leftrule}

\begin{proof}
Let $\rx\sim \poisson(\mu)$.
Then, we have
\begin{equation}
\E[ e^{(1-e^{-t})\mu - t\rx}] = e^{-\mu(1-e^{-t})}e^{(1-e^{-t})\mu} =  1.
\end{equation}
Therefore, it follows from \cref{eq:thm-cgf-generic-average} that
\begin{equation}
(1-e^{-t})\poplossQavg - t\trainlossQavg \leq \frac{\KLtermavg}{n} .
\end{equation}
The result follows by solving for $\poplossQavg$ and taking the infimum over $t$.
\end{proof}

\begin{leftrule}
\subgammabound*
\end{leftrule}
\begin{proof}
    Since the gamma distributions with fixed shape parameter form a NEF, the KL divergence in \cref{eq:kl-gamma} is the Cram\'er function (by \cref{propo:kullback-equality}).
    Hence, \cref{eq:subgamma-average} follows immediately from \cref{eq:cgf-generic-pac-bayes-best-lower}, while \cref{eq:subgamma-pac-bayes} follows from \cref{eq:cgf-generic-pac-bayes-best-upper}.
\end{proof}

\begin{leftrule}
\sublaplacebound*
\end{leftrule}
\begin{proof}
Since the Laplace distributions do \emph{not} form a NEF (unless the mean is fixed), the Cram\'er function cannot be computed on the basis of \cref{propo:kullback-equality}.
Instead, we need to show that \cref{eq:cramer-laplace} is the Cram\'er function via explicit computation.
To this end, note that the CGF for the distribution $\laplace(b,p)$ is, for $\abs{t}\leq 1/b$,
\begin{equation}\label{eq:laplace-cgf}
    \Psi_p(t) = tp - \ln(1-b^2t^2)
\end{equation}
Hence, the Cram\'er function is
\begin{align}
    \Psi^*_p(q) &= \sup_{\abs{t}\leq 1/b} \bigg\{ t(q-p) - \ln(1-b^2t^2) \bigg\} \\
    &= \frac{\sqrt {(q-p)^2+b^2}}{b} - 1 + \ln\!\bigg( \frac{2(b\sqrt{(q-p)^2+b^2} -b^2)}{(q-p)^2} \bigg) ,
\end{align}
where the final step follows by confirming that the maximum is attained at the critical point
\begin{equation}
    t^* = \frac{ \sqrt{ b^2 + (q-p)^2 } }{b(q-p)} - \frac{1}{q-p} .
\end{equation}
With this, the result follows directly from \cref{eq:thm-cgf-generic-average}.
\end{proof}

\begin{leftrule}
\sublaplacediffbound*
\end{leftrule}
\begin{proof}
    Given the form of the CGF for the distribution $\laplace(b,p)$ for $\abs{t}\leq 1/b$, given in \cref{eq:laplace-cgf}, it follows that
    \begin{equation}
        \frac{\gIDtwoarg{\Delta_t}{\bounddistrospacelaplace}}{n} = - \ln(1-b^2t^2).
    \end{equation}
    Hence, it follows from \cref{eq:thm-cgf-generic-average} that
    \begin{equation}
        t(\poplossQavg - \trainlossQavg) \leq \frac{\KLtermavg}{n} - \ln(1-b^2t^2) .
    \end{equation}
    The stated result follows after division by $t$ and taking the infimum.
\end{proof}

\begin{leftrule}
\exponentialcgf*
\end{leftrule}

\begin{proof}
    Given the form of the CGF, we have, for $\rx\sim P_p$,
    \begin{equation}
        \E[e^{t(p-q)}] = e^{ -tp + g(t^2) } e^{tp} = e^{g(t^2)} .
    \end{equation}
    Thus, we conclude that
    \begin{equation}
        \frac{\gIDtwoarg{\Delta_t}{\bounddistrospace}}{n} = g(t^2) .
    \end{equation}
    Therefore, it follows from \cref{eq:thm-cgf-generic-average} that, with $\alpha=\trainlossQavg$ and $\beta = \KLtermavg/n$,
    \begin{equation}\label{eq:diffbased-exp-cgf-bound}
        \poplossQavg  \leq \alpha + \inf_t \bigg\{ \frac{\beta + g(t^2)}{t} \bigg\} .
    \end{equation}
    Note that the bound on $\poplossQavg$ \cref{eq:diffbased-exp-cgf-bound} is the explicit form of $\inf_{t\in \tspace} \Bavg^{\Delta_t}_{n} \big(\alpha,\beta,\gIDarg{\Delta_t}\big)$.
    Now, by reorganizing \cref{eq:diffbased-exp-cgf-bound}, we obtain
    \begin{equation}\label{eq:cramer-exp-cgf-bound}
        \sup_{t\in \tspace} \big\{ t(\alpha - \poplossQavg) - g(t^2)  \big\} \leq  \beta .
    \end{equation}
    Due to the assumed form of the CGF, we see that the left-hand side of \cref{eq:cramer-exp-cgf-bound} is indeed the Cram\'er function:
    \begin{equation}
        \sup_{t\in \tspace} \big\{ t(\alpha - \poplossQavg) - g(t^2)  \big\} = \sup_{t\in \tspace} \big\{ t\alpha - \big( t\poplossQavg + g(t^2)\big)  \big\} = \Deltacram(\alpha,\poplossQavg).
    \end{equation}
    Hence, it implies the bound
    $\poplossQavg \leq \Bavg^{\Deltacram}_{n} \big(\alpha,\beta,1\big)$.
    Thus, the claimed equivalence follows.
\end{proof}

\section{Additional Results}
\label{app:more-results}

In this section, we present some additional results which could not be included in the main text.
In \cref{app:additional-theoretical}, we state and prove some theoretical results: we establish a partial characterization of $\funcspace$ under the sub-$\bounddistrospace$ assumption (\cref{propo:cgf-implies-funcspace}); we show that, under certain conditions, the bound in \cref{thm:cgf-generic-average-disintegrated} always improves upon \cref{thm:cgf-generic-average} (\cref{propo:samplewise-better}); and we provide some additional explicit bounds for various instances of $\bounddistrospace$ (\cref{cor:sub-something-bounds}).
In \cref{app:additional-numerical}, we present additional numerical evaluations of the bounds to support the findings presented in \cref{sec:numeric}.

\subsection{Additional Theoretical Results}\label{app:additional-theoretical}
We begin with a partial characterization of $\funcspace$ under the sub-$\bounddistrospace$ assumption.
\begin{restatable}{proposition}{cgfimplies}\label{propo:cgf-implies-funcspace}
    Assume that $\flint \in \funcspace$ for all~$t\in\sR$.
    Let $g:\sL^2\rightarrow\sR^+$ denote a function that is infinitely differentiable in its first argument.
    Then, $g\in\funcspace$ if it is totally monotone, \ie., for all~$k\in\sN$,
    \begin{equation}\label{eq:absolutely-monotone-functions}
       (-1)^k \frac{\partial^k e^{g(q,p)}}{\partial q^k} \geq 0 .
    \end{equation}
    Furthermore, $g\in\funcspace$ if all of its derivatives are non-negative, \ie,  for all~$k\in\sN$,
    \begin{equation}\label{eq:positive-derivatives}
       \frac{\partial^k e^{g(q,p)}}{\partial q^k} \geq 0 .
    \end{equation}
\end{restatable}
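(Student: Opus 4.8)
\emph{Proof plan.} Fix an arbitrary $h \in \hypospace$, set $p = \poplossh$, and write $\phi(q) = e^{g(q,p)}$. Because the defining inequality \cref{eq:bounding-function-condition} of $\funcspace$ must hold separately for each $h$, the plan is to establish, for this fixed $h$, the bound
\[
  \E_{\,\rvz \sim \datadistro^n}\!\big[ \phi(\trainlossh) \big] \;\leq\; \E_{\,\rvx \sim \distra{p}^n}\!\big[ \phi(\mean{\rvx}) \big] ,
\]
and to do so by writing $\phi$ as a mixture of exponential functions in $q$ and thereby reducing the claim to the single hypothesis $\flint \in \funcspace$. (The membership $g \in \funcspace$ also requires $g \in \convfuncs$: convexity of $g$ in its first argument will follow from the representation below, since the logarithm of a mixture of exponentials is convex, and the remaining regularity we take as part of $g$ being a valid comparator.)

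First I would appeal to a Bernstein-type representation theorem. Under \cref{eq:absolutely-monotone-functions}, $\phi$ is totally monotone in $q$ (completely monotone, in the classical terminology), so the Hausdorff--Bernstein--Widder theorem provides a positive measure $\mu$ on $[0,\infty)$ with $\phi(q) = \int_{[0,\infty)} e^{-sq}\,\dv\mu(s)$; under \cref{eq:positive-derivatives}, $\phi$ is absolutely monotone in $q$, and the mirror-image representation gives a positive measure $\nu$ on $[0,\infty)$ with $\phi(q) = \int_{[0,\infty)} e^{sq}\,\dv\nu(s)$. In either case there is a positive measure $\lambda$ on $\sR$ with $\phi(q) = \int_{\sR} e^{tq}\,\dv\lambda(t)$ on the range of $q$ that matters (for this fixed $p$).

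Next I would substitute this representation and interchange integral and expectation using Tonelli's theorem, which is licensed here purely by non-negativity (no integrability assumption is needed), to get
\[
  \E_{\,\rvz \sim \datadistro^n}\!\big[ \phi(\trainlossh) \big]
  = \int_{\sR} \E_{\,\rvz \sim \datadistro^n}\!\big[ e^{t\trainlossh} \big]\,\dv\lambda(t) .
\]
For each fixed $t$, the inner expectation is exactly the left-hand side of the $\funcspace$-membership condition \cref{eq:bounding-function-condition} applied to the relevant instance $(q,p)\mapsto tq$ of $\flint$, so the hypothesis $\flint \in \funcspace$ yields $\E_{\rvz \sim \datadistro^n}[ e^{t\trainlossh} ] \leq \E_{\rvx \sim \distra{p}^n}[ e^{t\mean{\rvx}} ]$. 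Integrating this pointwise inequality against the positive measure $\lambda$ and applying Tonelli once more recovers $\E_{\rvx \sim \distra{p}^n}[ \phi(\mean{\rvx}) ]$ on the right, which is the claim; since $h$ was arbitrary, $g \in \funcspace$.

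The hard part will be making the representation step watertight. The classical Bernstein/Widder theorems are stated for functions completely (resp.\ absolutely) monotone on an \emph{open} half-line, whereas here $\sL$ may be a bounded interval and $\phi$ only one-sidedly smooth at an endpoint, so one must either extend $\phi$ beyond $\sL$ or invoke a local version of the representation. One must also handle the possibility that $\distra{p}^n$ puts mass where $\mean{\rvx} \notin \sL$: with the convention that $g$ is extended by $+\infty$ off $\sL^2$, the right-hand side is then $+\infty$ and the inequality is vacuous, so it suffices to treat bounding families (e.g.\ the Bernoulli family) for which $\mean{\rvx} \in \sL$ almost surely. Once the representation is in place, the rest---the two Tonelli exchanges and the pointwise use of $\flint \in \funcspace$---is routine.
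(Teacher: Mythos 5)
Your proposal is correct and follows essentially the same route as the paper: fix $p = \poplossh$, represent $e^{g(\cdot,p)}$ via Bernstein's theorem as a positive mixture of exponentials, apply Tonelli to interchange integral and expectation, invoke $\flint\in\funcspace$ pointwise in $t$, and apply Tonelli again; the absolutely monotone case is reduced to the totally monotone one by the reflection $q\mapsto -q$, exactly as the paper does. You go somewhat beyond the paper in two respects worth noting: you flag that the Hausdorff--Bernstein--Widder theorem is classically stated on a half-line while $\sL$ may be a bounded interval (a genuine subtlety the paper's proof glosses over), and you observe that membership in $\funcspace$ also requires $g\in\convfuncs$ and sketch why convexity should follow from the log-of-mixture-of-exponentials structure, a point the paper's proof leaves implicit.
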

\begin{proof}
Consider a fixed $p$, and let $f(q)\equiv e^{g(q,p)}$.
Any function that satisfies \cref{eq:absolutely-monotone-functions} is said to be totally monotone.
By Bernstein's theorem \citep{bernstein-29a}, there exists a non-negative Borel measure with cumulative distribution function $\varphi$ such that
\begin{equation}
    f(q) = \int_0^\infty e^{-tq}\varphi(t)\dv t. 
\end{equation}
This implies that
\begin{equation}
    \E_{\rvz\sim \datadistro^n}[ f(\trainlossh) ] = \E_{\rvz\sim \datadistro^n} \int_{0}^\infty e^{-t\trainlossh} \varphi(t) \dv t = \int_0^\infty \dv t\varphi(t) \E_{\rvz\sim \datadistro^n}[ e^{-t\trainlossh}  ] ,
\end{equation}
where we used Tonelli's theorem to swap the expectation and integral.
By the assumption that $\flint\in\funcspace$,
\begin{equation}
\int_0^\infty \dv t\varphi(t) \E_{\rvz\sim \datadistro^n}[ e^{-\trainlossh}  ] \leq 
\int_0^\infty \dv t\varphi(t) \E_{\rr'_h\sim P_p^n}[ e^{-t\bar\rr'_h}  ] .
\end{equation}
By swapping the integral and expectation again, 
\begin{equation}
\int_0^\infty \dv t\varphi(t) \E_{\rr'_h\sim P_p^n}[ e^{-t\bar\rr'_h}  ] =  \E_{\rr'_h\sim P_p^n} \int_0^\infty \dv t\varphi(t) e^{-t\bar\rr'_h}   = \E_{\rr'_h\sim P_p^n}[ f(\bar\rr'_h) ].
\end{equation}
This establishes the result under the first condition.
For the second, notice that the function $f^-(q) \equiv g(-q, p)$ is totally monotone.
Hence, we can apply the same arguments as for the first condition.
\end{proof}

Next, we establish that, under certain conditions, the bound in \cref{thm:cgf-generic-average-disintegrated} always improves upon \cref{thm:cgf-generic-average}.

\begin{restatable}{proposition}{samplewisebetter}\label{propo:samplewise-better}
Consider the setting of \cref{thm:cgf-generic-average-disintegrated}.
Then,
    \begin{equation}\label{eq:samplewise-better-propo}
        \frac1n\! \sum_{i=1}^n  \Bavg^{\Delta}_{1}\!\left(\trainlossQavgi , \mutualinfo(\rh;\rz_i) , 1 \right)  \leq  \Bavg^{\Delta}_{n}\!\left(\trainlossQavg , \mutualinfo(\rh;\rvz) , 1 \right) 
    \end{equation}
\end{restatable}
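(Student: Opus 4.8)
The plan is to prove \cref{eq:samplewise-better-propo} by exhibiting, for each realization of the training data and hypothesis, a pointwise comparison between the averaged single-sample comparators and the full-sample comparator, and then to argue that this comparison is preserved after numerical inversion (\ie, when passing to the $\Bavg$ functions). First I would recall that, by definition, $\Bavg^{\Delta}_{n}(\alpha,\beta,1)$ is the largest $\rho\in\sL$ with $\Delta(\alpha,\rho)\le (\beta+\ln 1)/n = \beta/n$. Since $\Delta$ is convex (hence, for the comparators of interest, increasing in its second argument on the relevant range), the map $\beta\mapsto \Bavg^{\Delta}_n(\alpha,\beta,1)$ is nondecreasing, and $\alpha\mapsto \Bavg^{\Delta}_n(\alpha,\beta,1)$ is nondecreasing as well. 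The key structural fact I would use is a \emph{superadditivity/concavity} property of $\Bavg$: writing $\rho_i = \Bavg^{\Delta}_{1}(\alpha_i,\beta_i,1)$, so that $\Delta(\alpha_i,\rho_i)\le \beta_i$, convexity of $\Delta$ and Jensen's inequality give
\begin{equation}
    \Delta\Big(\tfrac1n\textstyle\sum_i \alpha_i,\ \tfrac1n\textstyle\sum_i \rho_i\Big) \le \tfrac1n\textstyle\sum_i \Delta(\alpha_i,\rho_i) \le \tfrac1n\textstyle\sum_i \beta_i,
\end{equation}
so $\tfrac1n\sum_i\rho_i$ is a feasible point for the optimization defining $\Bavg^{\Delta}_{n}\big(\tfrac1n\sum_i\alpha_i, \tfrac1n\sum_i\beta_i, 1\big)$, whence $\tfrac1n\sum_i \rho_i \le \Bavg^{\Delta}_{n}\big(\tfrac1n\sum_i\alpha_i,\tfrac1n\sum_i\beta_i,1\big)$.

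Next I would instantiate this with $\alpha_i = \trainlossQavgi = \htrainlossarg{\rz_i}{\posteriori}$ and $\beta_i = \mutualinfo(\rh;\rz_i)$. Two facts remain to be checked. First, $\tfrac1n\sum_i \trainlossQavgi = \trainlossQavg$: this is exactly the linearity-of-expectation and marginalization computation already carried out in the proof of \cref{thm:cgf-generic-average-disintegrated}, namely $\trainlossQavg = \E_{\rh,\rvz}[\trainlossrh] = \E_{\rh,\rvz}[\tfrac1n\sum_i \ell(\rh,\rz_i)] = \tfrac1n\sum_i \E_{\rh,\rz_i\sim\posteriori\datadistro}[\ell(\rh,\rz_i)] = \tfrac1n\sum_i \trainlossQavgi$. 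Second, $\tfrac1n\sum_i \mutualinfo(\rh;\rz_i) \le \mutualinfo(\rh;\rvz)$: this is the standard subadditivity of mutual information for independent coordinates, which follows from the chain rule $\mutualinfo(\rh;\rvz) = \sum_i \mutualinfo(\rh;\rz_i\mid \rz_{1:i-1})$ together with $\mutualinfo(\rh;\rz_i\mid\rz_{1:i-1}) \ge \mutualinfo(\rh;\rz_i)$, which holds because $\rz_i$ is independent of $\rz_{1:i-1}$ (cf. the facts recorded in \cref{app:useful-facts}, where it is noted that $\mutualinfo(\rx;\ry)\le\mutualinfo(\rx;\ry\mid\rz)$ when $\rz$ is independent of $\rx$ or $\ry$). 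Combining the displayed Jensen inequality with these two identities, and using that $\Bavg^{\Delta}_{n}(\alpha,\beta,1)$ is nondecreasing in $\beta$, gives
\begin{equation}
    \tfrac1n\textstyle\sum_i \Bavg^{\Delta}_{1}\big(\trainlossQavgi,\mutualinfo(\rh;\rz_i),1\big) \le \Bavg^{\Delta}_{n}\big(\trainlossQavg, \tfrac1n\textstyle\sum_i\mutualinfo(\rh;\rz_i), 1\big) \le \Bavg^{\Delta}_{n}\big(\trainlossQavg,\mutualinfo(\rh;\rvz),1\big),
\end{equation}
which is the claim.

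The main obstacle I anticipate is handling the edge cases in the definition of $\Bavg$ so that the Jensen argument is rigorous: $\Bavg^{\Delta}_{1}(\alpha_i,\beta_i,1)$ is a \emph{supremum}, which may not be attained, and $\Delta$ may take the value $+\infty$ or the feasible set may be empty (in which case one should interpret the supremum over the empty set as $\inf\sL$, or the quantity as vacuously bounded). I would address this by working with values $\rho_i' < \rho_i$ arbitrarily close to the supremum, for which $\Delta(\alpha_i,\rho_i')\le\beta_i$ holds by definition of the supremum and lower semicontinuity/monotonicity of $\Delta$ in its second argument, running the Jensen estimate with the $\rho_i'$, and then letting $\rho_i'\uparrow\rho_i$; monotonicity of $\Bavg^{\Delta}_n$ in its first two arguments and the fact that $\sL$ is an interval ensure the limit passes through cleanly. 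A secondary point worth a sentence is that the comparators $\Delta$ under consideration are assumed to be nondecreasing in their second argument on $\sL$ (true for $\kl$, the squared difference, and generally for Cramér functions $\Psifunca{p}^*(q)$ viewed as functions of $p$ on the range where $p\ge q$), so that "the largest $\rho$ satisfying the inequality" is the relevant quantity and monotonicity in $\beta$ holds; if one wants the statement for fully general convex $\Delta$ this monotonicity should be stated as a hypothesis or derived from the structure of the bound.
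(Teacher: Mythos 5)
Your argument follows essentially the same route as the paper's: pick a feasible point $\rho_i$ for each single-sample problem, apply convexity/Jensen to show the average $\bar\rho=\tfrac1n\sum_i\rho_i$ is feasible for the $n$-sample problem, and then pass from $\sum_i\mutualinfo(\rh;\rz_i)$ to $\mutualinfo(\rh;\rvz)$ via the chain rule plus the fact that conditioning on independent coordinates can only increase mutual information. The decomposition $\trainlossQavg=\tfrac1n\sum_i\trainlossQavgi$ and the handling of the non-attained supremum by taking $\rho_i'\uparrow\rho_i$ are both fine, and the latter is a legitimate point the paper glosses over. The worry about monotonicity of $\Delta$ in its second argument is, however, a non-issue: $\Bavg^{\Delta}_n$ is defined as a supremum over the feasible set, so feasibility of $\bar\rho$ gives $\bar\rho\le\Bavg^{\Delta}_n$ directly, and monotonicity of $\Bavg^{\Delta}_n$ in $\beta$ holds because enlarging $\beta$ enlarges the feasible set — neither step requires $\Delta$ to be increasing in $\rho$.

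There is one concrete error in the bookkeeping. From Jensen you obtain $\Delta(\bar\alpha,\bar\rho)\le\tfrac1n\sum_i\beta_i$, and since $\Bavg^{\Delta}_n(\alpha,\beta,1)=\sup\{\rho:\Delta(\alpha,\rho)\le\beta/n\}$, this shows $\bar\rho$ is feasible for $\Bavg^{\Delta}_n\big(\bar\alpha,\,\sum_i\beta_i,\,1\big)$ — not for $\Bavg^{\Delta}_n\big(\bar\alpha,\,\tfrac1n\sum_i\beta_i,\,1\big)$, which would require the strictly stronger constraint $\Delta(\bar\alpha,\bar\rho)\le\tfrac1{n^2}\sum_i\beta_i$. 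The same misplaced factor of $n$ appears in the middle term of your final display. The conclusion survives because you still invoke $\sum_i\mutualinfo(\rh;\rz_i)\le\mutualinfo(\rh;\rvz)$ for the last step, but as written the first inequality in your chain does not follow from what you proved; replacing the middle term by $\Bavg^{\Delta}_n\big(\trainlossQavg,\sum_i\mutualinfo(\rh;\rz_i),1\big)$ fixes it and makes the chain match the paper's argument exactly.
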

\begin{proof}
To establish the result, we need to show that any value of $\poplossQavg$ that satisfies the bound of the left-hand side of \cref{eq:samplewise-better-propo} also satisfies the bound of the right-hand side.
To this end, assume that for $i\in\{1,\dots,n\}$, we have $p_i\in\sL$ such that
\begin{equation}
    \Delta(\trainlossQavgi, p_i) \leq \mutualinfo(\rh;\rz_i) .
\end{equation}
By averaging over $i$, this implies that
\begin{equation}
    \frac1n \sum_{i=1}^n \Delta(\trainlossQavgi, p_i) \leq \sum_{i=1}^n \frac{\mutualinfo(\rh;\rz_i)}{n} .
\end{equation}
By the convexity of $\Delta$, the left-hand side can be lower-bounded as, with $\bar p = \sum_{i=1}^n p_i/n$,
\begin{equation}
   \Delta(\trainlossQavg, \bar p)  \leq \frac1n \sum_{i=1}^n \Delta(\trainlossQavgi, p_i) .
\end{equation}
Let $\rvz_{<i}=(\rz_1,\dots,\rz_{i-1})$, where $\rvz_{<1}=\emptyset$.
By the chain rule of mutual information and the fact that conditioning on independent random variables increases mutual information,
\begin{equation}
    \sum_{i=1}^n \mutualinfo(\rh;\rz_i) \leq \sum_{i=1}^n 
 \mutualinfo(\rh;\rz_i \vert \rvz_{<i}) = \mutualinfo(\rh;\rvz).
\end{equation}
Thus, it follows that
\begin{equation}
   \Delta(\trainlossQavg, \bar p)  \leq \frac{\mutualinfo(\rh;\rvz)}{n},
\end{equation}
establishing the claim.
\end{proof}

Note that this result is similar to \citet[Prop.~1]{harutyunyan-21a}.

Finally, we provide some additional explicit bounds for various instances of $\bounddistrospace$.
While we only state average bounds explicitly, analogous results hold for the PAC-Bayesian case.

\begin{restatable}{corollary}{subsomething}\label{cor:sub-something-bounds}

Assume that the loss is sub-$\bounddistrospaceinvgauss$, where $\bounddistrospaceinvgauss$ denotes the set of inverse Gaussian distributions with fixed $\lambda$, \ie,
\begin{equation}
    \bounddistrospaceinvgauss = \big\{ \invgauss(\mu,\lambda) : \mu \in \sR \big\} .
\end{equation}
Define
\begin{equation}
    \Deltacramarg{\bounddistrospaceinvgauss}(q,p) = \frac{\lambda(p-q)^2}{2pq^2}.
\end{equation}
Then, we have
\begin{equation}\label{eq:subinvgauss-average}
    \Deltacramarg{\bounddistrospaceinvgauss}(\trainlossQavg,\poplossQavg) \leq \frac{\KLtermavg}{n} .
\end{equation}
Next, assume that the loss is sub-$\bounddistrospacenegbin$, where $\bounddistrospacenegbin$ is the set of negative Binomial distributions with fixed $r$:
\begin{equation}
    \bounddistrospacenegbin = \bigg\{ \negbin\bigg(r,\frac{r}{r+\mu}\bigg) : \mu \in \sR^+ \bigg\} .
\end{equation}
Define
\begin{equation}\label{eq:cramer-negbin}
    \Deltacramarg{\bounddistrospacenegbin}(q,p) =  r\ln\!\left( \frac{p+r}{q+r} \right) + q\ln\!\left( \frac{q(p+r)}{p(q+r)} \right).
\end{equation}
Then, we have
\begin{equation}\label{eq:subnegbin-average}
    \Deltacramarg{\bounddistrospacenegbin}(\trainlossQavg,\poplossQavg) \leq \frac{\KLtermavg}{n} .
\end{equation}
\end{restatable}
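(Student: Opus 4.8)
The plan is to follow the template of the proofs of \cref{cor:poisson-bounds} and \cref{cor:subgamma-bounds}: the inverse Gaussian distributions with fixed $\lambda$ and the negative binomial distributions with fixed $r$ are both natural exponential families (NEFs), parametrized by their mean, with a member of mean $r$ for every $r\in\sL\subseteq\sR^+$. Hence \cref{propo:kullback-equality} identifies the Cram\'er function $\Deltacram(q,p)$ with the KL divergence $\KL(\distra{q}\Vert\distra{p})$ between the two family members of means $q$ and $p$, and the average bounds \cref{eq:subinvgauss-average} and \cref{eq:subnegbin-average} then drop out of \cref{thm:best-comparator-cgf-average}. For each family I would carry out the same three steps: (i) write the density or mass function in canonical form $h(x)e^{\theta x - g(\theta)}$ and verify that $g'(\theta)$ equals the mean, so that the displayed parameter really is the expectation parameter; (ii) compute $\KL(\distra{q}\Vert\distra{p})$ in closed form and check that it equals the stated comparator; (iii) invoke \cref{thm:best-comparator-cgf-average}.

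For the inverse Gaussian family, step (i) comes from expanding $-\lambda(x-\mu)^2/(2\mu^2 x) = -\tfrac{\lambda}{2\mu^2}x + \tfrac{\lambda}{\mu} - \tfrac{\lambda}{2x}$, which exhibits the natural parameter $\theta = -\lambda/(2\mu^2) < 0$, the carrier $h(x) = \sqrt{\lambda/(2\pi x^3)}\,e^{-\lambda/(2x)}$, and the log-normalizer $g(\theta) = -\sqrt{-2\lambda\theta} = -\lambda/\mu$; differentiating gives $g'(\theta) = \mu$, as required. (The CGF is finite only for $t \le \lambda/(2\mu^2)$, so the supremum in \cref{eq:deltacram-def} is effectively over $t \le \lambda/(2p^2)$, which does not affect the argument since the CGF bound is vacuous for larger $t$.) For step (ii), when one differences the log-densities of two inverse Gaussians with the same $\lambda$, the awkward $1/x$ term cancels, so $\KL(\distra{q}\Vert\distra{p})$ becomes an affine function of $\E_{\distra{q}}[X] = q$ and evaluates to the expression defining $\Deltacramarg{\bounddistrospaceinvgauss}$; equivalently one may use the Bregman form $\KL(\distra{q}\Vert\distra{p}) = g(\theta_p) - g(\theta_q) - g'(\theta_q)(\theta_p - \theta_q)$ that holds for any NEF.

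For the negative binomial family with fixed $r$, the mass function $\binom{k+r-1}{k}p^r(1-p)^k$ is canonical with $\theta = \ln(1-p)$, carrier $h(k) = \binom{k+r-1}{k}$, and $g(\theta) = -r\ln(1-e^\theta)$; then $g'(\theta) = re^\theta/(1-e^\theta) = r(1-p)/p$, which equals $\mu$ precisely when $p = r/(r+\mu)$, confirming the mean parametrization. Substituting $e^{\theta} = \mu/(r+\mu)$ and $1-e^{\theta} = r/(r+\mu)$ into the Bregman form and simplifying yields the comparator in \cref{eq:cramer-negbin}.

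With $\Deltacramarg{\bounddistrospaceinvgauss}$ and $\Deltacramarg{\bounddistrospacenegbin}$ so identified, step (iii) is immediate: under the assumed sub-$\bounddistrospace$ hypothesis, the first conclusion of \cref{thm:best-comparator-cgf-average}, namely \cref{eq:avg-optimal-upper-bound}, reads $\poplossQavg \le \Bavg^{\Deltacram}_{n}(\trainlossQavg,\KLtermavg,1)$, which by the definition of $\Bavg$ in \cref{eq:Bhat-def} with $\iota\equiv 1$ is precisely $\Deltacram(\trainlossQavg,\poplossQavg) \le \KLtermavg/n$; this is \cref{eq:subinvgauss-average} and \cref{eq:subnegbin-average}. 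I do not anticipate a genuine obstacle here: once the NEF structure is in place the rest is bookkeeping, and the only mildly delicate point is the inverse Gaussian KL computation, where the natural parameter is negative and the carrier carries the $e^{-\lambda/(2x)}$ factor, so one must track carefully which cross-terms survive the differencing of the log-densities.
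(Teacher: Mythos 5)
Your proposal is correct and follows the same route as the paper's proof: observe that $\bounddistrospaceinvgauss$ and $\bounddistrospacenegbin$ are NEFs, identify the Cram\'er function with the within-family KL divergence via \cref{propo:kullback-equality}, and plug into \cref{thm:best-comparator-cgf-average}. The only cosmetic difference is that the paper cites the inverse-Gaussian KL formula directly and, for the negative binomial, optimizes the Legendre transform $\sup_t\{qt - \Psifunca{\mu}(t)\}$ by hand, whereas you route both cases through the Bregman form $g(\theta_p)-g(\theta_q)-(\theta_p-\theta_q)q$; both routes are equivalent and the paper itself notes either can be used.

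One wrinkle you should be aware of when you actually carry out the ``bookkeeping'': for the inverse Gaussian, differencing the log-densities and taking $\E_{\rx\sim\invgauss(q,\lambda)}[\rx]=q$ gives
\begin{equation}
\KL\big(\invgauss(q,\lambda)\Vert\invgauss(p,\lambda)\big)=\frac{\lambda q}{2p^2}+\frac{\lambda}{2q}-\frac{\lambda}{p}=\frac{\lambda(p-q)^2}{2p^2q},
\end{equation}
whereas the corollary's displayed comparator reads $\lambda(p-q)^2/(2pq^2)$, which is $\KL(\invgauss(p,\lambda)\Vert\invgauss(q,\lambda))$. Since \cref{propo:kullback-equality} gives $\Deltacram(q,p)=\KL(\distra{q}\Vert\distra{p})$ with the training loss in the first slot, the denominator in the statement appears to have $p$ and $q$ transposed. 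This does not affect the soundness of your approach---indeed your ``check that it equals the stated comparator'' step would catch it---but the correct Cram\'er comparator for this family is $\lambda(p-q)^2/(2p^2q)$.
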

\begin{proof}
    Both $\bounddistrospaceinvgauss$ and $\bounddistrospacenegbin$ form NEFs.
    Hence, the results can be established either by computing a Cram\'er function or by computing a KL divergence and applying \cref{propo:kullback-equality}.
    For the inverse Gaussian distribution, the KL divergence is given by~\citep{zhang-07a}
    \begin{equation}
        \KL( \invgauss(q,\lambda)\Vert \invgauss(p,\lambda ) = \frac{\lambda(p-q)^2}{2pq^2} .
    \end{equation}
    Since the inverse Gaussian distributions form a NEF, the result follows by \cref{propo:kullback-equality} and \cref{eq:avg-optimal-lower-bound}.
    Next, for the random variable $\rx\sim\negbin(r,p)$ with $p=r/(r+\mu)$, the Cram\'er function is
    \begin{equation}
        \Psi^*_\mu(q) =  \sup_t \bigg\{ qt - r\log\!\left( \frac{ p }{ 1-(1-p)e^t } \right) \bigg\} .
    \end{equation}
    The optimum can be found by standard techniques, and equals the right-hand side of \cref{eq:cramer-negbin}. Hence, \cref{eq:subnegbin-average} follows by \cref{eq:avg-optimal-lower-bound}.
\end{proof}

\subsection{Additional Numerical Results}\label{app:additional-numerical}

In this section, we present additional numerical results.
The code for reproducing all figures is available as a Jupyter notebook on \codelink, and executes in $5$ minutes on Google Colab CPU.

In \cref{fig:subbernoulli-no-min}, we plot the difference between the binary KL bound and the sub-Gaussian bound, but without the minimum in \cref{eq:bounded-comparison-plot}.
That is, we evaluate
\begin{equation}\label{eq:bounded-discrepancy-no-min}
\Bavg^\kl_n(\alpha,\beta,1) - \big( \alpha + \sqrt{\beta/2n} \big) .
\end{equation}
With this, the biggest discrepancy arises when both the training loss and normalized KL divergence are big, as for the sub-Poissonian case.

\begin{figure}
\centering
\begin{subfigure}{0.33\textwidth}
\centering
\includegraphics[width=\textwidth]{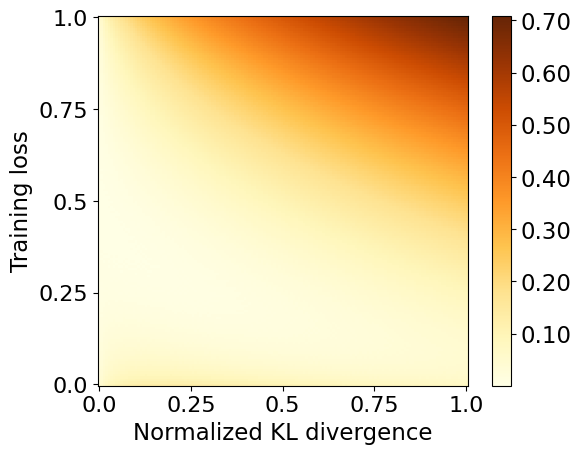}
    \caption{Sub-Bernoulli discrepancy, \cref{eq:bounded-discrepancy-no-min} }%
    \label{fig:subbernoulli-no-min}
\end{subfigure}
\begin{subfigure}{0.33\textwidth}
\centering
\includegraphics[width=\textwidth]{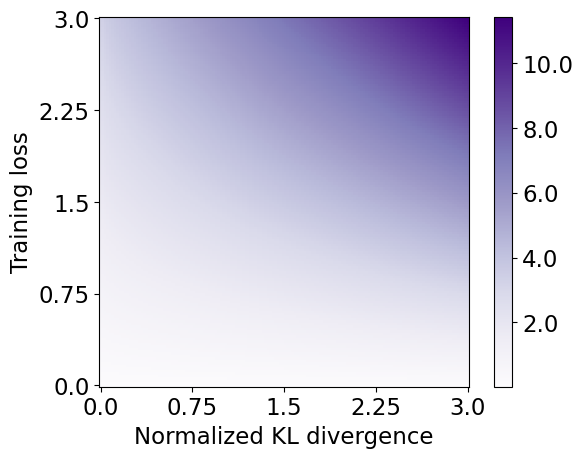}
    \caption{Sub-gamma, \cref{eq:subgamma-average} }%
    \label{fig:subgamma}
\end{subfigure}%
\begin{subfigure}{0.33\textwidth}
\centering
\includegraphics[width=\textwidth]{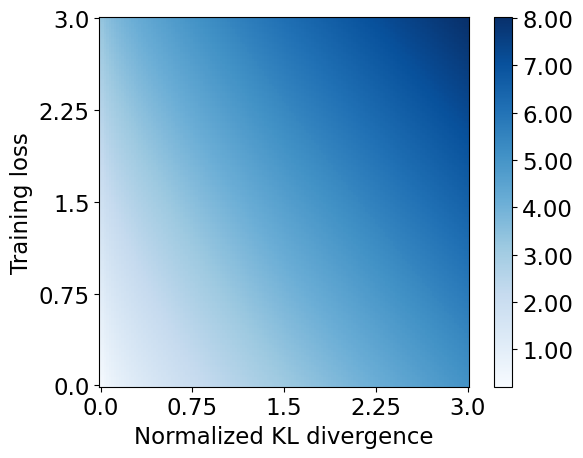}
    \caption{Sub-Laplacian, \cref{eq:sublaplace-average} }%
    \label{fig:sublaplace}
\end{subfigure}

\begin{subfigure}{0.33\textwidth}
\centering
\includegraphics[width=\textwidth]{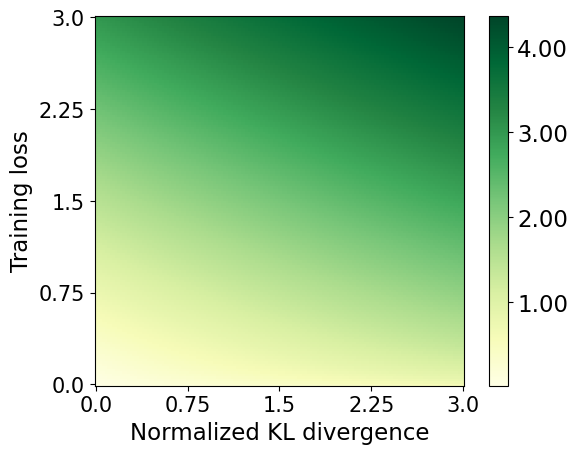}
    \caption{Sub-Poissonian, \cref{eq:subpoisson-average} }%
    \label{fig:subpoisson-our}
\end{subfigure}%
\begin{subfigure}{0.33\textwidth}
\centering
\includegraphics[width=\textwidth]{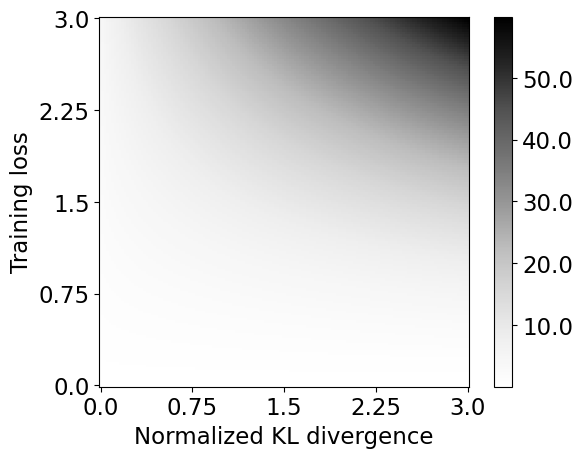}
    \caption{Sub-inverse Gaussian, \cref{eq:subinvgauss-average} }%
    \label{fig:subinvgauss-our}
\end{subfigure}
\begin{subfigure}{0.33\textwidth}
\centering
\includegraphics[width=\textwidth]{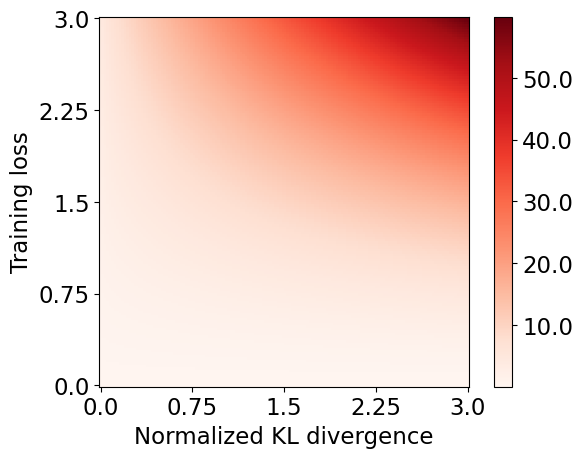}
    \caption{Sub-negative binomial, \cref{eq:subnegbin-average} }%
    \label{fig:subnegbin-our}
\end{subfigure}%
\caption{
In \cref{fig:subbernoulli-no-min}, we plot~\cref{eq:bounded-discrepancy-no-min}.
In \crefrange{fig:subgamma}{fig:subnegbin-our}, we illustrate the numerical values of the Cram\'er bounds.
}
\label{fig:extra_figs}
\end{figure}

Next, in \crefrange{fig:subgamma}{fig:subnegbin-our}, we evaluate our average Cram\'er bounds for sub-gamma, sub-Laplacian, sub-Poissonian, sub-inverse Gaussian, and sub-negative binomial losses.
Specifically, we present the bound based on \cref{eq:subgamma-average} with $k=5$ in \cref{fig:subgamma}; the bound based on \cref{eq:sublaplace-average} with $b=1$ in \cref{fig:sublaplace}; the bound based on \cref{eq:subpoisson-average} in \cref{fig:subpoisson-our}; the bound based on \cref{eq:subinvgauss-average} with $\lambda=1$ in \cref{fig:subinvgauss-our}; and the bound based on \cref{eq:subnegbin-average} with $r=3$ in \cref{fig:subnegbin-our}.
\begin{figure}
\centering
\begin{subfigure}{0.33\textwidth}
\centering
\includegraphics[width=\textwidth]{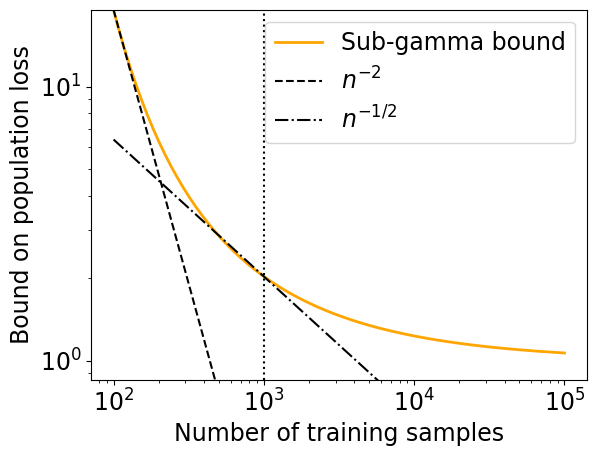}
    \caption{ Sub-gamma, $\alpha=1$ \cref{eq:subpoisson-average} }%
    \label{fig:n-gamma-1}
\end{subfigure}%
\begin{subfigure}{0.33\textwidth}
\centering
\includegraphics[width=\textwidth]{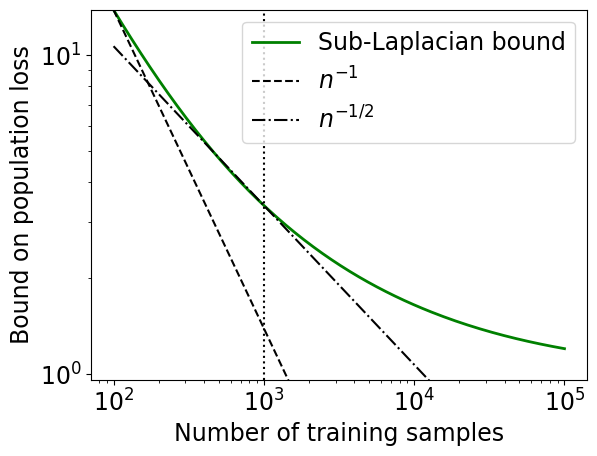}
    \caption{Sub-Laplacian, $\alpha=1$  \cref{eq:subinvgauss-average} }%
    \label{fig:n-laplace-1}
\end{subfigure}
\begin{subfigure}{0.33\textwidth}
\centering
\includegraphics[width=\textwidth]{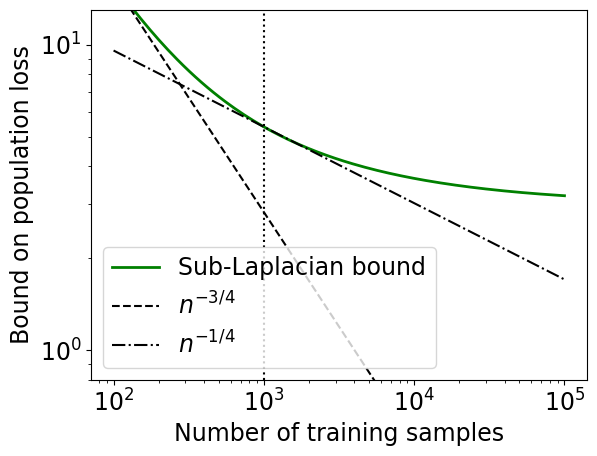}
    \caption{ Sub-Laplacian, $\alpha=3$ \cref{eq:subnegbin-average} }%
    \label{fig:n-laplace-3}
\end{subfigure}%
\caption{
The $n$-dependence of the the Cram\'er bounds for sub-gamma and sub-Laplacian losses.
}
\label{fig:n_figs}
\end{figure}

Finally, in \cref{fig:n_figs}, we numerically study the $n$-dependence of the average bounds in terms of the Cram\'er function for sub-gamma losses, \ie, \cref{eq:subgamma-average}, and for sub-Laplacian losses, \ie, \cref{eq:sublaplace-average}.
Note that, for the purposes of this evaluation, we assume that the training loss $\alpha$ and KL divergence $\beta$ are \emph{fixed}, and only the number of samples $n$ varies.
This is not a realistic assumption in many settings, as both the training loss and KL divergence will typically depend on the sample size---in particular, the KL divergence tends to increase with $n$.
However, this still sheds some light on the behavior of the bounds, and if one knows the dependence of the KL divergence on $n$, this can be incorporated by suitably rescaling.

In \cref{fig:n-gamma-1}, we evaluate \cref{eq:subgamma-average} with $k=5$, training loss $\alpha=1$, and KL divergence $\beta=10^3$.
For the sub-gamma bound, we find a behavior that is consistent across various values of the training loss: initially, the bound decays as $1/n^2$, and when it reaches $n\approx \beta$, it decays as $1/\sqrt n$, after which it further slows.
This demonstrates that, when the number of samples is low ($n\ll \beta$), the bound rapidly improves as more samples are used, while for larger sample sizes ($n\gg \beta$), the improvement is less pronounced.
As a specific example: as $n$ grows from $10^2$ to~$10^3$, the bound decreases by $89\%$, whereas when $n$ grows from $10^4$ to $10^5$, the bound decreases by~$13\%$.

In \cref{fig:n-laplace-1}, we evaluate \cref{eq:sublaplace-average} with $b=1$, training loss $\alpha=1$, and KL divergence $\beta=10^3$, whereas in \cref{fig:n-laplace-3}, we set the training loss to $\alpha=3$.
For the sub-Laplacian bound, the picture is less clear.
For $\alpha=1$, the bound initially decays as $1/n$, while approximating a $1/\sqrt n$ asymptote for $n\approx \beta$ (as for the sub-gamma loss).
However, for $\alpha=3$, the initial decay is closer to $n^{-3/4}$, and for $n\approx \beta$, it is approximately $n^{-1/4}$.

\end{document}